\newtheorem{theorem}{Theorem}[section]
\newtheorem{proposition}[theorem]{Proposition}
\newtheorem{corollary}[theorem]{Corollary}
\newtheorem{lemma}[theorem]{Lemma}
\newtheorem{assumption}{Assumption}
\theoremstyle{remark}
\newtheorem{remark}{Remark}[section]
\newtheorem{conjecture*}{Conjecture}
\theoremstyle{plain}
\newcommand{\calL}{\mathcal{L}}
\newcommand{\calX}{\mathcal{X}}
\newcommand{\calZ}{\mathcal{Z}}
\newcommand{\calF}{\mathcal{F}}
\newcommand{\calN}{\mathcal{N}}
\newcommand{\E}{\mathbb{E}}
\newcommand{\R}{\mathbb{R}}
\begin{document}

\title{
Neural Stein critics with staged $L^2$-regularization}
\author[1]{Matthew Repasky}
\author[2]{Xiuyuan Cheng\thanks{Email: xiuyuan.cheng@duke.edu}}
\author[1]{Yao Xie}
\affil[1]{
\small
H. Milton Stewart School of Industrial and Systems Engineering, Georgia Institute of Technology}
\affil[2]{
\small
Department of Mathematics, Duke University}
\date{\vspace{-20pt}}

\maketitle

\begin{abstract}
Learning to differentiate model distributions from observed data is a fundamental problem in statistics and machine learning, and high-dimensional data remains a challenging setting for such problems. Metrics that quantify the disparity in probability distributions, such as the Stein discrepancy, play an important role in high-dimensional statistical testing. In this paper, we investigate the role of $L^2$ regularization in training a neural network Stein critic so as to distinguish between data sampled from an unknown probability distribution and a nominal model distribution. Making a connection to the Neural Tangent Kernel (NTK) theory, we develop a novel staging procedure for the weight of regularization over training time, which leverages the advantages of highly-regularized training at early times. Theoretically, we prove the approximation of the training dynamic by the kernel optimization, namely the ``lazy training'', when the $L^2$ regularization weight is large, and training on $n$ samples converge at a rate of ${O}(n^{-1/2})$ up to a log factor. The result guarantees learning the optimal critic assuming sufficient alignment with the leading eigen-modes of the zero-time NTK. The benefit of the staged $L^2$ regularization is demonstrated on simulated high dimensional data and an application to evaluating generative models of image data.
\end{abstract}

{\smaller{\bf Keywords:} 
 Stein Discrepancy, Goodness-of-fit Test, Neural Tangent Kernel, Lazy Training, Generative Models}

\section{Introduction}

Understanding the discrepancy between probability distributions is a central problem in machine learning and statistics. In training generative models, learning to minimize such a discrepancy can be used to construct a probability density model given observed data, such as in the case of generative adversarial networks (GANs) trained to minimize $f$-divergences \cite{nowozin2016f}, Wasserstein GANs \cite{arjovsky2017wasserstein}, and score matching techniques \cite{hyvarinen2005estimation,song2019generative}. Generally, GANs and other generative models require discriminative critics to distinguish between data and a distribution \cite{goodfellow2014generative}; 
such critics have the ability to {tell the location where the model and data distributions differ as well as the magnitude of departure.}
Recent developments in the training of generative models have facilitated advancements in out-of-distribution detection \cite{grathwohl2019your}, in which such models learn to predict the higher likelihood for in-distribution samples. There exists a wide array of integral probability metrics that quantify distances on probability distributions \cite{sriperumbudur2009integral}, including the Stein discrepancy \cite{gorham2015measuring}.
In particular, the computation of the Stein discrepancy only requires knowledge of the score function (the gradient of the log density) of the model distribution. Thus, it avoids the need to integrate the normalizing constant in high dimensions. This makes the Stein discrepancy useful for evaluating some deep models like energy-based models.

Implicit in the minimization of the discrepancy between a model distribution and observed data is the concept of goodness-of-fit (GoF). In the GoF problem and the closely related two-sample test problem, the goal of the analysis is to approximate and estimate the discrepancy between two probability distributions. Integral probability metrics are widely used for such problems. For example, Reproducing Kernel Hilbert space (RKHS) Maximum Mean Discrepancy (MMD) \cite{gretton2012kernel}, a kernel-based approach, is used for two-sample testing among other testing tasks. Kernels parameterized by deep neural networks have been adopted recently in \cite{liu2020learning} to improve the testing power. For GoF tests, where the task is to detect the departure of an unknown distribution of observed data from a model distribution, methods using the Stein discrepancy metrics have been developed. The Stein discrepancy is also calculated using kernel methods \cite{liu2016kernelized,liu2016stein,chwialkowski2016kernel} and more recently using deep neural network-aided techniques \cite{grathwohl2020learning}. We provide more background information related to the Stein discrepancy and its role in GoF testing in Section \ref{sec:background}.

In machine learning, a wide array of modern generative model architectures exist. Energy-based models (EBMs) are a particularly useful subset of generative models. Such models can be described by an \textit{energy function} which describes a probability density up to a normalizing constant \cite{teh2003energy}. While such models provide flexibility in representing a probability density, the normalizing constant (which requires an integration over the energy function to compute) is required to compute the likelihood of data given the model. The Stein discrepancy provides a metric for evaluating EBMs without knowledge of this normalization constant \cite{grathwohl2020learning}. Another popular class of generative models is flow-based models \cite{dinh2014nice, dinh2016density}. Flow-based modeling approaches, such as RealNVP and Glow \cite{kingma2018glow}, provide reversible and efficient transformations representing complex distributions, yielding simple log-likelihood computation. In Section \ref{sec:eval_generative_models}, we outline our approach for evaluating generative EBM models using neural Stein critics.

In this paper, we introduce a method for learning the Stein discrepancy via a novel staged regularization strategy when training neural network Stein critics. 
We consider the $L^2$-regularization of the neural Stein critic, which has been adopted in past studies of neural network Stein discrepancy \cite{hu2018stein,grathwohl2020learning}.
Another motivation to use $L^2$-regularization is due to the fact that the (population) objective of the $L^2$-regularized Stein discrepancy is equivalent to the mean-squared error between the trained network critic and the optimal one (up to an additive constant); see \eqref{eq:Llambda-is-MSE-const}.
It has also been shown previously that the Stein discrepancy evaluated at the optimal critic under $L^2$ regularization reveals the Fisher divergence \cite{hu2018stein}. 
We analyze the role of the regularization strength parameter in $L^2$ neural Stein methods, emphasizing its impact on neural network optimization, which was overlooked in previous studies.
On the practical side, our study shows the benefit of softening the impact of this regularization over the course of training, yielding critics which fit more quickly at early times, followed by stable convergence with weaker regularization. 
An example is shown in Figure \ref{fig:1d_critic}:
(\textbf{A})-(\textbf{C}) illustrate that the target critic changes in magnitude throughout training as the weight of regularization is decreased, and (\textbf{D})-(\textbf{F}) shows the rough approximation to the optimum at early times followed by more nuanced changes in the later stages of training. 

The proposed staging of $L^2$-regularization is closely connected to
the so-called ``lazy training'' phenomenon of neural networks \cite{chizat2019lazy}, which approximately solves a kernel regression problem corresponding to the Neural Tangent Kernel (NTK) \cite{jacot2018neural} at the early training times.
Theoretically, we prove the kernel learning dynamic of $L^2$ neural Stein methods when the regularization weight is large - the main observation is that the role played by the regularization weight parameter is equivalent to a scaling of the neural network function, which itself leads to kernel learning in the case of strong regularization.
This theoretical result motivates the usage of a large penalty weight in the beginning phase of training before decreasing it to a smaller value later.

For GoF problems, the trained neural Stein critic provides model comparison capabilities that assess the accuracy of a model's approximation of the true distribution, 
allowing for identifying the locations of distribution departure in observed data.
This naturally leads to applications for GoF testing and evaluation of generative models.
In summary, the contributions of the current work are as follows: 
\begin{enumerate}

\item We introduce a new method for training neural Stein critics, which incorporates a staging of the weight of the $L^2$ regularization over the process of mini-batch training. 

\item We prove the NTK kernel learning (lazy-training dynamic) of neural Stein critic training with large $L^2$ regularization weight, {providing a theoretical justification of the benefit of using strong $L^2$ regularization in the early training phase. The analysis reveals a convergence at the rate of $O(n^{-1/2})$ up to a log factor, $n$ being the sample size, when training with finite-sample empirical loss.}

\item The advantage of the proposed neural Stein method is demonstrated in experiments, exhibiting improvements over fixed-regularization neural Stein critics and the kernelized Stein discrepancy on simulated data. 
The neural Stein critic is applied to evaluating generative models of image data.
\end{enumerate}

\subsection{Related works}

The Stein discrepancy has been widely used in various problems in machine learning, including measuring the quality of sampling algorithms \cite{gorham2015measuring},
evaluating generative models by diffusion kernel Stein discrepancy 
which unifies score matching with minimum Stein discrepancy estimators \cite{barp2019minimum},
GoF testing, among others.
For GoF testing, a kernel Stein discrepancy (KSD) approach has allowed for closed-form computation of the discrepancy metric \cite{liu2016kernelized,chwialkowski2016kernel}. 
Similar metrics have been used in the GoF setting, such as the finite set Stein discrepancy (FSSD), which behaves as the KSD but can be computed in linear time \cite{jitkrittum2017linear}. 
Our work leverages neural networks,
which potentially have large expressiveness,
to parameterize the critic function space
and studies the influence of $L^2$ regularization from a training dynamic point of view.

Recent studies have developed alternatives to kernelized approaches to computing the Stein discrepancy. Using neural networks to learn Stein critic functions, \cite{hu2018stein} applied neural Stein in the training of high-quality samplers from un-normalized densities.
The neural network Stein critic has also been applied to the GoF hypothesis test settings to evaluate EBMs \cite{grathwohl2020learning}. 
These methods impose a boundedness constraint on the $L^2$ norm of the functions represented by the neural network using a regularization term added to the training objective.
The optimal critic associated with this method yields a Stein discrepancy equivalent to the Fisher divergence and provides an additional benefit in that the critic can be used as a diagnostic tool to identify regions of poor fit \cite{hu2018stein}. 
The staging scheme we introduce in this work is motivated by the observed connection between large $L^2$ penalty and lazy training. In practice, it yields an improvement upon past techniques used to learn the $L^2$-penalized critic.

Many methods have been developed to train and evaluate generative models without knowledge of the likelihood of a model. Early works used the method of Score Matching, which minimizes the difference in score function between the data and model distributions using a proxy objective \cite{hyvarinen2005estimation}. Methods building on this approach are known as \textit{score-based methods}. Score-based methods include approaches that can estimate the normalizing constant for computation of likelihoods, as in the case of Noise-Contrastive Estimation \cite{gutmann2010noise}, and can conduct score matching using deep networks with robust samplers, as in the case of Noise Conditional Score Networks with Langevin sampling \cite{song2019generative}. Our approach potentially provides a more efficient training scheme to obtain discriminative Stein discrepancy critics, leading to a metric representing the discrepancy between distributions. We also experimentally observe that the trained Stein critic function indicates the differential regions between distributions of high dimensional data.

The Stein divergence can be interpreted as a divergence to measure the discrepancy between two distributions. 
The estimation error analysis of neural network function class-based divergence measure has been studied in \cite{arora2017generalization,zhang2018discrimination,sreekumar2022neural}, where the global optimizer within the neural network function class on the empirical loss is assumed without addressing the optimization guarantee.
To incorporate the analysis of neural network training dynamics,
the current paper utilizes theoretical understandings developed by the NTK theory \cite{jacot2018neural,du2018gradient,arora2019exact}. 
Particularly, we utilize ideas surrounding the ``lazy training" phenomenon of neural networks, suggesting the loss through training decays rapidly with a relatively small change to the parameters of the network model, which results in a kernel regression optimization dynamic \cite{chizat2019lazy}. 
For neural network hypothesis testing, the NTK learning was used in computing neural network MMD for two-sample testing \cite{cheng2021neural}. 
As for when neural network training falls under the lazy training regime, \cite{chizat2019lazy} showed that it depends on a choice of scaling of the network mapping.
In this paper, we adopt a staging of the weight of $L^2$ regularization, which can utilize the NTK kernel learning regime at early periods of training, 
and also, in practice, go beyond kernel learning at later phases.

\subsection{{Notation}}

Notations in this paper are mostly standard, with a few clarifications as follows: 
We use $\partial_{\rm x}$ with subscript x to denote partial derivative, e.g., $\partial_\theta f(x,\theta)$ means $\frac{\partial }{\partial \theta} f(x,\theta)$. 
We use $\E_{x\sim p}$ to denote integral over the measure $p(x) dx$, that is,  $\E_{x \sim p} f(x) = \int_\calX f(x) p(x) dx$. 
We may omit the variable as a short-hand notation of integrals, that is,
we write $\int_\calX f $ for   $\int_\calX f(x) dx $.
The symbol $\cdot$ stands for vector-vector inner-product (also used in the divergence operator $\nabla \cdot$), 
and  $ \circ$ stands for matrix-vector multiplication.

\section{Background }\label{sec:background}

We begin by providing necessary preliminaries of the Stein Discrepancy, $L^2$ Stein critics, and GoF testing.

\subsection{Stein Discrepancy}

The recent works on Stein discrepancy in machine learning are grounded in the theory of Stein's operator, the study of which dates back to statistical literature of the 1970s~\cite{stein1972bound}. 
Let $\calX$ be a domain in $\R^d$, and we consider probability densities on $\calX$. Given a density $q$ on $\calX$,
for a sufficiently regular vector field ${\bf f}: \calX \to \R^d$,
the Stein operator $T_q$ \cite{stein1972bound,anastasiou2021stein} 
applied to ${\bf f}$ is defined as 
\begin{equation}
    T_q {\bf f} (x) := {\bf s}_q(x) \cdot {\bf f}(x) + \nabla \cdot {\bf f}(x),
    \label{eq:stein_operator}
\end{equation}
where ${\bf s}_q$ is the {\it score function} of $q$, defined as 
\begin{equation}\label{eq:def-score-function}
{\bf s}_q := {\nabla q}/{q} = \nabla \log q.
\end{equation}
Note that the divergence of ${\bf f}$, denoted by $\nabla \cdot {\bf f}(x)$, is the same as the trace of the Jacobian of ${\bf f}$ evaluated at $x$, and throughout this paper $\cdot$ stands for vector inner-product.

Given another probability density $p$ on $\calX$
and a bounded function class $\calF$ of sufficient regularity, 
 the Stein discrepancy~\cite{gorham2015measuring}  which measures the difference between $p$ and $q$ is defined as 
\begin{equation}
{\rm SD}_{\calF} (p,q) :=
\underset{{\bf f}\in\mathcal{F}}{ \sup}  \,  {\rm SD} [ {\bf f} ], 
\quad {\rm SD} [ {\bf f} ] :=   \mathbb{E}_{x\sim p}   T_q {\bf f}(x).
    \label{eq:sd_operator}
\end{equation}
In this paper, we call ${\bf f}$ the ``critic'',
and we call  ${\rm SD} [ {\bf f} ] $ the Stein discrepancy evaluated at the critic ${\bf f}$. 
We further consider the class of ${\bf f}$ that satisfy a mild boundary condition, namely $\calF_0(p):=\{ {\bf f},\, p{\bf f}|_{\partial X} = 0\}$. (When $X$ is unbounded, $|_{\partial X}$ is understood as the infinity boundary condition. When $X$ is bounded, sufficient regularity of $\partial X$ is assumed so that the divergence theorem can apply).
When $p = q$, one can verify that for any ${\bf f} \in \calF_0(p)$ and sufficiently regular, $ {\rm SD}[{\bf f}] = 0 $, which is known as the Stein's identity.
The other direction, namely for some $\calF \subset \calF_0(p)$, ${\rm SD}_{\calF} (p,q) = 0 \Rightarrow p=q$, is also established under certain conditions~\cite{stein2004use}.
This explains the name ``critic'' because the ${\bf f}$ which makes ${\rm SD} [ {\bf f} ] $ significantly large can be viewed as a test function (vector field) that indicates the difference between $p$ and $q$.

To proceed, we introduce the following assumption on the densities $p$ and $q$:

\begin{assumption}\label{assump:p-q}
The two densities $p$ and $q$ are supported and non-vanishing on $\calX$ (vanishing at the boundary of $\calX$),
differentiable on $\calX$,
and the score functions ${\bf s}_p$ and ${\bf s}_q$ are in $\calF_0(p)$.
\end{assumption}

When the function class $\mathcal{F}$ is set to be the unit ball in an RKHS, the definition \eqref{eq:sd_operator} yields the kernelized Stein discrepancy (KSD) \cite{liu2016kernelized, chwialkowski2016kernel}. A possible limitation of the KSD approach is the sampling complexity and computational scalability in high dimension. 
In this work, we consider critics of regularized $L^2$ norm to be parametrized by neural networks as previously studied in \cite{hu2018stein,grathwohl2020learning},
which have potential advantages in model expressiveness and computation.

\subsection{ $L^2$ Stein critic }

This paper focuses on 
when the critic ${\bf f}$ in Stein discrepancy \eqref{eq:sd_operator} is at least squared integrable on $(\calX, p(x) dx)$. 
Consider the critic function ${\bf f}: \calX \to \R^d$, and the $d$ coordinates of ${\bf f}$ are denoted as 
${\bf f}(x) = ( f_1(x), \cdots, f_d(x) )^T$.
We first introduce notations of the $L^2$ space of vector fields. Define the inner-product and $L^2$-norm of vector fields on $(\calX, p(x) dx)$ as the following: for $ {\bf v}, {\bf w}: \calX \to \R^d$, let
\begin{equation}
\langle {\bf v}, {\bf w} \rangle_p 
:= \int_{\calX} {\bf v}(x) \cdot {\bf w}(x)  p(x) dx,
\end{equation}
and then the $L^2$ norm is defined as
\begin{equation}
\|  {\bf v} \|_{p}^2 := \langle {\bf v}, {\bf v} \rangle_p.
\end{equation}
{We stress that, throughout this paper, subscript $_p$ in the norm means 2-norm weighted by the measure $p(x)dx$.}
We denote  all critics ${\bf f}: \calX \to \R^d$ such that $ \|  {\bf f} \|_{p }^2  < \infty$ the space of $L^2(p): = L^2(\calX, p(x) dx)$. 

The Stein discrepancy over the class of $L^2$ critics with bounded $L^2$ norm is defined, for some $r  > 0$, as
\begin{equation}\label{eq:def-SD-alpha}
{\rm SD}_{r} (p,q)=  \underset{{\bf f} \in \calF_0(p), \, \| {\bf f} \|_{p} \le r }{\sup}   {\rm SD} [ {\bf f} ],
\end{equation}
where ${\rm SD}[ {\bf f} ]$ is defined as in \eqref{eq:sd_operator}. Define
\begin{equation}\label{eq:def-fstar}
    {\bf f}^*  := {\bf s}_q - {\bf s}_p,
\end{equation}
the following lemma characterizes the solution of \eqref{eq:def-SD-alpha}, and the proof (left to Appendix \ref{app:add-proofs}) also derives a useful equality when ${\bf f} $ and $ {\bf f}^*$ are in $L^2(p) \cap \calF_0(p)$, see also e.g. \cite[Lemma 2.3]{liu2016kernelized}:
\begin{equation}\label{eq:SD-as-innerproduct}
{\rm SD} [ {\bf f} ]  
= \mathbb{E}_{x\sim p}  T_q {\bf f}(x) 
= \langle {\bf f}^*, {\bf f} \rangle_p.
\end{equation}

\begin{lemma}\label{lemma:sol-SD-L2}
For any $r> 0$, 
suppose $ {\bf s}_q$ and  ${\bf s}_p$ are in $L^2(p)$ {and $\calF_0(p)$}, 
then 
${\rm SD}_{r} (p,q) = r \|   {\bf f}^* \|_{p } < \infty$
and,
if  $ \|   {\bf f}^* \|_{p } > 0$, 
 the supremum of \eqref{eq:def-SD-alpha} is achieved at 
$ {\bf f}  =  ( { r }/{  \|   {\bf f}^* \|_{p } }) {\bf f}^* $.  
\end{lemma}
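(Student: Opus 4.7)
The plan is to split the argument into two clean pieces: first derive the inner-product representation \eqref{eq:SD-as-innerproduct}, then maximize a linear functional on a ball via Cauchy--Schwarz.

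First I would establish ${\rm SD}[{\bf f}] = \langle {\bf f}^*, {\bf f}\rangle_p$ for any sufficiently regular ${\bf f} \in \calF_0(p) \cap L^2(p)$. Expanding the Stein operator,
\[
{\rm SD}[{\bf f}] = \int_\calX \bigl({\bf s}_q \cdot {\bf f}\bigr) p \,dx + \int_\calX (\nabla \cdot {\bf f})\, p \,dx.
\]
The second integral is handled by the divergence theorem applied to the vector field $p\,{\bf f}$: since $\nabla \cdot (p{\bf f}) = p(\nabla\cdot{\bf f}) + \nabla p \cdot {\bf f}$ and $p{\bf f}|_{\partial \calX}=0$ by the $\calF_0(p)$ assumption, we obtain $\int (\nabla\cdot{\bf f})p\,dx = -\int {\bf f}\cdot\nabla p\,dx = -\int ({\bf s}_p\cdot{\bf f})p\,dx$ using ${\bf s}_p = \nabla p/p$. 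Substituting back gives ${\rm SD}[{\bf f}] = \langle {\bf s}_q - {\bf s}_p, {\bf f}\rangle_p = \langle {\bf f}^*, {\bf f}\rangle_p$. The assumption that ${\bf s}_p, {\bf s}_q \in L^2(p)$ guarantees ${\bf f}^* \in L^2(p)$ with $\|{\bf f}^*\|_p<\infty$, so the inner product is finite and well-defined.

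Second, having reduced \eqref{eq:def-SD-alpha} to $\sup\{\langle {\bf f}^*, {\bf f}\rangle_p : {\bf f}\in\calF_0(p),\,\|{\bf f}\|_p\le r\}$, I would apply Cauchy--Schwarz to obtain the upper bound $\langle {\bf f}^*,{\bf f}\rangle_p \le \|{\bf f}^*\|_p\|{\bf f}\|_p \le r\|{\bf f}^*\|_p$. For the matching lower bound when $\|{\bf f}^*\|_p>0$, I would exhibit the candidate ${\bf f} = (r/\|{\bf f}^*\|_p){\bf f}^*$ and check that it is feasible: it lies in $L^2(p)$ with norm exactly $r$, and it lies in $\calF_0(p)$ because $\calF_0(p)$ is closed under scalar multiplication and ${\bf f}^* = {\bf s}_q - {\bf s}_p$ belongs to $\calF_0(p)$ by the hypotheses on ${\bf s}_p, {\bf s}_q$. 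Plugging in yields $\langle {\bf f}^*, {\bf f}\rangle_p = r\|{\bf f}^*\|_p$, which matches the upper bound.

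The main obstacle, such as it is, is the integration-by-parts step: one must verify that the divergence theorem applies with vanishing boundary contribution under the stated regularity and boundary assumptions. This is precisely why the lemma asks for ${\bf s}_q, {\bf s}_p \in \calF_0(p)$ (so that the candidate maximizer inherits the boundary condition) and why Assumption~\ref{assump:p-q} imposes non-vanishing on $\calX$ with vanishing at $\partial\calX$. Once that identity is in hand, the maximization is a textbook Cauchy--Schwarz argument and the uniqueness/existence of the maximizer follows immediately.
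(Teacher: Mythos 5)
Your proposal is correct and follows essentially the same route as the paper's own proof in Appendix~\ref{app:add-proofs}: integration by parts using the boundary condition $p{\bf f}|_{\partial\calX}=0$ to obtain ${\rm SD}[{\bf f}]=\langle {\bf f}^*,{\bf f}\rangle_p$, then Cauchy--Schwarz for the upper bound $r\|{\bf f}^*\|_p$, with the rescaled critic $(r/\|{\bf f}^*\|_p){\bf f}^*$ attaining it and remaining feasible since ${\bf f}^*\in\calF_0(p)$. No gaps.
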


Lemma \ref{lemma:sol-SD-L2} implies that if $ \|   {\bf f}^* \|_{p } = 0$, then ${\rm SD}_{r} (p,q) = 0$.
Thus, we take the following assumption, which restricts the case where the $L^2$ Stein critic can achieve a positive discrepancy 
 when $q \neq p$.

\begin{assumption}\label{assump:p-q-L2}
The score functions ${\bf s}_p $ and ${\bf s}_q$ are in $L^2(p)$,
and when $q\neq p$, 
$ \| {\bf s}_q - {\bf s}_p  \|_{p } >0 $.
\end{assumption}

\subsection{Goodness-of-Fit tests}\label{sec:gof_background}

In GoF testing, we are presented with samples $X=\{x_i\}_i$ drawn from an unknown distribution $p$, and we wish to assess whether this sample is likely to have come from the {\it model distribution} $q$. That is, we may define the null and alternative hypotheses as follows:
 \begin{equation}
    H_0: p=q,
    \quad \quad
    H_1: p\neq q.
    \label{eq:hypotheses}
\end{equation}
A GoF test is conducted using a test statistic $\hat{T} = \hat{T}(X)$ 
 computed using observed samples in $X$.
 {After specifying a number $t_{\rm thresh}$, which is called the ``test threshold'', 
 the null hypothesis is rejected if $\hat{T}> t_{\rm thresh}$.}
There are different approaches to specifying a test threshold, 
for example, if prior knowledge of the distribution of $\hat{T}$ under $H_0$ is available, it can be used to choose $t_{\rm thresh}$.
In the experiments of this work, we compute $t_{\rm thresh}$ using a bootstrap strategy from samples from the model distribution $q$.

The selection of $t_{\rm thresh}$ is to control the Type-I error, defined as $\Pr[\hat{T}> t_{\rm thresh} ]$ under $H_0$.
{The randomness of $\Pr$ is with respect to data $X$.}
The goal is to guarantee that $\Pr[\hat{T}> t_{\rm thresh} ] \le \alpha$, which is called the the ``significance level'' (typically, $\alpha = 0.05$).
The Type-II error measures the probability that the null hypothesis is improperly accepted as true,
that is,  $\Pr[\hat{T}\leq t_{\rm thresh}]$ under $H_1$.
Finally, the ``test power'' is defined as one minus the Type-II error.
For the application to GoF testing, 
the current work develops a test statistic computed using a Stein critic parameterized and learned by a neural network, computed from a training-testing split of the dataset. More details will be introduced in Section \ref{sec:gof_test}.

\section{Method}

\subsection{Neural $L^2$ Stein critic}\label{sec:learning_stein_critic}

Replacing the $L^2$-norm constraint  in \eqref{eq:def-SD-alpha}  to be a regularization term leads to the following minimization over a certain class of ${\bf f}$ (inside $L^2(p) \cap \calF_0(p)$) as
\begin{equation}\label{eq:def-L_lambda}
    \calL_\lambda[ {\bf f}] 
    := -  {\rm SD} [ {\bf f} ] 
    + \frac{\lambda}{2} \| {\bf f} \|_{p }^2
    = \E_{x\sim p} \left(   - T_q {\bf f}(x)  + \frac{\lambda}{2}  \|{\bf f}(x)\|^2 \right),
\end{equation}
where  $\lambda > 0$ is the penalty weight of the $L^2$ regularization. 
Under Assumption \ref{assump:p-q}-\ref{assump:p-q-L2}, 
the equality \eqref{eq:SD-as-innerproduct} gives that 
\begin{align}
  \calL_\lambda [ {\bf f}] 
  &  = - \langle {\bf f}^*, {\bf f} \rangle_p + \frac{\lambda}{2} \langle {\bf f}, {\bf f} \rangle_p 
 =  \frac{1}{2 \lambda }  ( \|  \lambda{\bf f} - {\bf f}^*   \|_p^2 -  \| {\bf f}^* \|_p^2 ).
\label{eq:Llambda-is-MSE-const}
\end{align}
Since $ \| {\bf f}^* \|_p^2$ is a constant independent of ${\bf f} $,
\eqref{eq:Llambda-is-MSE-const} immediately gives that  
$    \calL_\lambda[ {\bf f}] $ is minimized at
\begin{equation}
    {\bf f}_\lambda^* := \frac{ {\bf f}^* }{\lambda} = \frac{1}{\lambda} ({\bf s}_q - {\bf s}_p),
    \label{eq:optimal_critic}
\end{equation}
see also Theorem 4.1 of \cite{hu2018stein}.
The expression \eqref{eq:optimal_critic}  reveals an apparent issue: if one only considers  global functional minimization, 
then the choice of $\lambda$ plays no role but contributes a scalar normalization of the optimal critic,
and consequently does not affect the learned Stein critic in practice, e.g., in computing test statistics.
A central observation of the current paper is that the choice of $\lambda$ plays a role in optimization, 
specifically, in training neural network parameterized Stein critics.

Following \cite{hu2018stein,grathwohl2020learning}, we parameterize the critic by a neural network mapping ${\bf f}(x, \theta)$ parameterized by $\theta$,
and we assume that ${\bf f}(\cdot, \theta) \in L^2(p)$ for any $\theta$ being considered. We denote this ${\bf f}(\cdot,\theta)$ the ``neural Stein critic''.
The population loss of $\theta$ follows from \eqref{eq:def-L_lambda} as
\begin{equation}\label{eq:def-L_lambda-theta}
 {L}_\lambda(\theta) = 
    \calL_\lambda[ {\bf f}(\cdot, \theta)].
    \end{equation}
The neural Stein critic is trained by minimizing the empirical version of $ {L}_\lambda(\theta)$
computed from finite i.i.d. samples $x_i\sim p$, i.e., given $n_{\rm tr}$ training samples, the empirical loss is defined as
\[
    \hat{L}_\lambda(\theta) 
    : =  \frac{1}{ n_{\rm tr}} \sum_{i=1}^{n_{\rm tr}}    
    ( -T_q{\bf f}(x_i, \theta) + \frac{\lambda}{2}  \|{\bf f}(x_i, \theta)\|^2).
\]
\begin{remark}[{$\lambda$ as scaling parameter}]
We note a ``$\lambda$-scaling'' view of the $L^2$-regularized objective $ {L}_\lambda(\theta)$,
namely, it is equivalent to a scaling of the parameterized function ${\bf f}(x,\theta)$ by $\lambda$.
Specifically, consider $ \calL_1 [ {\bf f}] $ as the ``scaling-free'' ($\lambda$-agnostic) functional minimizing objective,  
then by definition, we have that 
\begin{equation}\label{eq:lambda-is-scaling}
\calL_\lambda [ {\bf f}] 
  =  \frac{1}{\lambda} \calL_1 [  \lambda {\bf f}].
  \end{equation}
This $\lambda$-scaling view of the $L^2$ Stein discrepancy allows us to connect to the 
lazy training of neural networks \cite{chizat2019lazy}, which indicates that,
with large values of $\lambda$, the early stage of the training of the network approximates the optimization of a kernel solution. 
We detail the theoretical proof of such an approximation in Section \ref{sec:lazy_training}. 
This view motivates our proposed usage of large $\lambda$ in early training,
which will be detailed in Section \ref{subsec:staged}.
\end{remark}

We close the current subsection by introducing a few notations 
to clarify the $\lambda$-scaling indicated by  \eqref{eq:optimal_critic}.
Suppose the neural Stein critic ${\bf f}(\cdot, \theta)$ trained by minimizing  $ \hat{L}_\lambda(\theta) $
approximates the minimizer of $ {\calL}_\lambda$,
  namely {the ``optimal critic''} ${\bf f}_\lambda^*$, 
then both ${\bf f}(\cdot, \theta)$ and the value of ${\rm SD}[  {\bf f}(\cdot, \theta)] $ would scale like $1/\lambda$.
We call ${\bf f}^*$ defined in  \eqref{eq:def-fstar} the ``scaleless optimal critic function''.
The expression \eqref{eq:optimal_critic} also suggests that, if the neural Stein critic successfully approximates the optimal, we would expect
\[
\lambda {\bf f}(\cdot, \theta) \approx {\bf f}^*,
\]
which will be confirmed by the theory in Section \ref{sec:lazy_training}. 
We thus call $\lambda {\bf f}(\cdot, \theta)$ the  ``scaleless neural Stein critic''.

\begin{figure}[t]
    \centering
    \includegraphics[width=\textwidth]{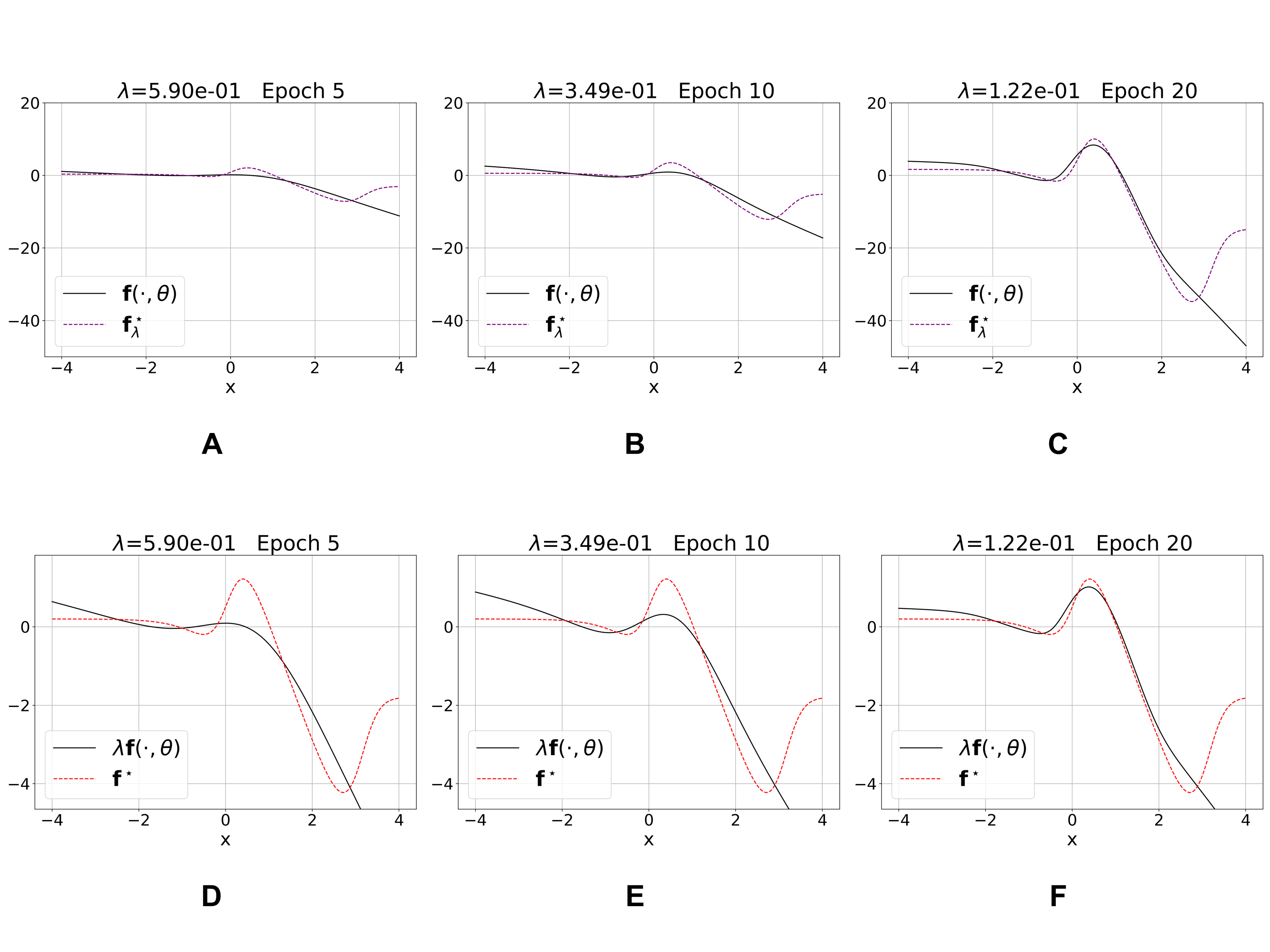}
        \vspace{-25pt}
    \caption{    
    Visualization of staged $L^2$ regularization throughout training.
    {The two distributions $p$ and $q$ are Gaussian mixtures in 1D, and the scaleless optimal critic ${\bf f}^*$ has explicit expression \eqref{eq:analyic_f_star}, see the details of the distributions and optimization in Appendix~\ref{sec:1d_experiment}.}
    In (\textbf{A})-(\textbf{C}), the {trained} neural Stein critic ${\bf f}(\cdot,\theta)$ is compared to the optimal critic ${\bf f}^*_{\lambda}$ defined in \eqref{eq:optimal_critic}. The {optimal} critic function changes scale through training as the weight of regularization is decreased. In (\textbf{D})-(\textbf{F}), the scaleless neural Stein critic {$\lambda{\bf f}(\cdot,\theta)$} is compared to the scaleless optimal critic ${\bf f}^*$ defined in \eqref{eq:def-fstar}. The {neural Stein} critic roughly approximates the {optimal critic} at early times, followed by fine-tuned changes in the later epochs of training.
    }\label{fig:1d_critic}
\end{figure}

\subsection{Staged-$\lambda$ regularization in training}\label{subsec:staged}

In practice, the neural network is optimized via the $\lambda$-dependent loss $L_\lambda$. 
We propose a staging of the regularization weight $\lambda$ that begins with a large value followed by a gradual decrease until some terminal value.
Specifically, we adopt a log-linear staged regularization scheme by which we decrease $\lambda$ via the application of a multiplicative factor $\beta < 1$ after  
a certain interval of training time measured by the number of batches $B_w$. 
In addition to $B_w$, this staging scheme uses three parameters: the initial weight $\lambda_{\rm init}$, the decay factor $\beta \in (0,1)$, and the terminal weight $\lambda_{\rm term}$ (at which point the decay ceases). 
We denote the discrete-time regularization staging with these parameters as 
$\Lambda(B_i ; \lambda_{\rm init}, \lambda_{\rm term}, \beta)$, where $B_i = i\cdot B_w$ stands for the $i$-th interval of batches, $i\in\mathbb{N}$, and $\lambda$ will be set to the value on the $i$-th interval. 
$i=0$ refers to the period before training has begun such that $\Lambda(B_0)=\lambda_{\rm init}$ and, for $i>0$,
\begin{equation}
    \Lambda\left(B_i ; \lambda_{\rm init}, \lambda_{\rm term}, \beta\right) = \text{max}\left\{\lambda_{\rm init}\cdot\beta^{i}, \lambda_{\rm term}\right\}.
    \label{eq:lam_staging}
\end{equation}
That is, when $i$ increments of $B_w$ number of batches have occurred, $\lambda_{\rm init}$ is annealed by a factor of $\beta^i$.
$\beta $ can be any positive number in $(0,1)$ and we typically choose $\beta $ to be about 0.7$\sim$ 0.9.

The beginning of the training with a large $\lambda$ is motivated by  the analysis in Section~\ref{sec:lazy_training},
which suggests that with large $\lambda$ at early times of training, 
the training of the neural Stein critic can be approximately understood from the perspective of kernel regression optimization, rapidly reaching its best approximation in $\sim  1/\lambda $ time (up to a log factor), see Theorem~\ref{thm:NTK-stein} (and Theorem \ref{thm:NTK-stein-finite-sample} for the analysis with finite-sample loss).
In many cases, the kernel learning solution may not be sufficient to approximate the optimal critic ${\bf f}^*$, and this calls for going beyond the NTK kernel learning in training neural networks, which remains a theoretically challenging problem. 
On the other hand, the proper regularization strength $\lambda$, depending on the problem (the distributions $p$ and $q$ and the sample size), may be much smaller than the initial large $\lambda$, which we set to be $\lambda_{\rm term}$. 
We empirically observed that gradually reducing the $\lambda$ in later training phases will allow the neural Stein critic to progressively fit to ${\bf f}^*$, as illustrated in Figure \ref{fig:1d_critic}. Meanwhile, we also observed that using large $\lambda$ at the beginning phase of training and then gradually tuning down to small $\lambda$ performs better than  using small and fixed $\lambda$ throughout, see Section \ref{sec:exp}. These empirical results suggest the benefit of fully exploiting the kernel learning (by using large $\lambda$) in the early training phase
and annealing to small $\lambda$ in the later phase of training.

To further study other possible staging methods, we investigated an adaptive scheme of annealing $\lambda$ over training time by monitoring the validation error, and the details are provided in Appendix \ref{app:adaptive_staging}. On simulated high dimensional Gaussian mixture data, 
the adaptive staging gives performance similar to the scheme \eqref{eq:lam_staging}, and the resulting trajectory of $\lambda$ also resembles the exponential decay as in \eqref{eq:lam_staging}, {see Figure~\ref{fig:adaptive_staging}.}
Thus we think the scheme  \eqref{eq:lam_staging} can be a typical heuristic choice of the staging, and other choices of annealing $\lambda$ are also possible.

\subsection{{Evaluation and validation metrics}}

We introduce the mean-squared error metric to quantitatively evaluate how the learned neural Stein critic ${\bf f}(x, \theta)$ approximates the optimal critic.
In our experiments on synthetic data, we can use the knowledge of ${\bf f}^*$ (which requires knowledge of ${\bf s}_p$) and compute an estimator of 
${\rm MSE}_q: =  \mathbb{E}_{x\sim q} \left\| \lambda {\bf f}(x, \theta) - {\bf f}^*(x)\right\|^2$
by the sample average on a set of $n_{\rm te}$ test samples $x_i\sim q$, that is,
\begin{equation}
    \widehat{\rm MSE}_q[{\bf f}  (\cdot, \theta) ] 
    = \frac{1}{n_{\rm te}} \sum_{i=1}^{n_{\rm te}} \left\| \lambda {\bf f} (x_i, \theta) - {\bf f}^*(x_i)\right\|^2.
\label{eq:mse}
\end{equation}
Similarly, if additional testing or validation samples from data distribution $p$ are available, we can compute an estimator of 
${\rm MSE}_p: =  \mathbb{E}_{x\sim p} \left\| \lambda {\bf f}(x, \theta) - {\bf f}^*(x)\right\|^2$ by the sample average denoted as $ \widehat{\rm MSE}_p$.

We note that the relationship between ${\rm MSE}_p$  and $\calL_\lambda[ {\bf f} (\cdot, \theta) ]$ as in \eqref{eq:Llambda-is-MSE-const} leads to an estimator of ${\rm MSE}_p$ (up to a constant) that can be computed without the knowledge of ${\bf f}^*$ as follows.
Define 
${\rm MSE}^{(m)}_p[{\bf f}  (\cdot, \theta)  ] 
   : = 2 \lambda  \calL_\lambda[ {\bf f} (\cdot, \theta) ]
   = 2\lambda \mathbb{E}_{x\sim p} \left( -T_q {\bf f}(x, \theta) + \frac{\lambda}{2}\|{\bf f}(x, \theta)\|^2 \right)$,
and \eqref{eq:Llambda-is-MSE-const} gives that 
 $2 \lambda  \calL_\lambda[ {\bf f} (\cdot, \theta) ]  =  {\rm MSE}_p[ {\bf f} (\cdot, \theta) ]  - \| {\bf f}^* \|_p^2 $.
 Thus, we can use the sample-average estimator of ${\rm MSE}^{(m)}_p$ to estimate ${\rm MSE}_p$ up to the unknown constant $ \| {\bf f}^* \|_p^2$. That is, given $n_{\rm val}$ validation  samples $x_i \sim p$,
 the estimator can be computed as 
\begin{equation}
    \widehat{\rm MSE}_p^{(m)}[{\bf f}  (\cdot, \theta)  ] 
   : = \frac{2\lambda}{n_{\rm val}} \sum_{i=1}^{n_{\rm val}} \left( -T_q {\bf f}(x_i, \theta) + \frac{\lambda}{2}\|{\bf f}(x_i, \theta)\|^2 \right).
\label{eq:validation_mse}
\end{equation}
The superscript $^{(m)}$ stands for ``monitor'', and in experiments we will use $\widehat{\rm MSE}_p^{(m)}$ to monitor the training progress of neural Stein critic by a stand-alone validation set.

\section{Theory of lazy training} \label{sec:lazy_training}

In this section, we show the theoretical connection between choosing large $L^2$-regularization weight $\lambda$ at the beginning of training and ``lazy learning'' - referring to when the training dynamic resembles a kernel learning determined by the model initialization \cite{chizat2019lazy}.
The main result, 
Theorem \ref{thm:NTK-stein} (for population loss training)
and Theorem \ref{thm:NTK-stein-finite-sample} (for empirical loss training),
prove the kernel learning of neural Stein critic at large $\lambda$
and theoretically support the benefit of using large $\lambda$ in the beginning phase of training.

To recall the setup, we assume the score function of model distribution ${\bf s}_q$ is accessible, and the score of data distribution $ {\bf s}_p $ is unknown. We aim to train a neural network critic to infer the unknown {scaleless} optimal critic ${ \bf f}^*$  from data.
All the proofs are in Section \ref{sec:proofs} {and Appendix \ref{app:add-proofs}}.

\subsection{Evolution of network critic under gradient descent}\label{subsec:theory-GD}

Throughout this section, we derive the continuous-time optimization dynamics of gradient descent (GD).
The continuous-time GD dynamic reveals the small learning rate limit of the discrete-time GD, and the analysis may be extended to 
minibatch-based Stochastic Gradient Descent (SGD) method \cite{goodfellow2016deep} used in practice.
We start from training with the population loss, and the finite-sample analysis will be given in Section \ref{subsec:NTK-finite-sample}.

Consider a neural Stein critic ${\bf f}(\cdot, \theta)$ parameterized by $\theta$ which maps from $\calX\subset\mathbb{R}^d$ to $\mathbb{R}^d$.
 We assume $\theta \in \Theta$, which is some bounded set in $\R^{M_\Theta}$, where $M_\Theta$ is the total number of trainable parameters. 
The notation $\langle \cdot ,  \cdot \rangle_\Theta$ stands for the inner-product in $\R^{M_\Theta}$,
and $\| \cdot \|_\Theta$ the Euclidean norm. 
The subscript $_\Theta$ may be omitted when there is no confusion.

\begin{assumption}\label{assump:f(x,theta)-L2-F0}
The network function $ {\bf f}(x, \theta)$ is differentiable on $\calX \times \Theta$,
and for any $\theta \in \Theta$, ${\bf f}(\cdot, \theta) \in L^2(p) \cap \calF_0(p)$.
\end{assumption}

The boundedness of $\Theta$ is valid in most applications, and our theory will restrict $\theta$ inside a Euclidean ball in $\R^{M_\Theta}$, see more in Assumption \ref{assump:C1-C2}.
 Recall that for regularization weight $\lambda > 0$, the training objective $L_\lambda(\theta) $ is defined as in \eqref{eq:def-L_lambda-theta}.
Suppose the neural network parameter $\theta$ evolving over training time is denoted as $\theta(t)$ for $t > 0$. 
The GD dynamic is defined by {the ordinary differential equation}
\begin{equation}\label{eq:def-GD-theta}
\dot{\theta}(t) = -  {\partial_\theta} L_\lambda( \theta (t )),
\end{equation}
starting from some initial value of $\theta(0)$.
The following lemma gives the expression of \eqref{eq:def-GD-theta}.

\begin{lemma}\label{lemma:GD-theta-eqn}
For $\lambda > 0$, the GD dynamic of $\theta(t)$ of minimizing $L_\lambda(\theta)$ can be written as
\begin{equation}\label{eq:GD-theta-eqn}
\dot{\theta}(t) =  - \E_{x \sim p} \partial_\theta {\bf f}(x,\theta (t)) \cdot 
				\big(  \lambda  {\bf f}( x ,\theta(t)) - {\bf f}^*(x)  \big).
\end{equation}
\end{lemma}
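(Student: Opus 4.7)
The plan is to compute $\partial_\theta L_\lambda(\theta)$ in closed form and substitute directly into the gradient flow \eqref{eq:def-GD-theta}. The identity that makes \eqref{eq:GD-theta-eqn} appear — despite ${\bf f}^*$ involving the unknown score ${\bf s}_p$ — is the inner-product representation \eqref{eq:SD-as-innerproduct} of the Stein discrepancy that was noted just before the lemma statement. Given Assumption \ref{assump:f(x,theta)-L2-F0}, the network critic lies in $L^2(p)\cap\calF_0(p)$ for every $\theta \in \Theta$, so \eqref{eq:SD-as-innerproduct} applies pointwise in $\theta$ and rewrites the objective as
\[
L_\lambda(\theta) \;=\; -\langle {\bf f}^*, {\bf f}(\cdot,\theta)\rangle_p + \frac{\lambda}{2}\|{\bf f}(\cdot,\theta)\|_p^2.
\]

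Next I would differentiate the two terms separately. The linear term contributes $\partial_\theta\langle {\bf f}^*, {\bf f}(\cdot,\theta)\rangle_p = \E_{x\sim p}[\partial_\theta {\bf f}(x,\theta)\cdot {\bf f}^*(x)]$ since ${\bf f}^*$ does not depend on $\theta$; the quadratic term gives $\partial_\theta \tfrac{1}{2}\|{\bf f}(\cdot,\theta)\|_p^2 = \E_{x\sim p}[\partial_\theta{\bf f}(x,\theta)\cdot{\bf f}(x,\theta)]$ by the chain rule. Combining the two contributions with the correct signs and factoring out the common Jacobian $\partial_\theta{\bf f}(x,\theta)$ yields $\partial_\theta L_\lambda(\theta) = \E_{x\sim p}[\partial_\theta{\bf f}(x,\theta)\cdot(\lambda{\bf f}(x,\theta) - {\bf f}^*(x))]$, and substitution into $\dot{\theta}(t) = -\partial_\theta L_\lambda(\theta(t))$ produces the stated equation \eqref{eq:GD-theta-eqn}. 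One can alternatively start from the MSE reformulation \eqref{eq:Llambda-is-MSE-const} and differentiate $\tfrac{1}{2\lambda}\|\lambda{\bf f}(\cdot,\theta) - {\bf f}^*\|_p^2$ directly, which yields the same answer and doubles as a sanity check.

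The only step requiring real care, and hence the main (though mild) obstacle, is the interchange of $\partial_\theta$ with $\E_{x\sim p}$ in the two derivative computations above. I would justify it by a standard dominated-convergence / Leibniz-rule argument: joint differentiability of ${\bf f}(x,\theta)$ on $\calX\times\Theta$ from Assumption \ref{assump:f(x,theta)-L2-F0} together with boundedness of $\Theta$ supplies locally uniform control on $\|\partial_\theta {\bf f}(x,\theta)\|$ in a neighborhood of any $\theta$, while Assumption \ref{assump:p-q-L2} ensures ${\bf f}^*\in L^2(p)$; a Cauchy--Schwarz bound in $L^2(p)$ then majorizes both integrands by $\theta$-independent integrable functions, legitimizing the exchange. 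Beyond this verification, the argument is purely a substitution from the definitions of $\calL_\lambda$ and ${\bf f}^*$.
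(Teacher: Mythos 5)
Your proposal matches the paper's proof essentially verbatim: the paper likewise uses \eqref{eq:Llambda-is-MSE-const} (equivalently \eqref{eq:SD-as-innerproduct}, valid since ${\bf f}(\cdot,\theta)\in L^2(p)\cap\calF_0(p)$ by Assumption \ref{assump:f(x,theta)-L2-F0}) to write $L_\lambda(\theta)=\E_{x\sim p}\bigl(-{\bf f}^*(x)\cdot{\bf f}(x,\theta)+\tfrac{\lambda}{2}\|{\bf f}(x,\theta)\|^2\bigr)$, differentiates under the expectation, and substitutes into \eqref{eq:def-GD-theta}. The only difference is that you explicitly justify the interchange of $\partial_\theta$ and $\E_{x\sim p}$, which the paper performs without comment; that is a harmless refinement, not a different route.
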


Next, we derive the evolution of the network critic over time. We start by defining
\begin{equation}\label{eq:def-u(x,t)}
{\bf u}( x,t ) := {\bf f}(x , \theta(t)),
\end{equation}
and by chain rule, we have that  
\begin{equation}\label{eq:dudt-1}
 {\partial_t}{\bf u }( x, t) 
 = \langle \partial_\theta {\bf f}( x, \theta(t) ), \dot{\theta}(t) \rangle_\Theta.
\end{equation}
Combining Lemma \ref{lemma:GD-theta-eqn} and \eqref{eq:dudt-1} leads to the evolution equation of ${\bf u}(x,t)$ to be derived in the next lemma.
To proceed, we introduce the definition of the finite-time (matrix) neural tangent kernel (NTK) ${\bf K}_t(x,x') $ as
\begin{equation}\label{eq:def-NTK-t}
[ {\bf K}_t(x,x') ]_{ij } 
= \langle \partial_\theta  f_i( x, \theta(t) ), 
 \partial_\theta  f_j(x',\theta (t)) \rangle_\Theta, \quad i,j =1, \cdots, d,
\end{equation}
where $f_i$ denotes the $i$-th coordinate of ${\bf f}$. With the notation of ${\bf K}_t(x,x') $ we have the following lemma.

\begin{lemma}\label{lemma:u(x,t)-dynamic}
The dynamic of ${\bf u }( x, t) $ follows that 
\begin{equation}\label{eq:evolution-u}
{\partial_t}{\bf u }( x, t) 
 =  -
\E_{x' \sim p}
 {\bf K}_t(x,x')   \circ \left( \lambda {\bf u}( x' ,t) - {\bf f}^*(x') \right).
\end{equation}
\end{lemma}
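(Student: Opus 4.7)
The plan is to combine the chain-rule identity \eqref{eq:dudt-1} with the gradient-descent equation from Lemma \ref{lemma:GD-theta-eqn} by direct substitution, and then unpack the resulting $\Theta$-inner product coordinate-wise so that the NTK emerges from its definition \eqref{eq:def-NTK-t}. This is essentially a bookkeeping calculation; the only thing to be careful about is the multi-index structure of $\partial_\theta {\bf f}$, which for each $x$ is a collection of $d$ parameter-space vectors $\{\partial_\theta f_i(x,\theta)\}_{i=1}^d \subset \R^{M_\Theta}$, so that the $\cdot$ appearing in \eqref{eq:GD-theta-eqn} denotes the contraction $\partial_\theta {\bf f}(x',\theta) \cdot {\bf v} = \sum_j \partial_\theta f_j(x',\theta)\, v_j \in \R^{M_\Theta}$ for ${\bf v} \in \R^d$.

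First I would substitute \eqref{eq:GD-theta-eqn} into \eqref{eq:dudt-1}, writing
\[
\partial_t {\bf u}(x,t) = \Bigl\langle \partial_\theta {\bf f}(x,\theta(t)),\ -\E_{x'\sim p}\, \partial_\theta {\bf f}(x',\theta(t)) \cdot \bigl(\lambda{\bf f}(x',\theta(t)) - {\bf f}^*(x')\bigr)\Bigr\rangle_\Theta.
\]
Next, I would pull the expectation over $x'\sim p$ outside the $\Theta$-inner product (this exchange is justified under Assumption \ref{assump:f(x,theta)-L2-F0} together with the boundedness of $\Theta$ ensuring the relevant integrands are dominated), to obtain
\[
\partial_t {\bf u}(x,t) = -\E_{x'\sim p} \Bigl\langle \partial_\theta {\bf f}(x,\theta(t)),\ \partial_\theta {\bf f}(x',\theta(t)) \cdot \bigl(\lambda{\bf u}(x',t) - {\bf f}^*(x')\bigr)\Bigr\rangle_\Theta,
\]
where I have used the definition \eqref{eq:def-u(x,t)} to rewrite $\lambda{\bf f}(x',\theta(t))$ as $\lambda{\bf u}(x',t)$.

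Finally I would read off the $i$-th coordinate. Expanding the contraction inside the inner product gives
\[
[\partial_t {\bf u}(x,t)]_i
= -\E_{x'\sim p} \sum_{j=1}^d \langle \partial_\theta f_i(x,\theta(t)),\ \partial_\theta f_j(x',\theta(t))\rangle_\Theta\, \bigl(\lambda u_j(x',t) - f_j^*(x')\bigr),
\]
and the coefficient $\langle \partial_\theta f_i(x,\theta(t)),\partial_\theta f_j(x',\theta(t))\rangle_\Theta$ is exactly $[{\bf K}_t(x,x')]_{ij}$ by \eqref{eq:def-NTK-t}. Matrix-vector form of this identity is precisely \eqref{eq:evolution-u}.

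I do not expect any serious obstacle: the derivation is a direct consequence of the two preceding lemmas plus the definition of the NTK. The one mild subtlety, worth flagging in the write-up, is simply to confirm that the inner product in $\Theta$ may be interchanged with $\E_{x'\sim p}$, which follows from the smoothness assumption on ${\bf f}$ in Assumption \ref{assump:f(x,theta)-L2-F0} together with the boundedness of $\theta(t)$ over the relevant training interval.
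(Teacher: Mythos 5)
Your proposal is correct and follows essentially the same route as the paper's proof: substitute the gradient-descent equation \eqref{eq:GD-theta-eqn} into the chain-rule identity \eqref{eq:dudt-1}, exchange the $x'$-expectation with the $\Theta$-inner product (the paper invokes linearity of $\langle\cdot,\cdot\rangle_\Theta$ for this), and identify the resulting coefficients with $[{\bf K}_t(x,x')]_{ij}$ via \eqref{eq:def-NTK-t}. Your explicit coordinate-wise expansion and the remark on interchanging expectation and inner product are just slightly more detailed bookkeeping of the same argument.
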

The evolution equation \eqref{eq:evolution-u} is exact for ${\bf u }( x, t) $, but the kernel $ {\bf K}_t(x,x') $ changes with $t$ as the training progresses. Next in Section \ref{subsec:theory-Stein-NTK}, we will replace ${\bf K}_t(x,x') $ with the zero-time kernel throughout time in the evolution equation, resulting in the more analyzable kernel learning solution.

\subsection{Kernel learning with zero-time NTK}\label{subsec:theory-Stein-NTK}

Theoretical studies of NTK to analyze neural network learning
\cite{jacot2018neural,du2018gradient,arora2019exact}
typically use two approximations:
(i) zero-time approximation, namely
${\bf K}_t(x,x') \approx {\bf K}_0(x,x')$
where both ${\bf K}_t$ and ${\bf K}_0$ are finite-width NTK, 
and
(ii) infinite-width approximation, namely 
at initialization,
${\bf K}_0(x,x') \approx {\bf K}_0^{(\infty)}(x,x')$
as the widths of hidden layers increase,
where ${\bf K}_0^{(\infty)}$ is the limiting kernel at infinite width.
As has been pointed out in \cite{chizat2019lazy}, 
the reduction to kernel learning in ``lazy training'' does not necessarily require model over-parameterization
- corresponding to large width (number of neurons) in a neural network - but can be a consequence of a scaling of the network function. Here we show the same phenomenon where the scaling factor is the $L^2$ regularization parameter $\lambda$, see \eqref{eq:lambda-is-scaling}.
That is, we show the approximation (i) only and prove lazy training for finite-width neural networks.

We derive the property of the zero-time NTK kernel learning in this subsection,
and prove the validity of the approximation (i) in the next subsection.
To proceed, consider the kernel ${\bf K}_t(x,x') $ defined in \eqref{eq:def-NTK-t} at  time zero, which can be written as
\begin{equation}\label{eq:def-NTK-0}
\left[ {\bf K}_0(x,x') \right]_{ij } 
= \langle \partial_\theta  f_i( x, \theta(0) ), 
 \partial_\theta  f_j(x',\theta (0)) \rangle_\Theta, \quad i,j =1, \cdots, d.
\end{equation}
The kernel ${\bf K}_0(x,x') $ only depends on the initial network weights $\theta(0)$, which is usually random and independent from the data samples. 
We call ${\bf K}_0(x,x') $ the zero-time finite-width (matrix) NTK. 
Following the NTK analysis of neural network training, 
we will show in Section \ref{subsec:lazy-training-approx} that the evolution dynamic of the network function ${\bf u}(x, t)$ 
can be approximated by that of a kernel regression optimization 
- the so-called ``lazy-training'' dynamic - 
which is expressed by replacing the finite-time NTK with the zero-time NTK. 
For the dynamic in \eqref{eq:evolution-u}, the lazy-training dynamic counterpart is defined by the evolution of another solution $\bar{\bf u}(x,t)$ by replacing the kernel ${\bf K}_t$ with ${\bf K}_0$ in  \eqref{eq:evolution-u}
starting from the same initial value, namely, $\bar{\bf u }( x, 0) = {\bf u }( x, 0)$ and 
 \begin{equation}\label{eq:evolution-baru}
 {\partial_t} \bar{\bf u }( x, t) 
 =  - \E_{x' \sim p}
 {\bf K}_0(x,x')   \circ 	\left( \lambda \bar{\bf u}( x' ,t) - {\bf f}^*(x') \right).
\end{equation}
For simplicity, one assumes that at initialization, the network function is zero mapping, that is,
both  ${\bf u}( x ,0)$ and $\bar{\bf u}( x ,0)$ are zero. 
The argument generalizes to when the initial network function is small in magnitude, cf. Remark \ref{rk:small-u0},
which can be obtained, e.g., by initializing neural network weights with small values.

To analyze the dynamic of \eqref{eq:evolution-baru}, we introduce the eigen-decomposition of the kernel in the next lemma.

\begin{lemma}\label{lemma:eigen}
Suppose $\| \partial_\theta f_i(x, \theta(0)) \|_\Theta$ for $i=1,\cdots, d$ are squared integrable on $(\calX, p(x) dx)$.
The kernel $ {\bf K}_0(x,x')$ on $(\calX, p(x) dx)$ has  a finite collection of $M$ eigen-functions ${\bf v}_k: \calX \to \R^d$, 
$k=1,2, \cdots$, associated with positive eigenvalues, 
in the sense that 
\begin{equation}
\int_{\calX} {\bf K}_0(x,x') \circ {\bf v}_k(x') p(x') dx' = \mu_k {\bf v}_k(x),  
\end{equation}
where $\mu_1 \ge \cdots \ge \mu_M  >0$.
The eigen-functions are ortho-normal in $L^2(p)$, namely, 
$\langle {\bf v}_k, {\bf v}_l \rangle_p  =\delta_{kl}$, and for any ${\bf v}$ orthogonal to ${ span}\{  {\bf v}_1, \cdots,{\bf v}_M  \}$,
$\int_{\calX} {\bf K}_0(x,x') \circ {\bf v}(x') p(x') dx'  = 0$. 
\end{lemma}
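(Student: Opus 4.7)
The plan is to exhibit ${\bf K}_0$ as a finite-rank Gram-type kernel and then invoke the spectral theorem on the induced integral operator on $L^2(p)$. First I would introduce the feature map $\Phi:\calX \to \R^{d \times M_\Theta}$ whose $i$-th row is $\partial_\theta f_i(x,\theta(0))^T$, so that definition \eqref{eq:def-NTK-0} becomes the Gram factorization
\[
{\bf K}_0(x,x') = \Phi(x)\Phi(x')^T.
\]
This representation makes both the symmetry ${\bf K}_0(x',x) = {\bf K}_0(x,x')^T$ and the positive semi-definite character of the associated operator transparent. The hypothesis that each $\|\partial_\theta f_i(\cdot, \theta(0))\|_\Theta$ is squared integrable on $(\calX, p(x)dx)$ yields $\int_\calX \|\Phi(x)\|_F^2\, p(x)\, dx < \infty$, which is the input needed to bound the operator.

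Next I would define the integral operator $T:L^2(p) \to L^2(p)$ by $(T{\bf v})(x) := \int_\calX {\bf K}_0(x,x') \circ {\bf v}(x')\, p(x')\, dx'$. Substituting the Gram factorization yields the key identity
\[
(T{\bf v})(x) = \Phi(x)\, \alpha_{\bf v}, \qquad \alpha_{\bf v} := \int_\calX \Phi(x')^T {\bf v}(x')\, p(x')\, dx' \in \R^{M_\Theta},
\]
so that $\mathrm{range}(T)$ is contained in the at most $M_\Theta$-dimensional subspace $\{\Phi(\cdot)\alpha : \alpha \in \R^{M_\Theta}\} \subset L^2(p)$. A brief Cauchy--Schwarz computation using the integrability of $\|\Phi\|_F^2$ ensures $T$ is bounded; the Gram form then immediately gives self-adjointness via $\langle T{\bf v},{\bf w}\rangle_p = \alpha_{\bf w}^T \alpha_{\bf v}$ and positive semi-definiteness via $\langle T{\bf v},{\bf v}\rangle_p = \|\alpha_{\bf v}\|^2 \ge 0$.

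Finally I would apply the spectral theorem for self-adjoint, finite-rank, positive semi-definite operators on a Hilbert space. This yields finitely many positive eigenvalues $\mu_1 \ge \cdots \ge \mu_M > 0$ with $M \le M_\Theta$, together with an ortho-normal set $\{{\bf v}_k\}_{k=1}^M \subset L^2(p)$ satisfying the stated eigen-equation. The orthogonal decomposition $L^2(p) = \mathrm{span}\{{\bf v}_k\}_{k=1}^M \oplus \ker(T)$ afforded by the spectral theorem then shows that any ${\bf v}$ orthogonal to $\mathrm{span}\{{\bf v}_1,\ldots,{\bf v}_M\}$ lies in $\ker(T)$, which is precisely the last assertion of the lemma.

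I do not anticipate any genuine obstacle: the statement is a clean consequence of the finite-dimensional parameterization of ${\bf K}_0$ through $\theta(0) \in \R^{M_\Theta}$, which forces the integral operator to have rank at most $M_\Theta$. The only item requiring attention is matching notational conventions (the $\cdot$ for vector inner product, $\circ$ for matrix-vector multiplication, and subscript $_p$ for the $p$-weighted $L^2$ inner product on vector fields) so that the Gram identity $\langle T{\bf v},{\bf w}\rangle_p = \alpha_{\bf w}^T \alpha_{\bf v}$ is verified rigorously when combining the matrix-kernel convention with the vector-valued $L^2$ inner product.
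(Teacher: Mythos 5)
Your proof is correct, and it reaches the conclusion by a somewhat different technical route than the paper. The paper flattens the matrix-valued kernel into a scalar kernel $K\bigl((x,i),(x',j)\bigr) = [{\bf K}_0(x,x')]_{ij}$ on the product space $\calZ = \calX \times [d]$, verifies via Cauchy--Schwarz and the square-integrability hypothesis that this scalar kernel is Hilbert--Schmidt, invokes the spectral theorem for compact self-adjoint operators to obtain a discrete spectrum with an orthonormal eigenbasis, and only afterwards observes that finite network width forces rank at most $M_\Theta$, so that only $M$ eigenvalues are positive; the vector fields ${\bf v}_k$ are then recovered by reshaping. You instead keep the vector-valued structure and exploit the Gram factorization ${\bf K}_0(x,x') = \Phi(x)\Phi(x')^T$ from the outset: the identity $(T{\bf v})(x) = \Phi(x)\alpha_{\bf v}$ makes the finite rank ($\le M_\Theta$) of the integral operator manifest immediately, self-adjointness and positivity follow from $\langle T{\bf v},{\bf w}\rangle_p = \alpha_{\bf w}^T\alpha_{\bf v}$, and the finite-rank spectral theorem (rather than the Hilbert--Schmidt one) delivers the eigen-decomposition and the fact that the orthogonal complement of $\mathrm{span}\{{\bf v}_1,\dots,{\bf v}_M\}$ lies in $\ker(T)$. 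Your route is slightly more direct and avoids both the index-flattening and the Hilbert--Schmidt norm computation, at the price of being tied to the finite-rank feature-map structure; the paper's argument is marginally more general in that square-integrability alone already gives compactness and a discrete spectrum even without finite rank, with finiteness of $M$ entering only as a final remark. The only point to carry out carefully in your version, as you note, is the bookkeeping identifying the vector-valued $L^2(p)$ pairing with the Euclidean pairing of the $\alpha$-vectors, which is a routine Fubini/Cauchy--Schwarz verification under the stated integrability assumption.
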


The square integrability of $\partial_\theta f_i(x, \theta(0))$ can be guaranteed by certain boundedness condition on $\partial_\theta f$, 
see more in Assumption \ref{assump:C1-C2} below. 
The finite rank of the kernel, as shown in the proof, is due to the fact that we use a neural network of finite width. 
Below, we will assume that the optimal critic ${\bf f}^*$ can be efficiently expressed by the span of finite many leading eigen-functions ${\bf v}_k$.
To show that the kernel spectrum is expressive enough to approximate an  ${\bf f}^*$,
one may combine our analysis with  NTK approximation (ii):
the expressiveness of the limiting kernel ${\bf K}_0^{(\infty)}$ at infinite width can be theoretically characterized in certain settings,
meaning that its eigen-functions ${\bf v}^{(\infty)}_k$ collectively can span a rich functional space.
By the approximation ${\bf K}_0  \approx {\bf K}_0^{(\infty)}$ in spectrum, the expressiveness of the span of ${\bf v}_k$ can also be shown.
Such an extension is postponed here.

By Lemma \ref{lemma:eigen}, the eigen-functions $\{  {\bf v}_1, \cdots,{\bf v}_M  \}$ form an ortho-normal set with respect to the inner-product $\langle \cdot, \cdot \rangle_p$.
Thus for any integer $m \le M$,  the scaleless optimal critic ${\bf f}^* \in L^2(p)$ (by Assumption \ref{assump:p-q-L2}) can be orthogonally decomposed into two parts ${\bf f}_1^*$ and ${\bf f}_2^*$ such that 
\[
{\bf f}^* = {\bf f}_1^* + {\bf f}_2^*, \quad
{\bf f}_1^* \in \text{span}\{ {\bf v}_1, \cdots, {\bf v}_m \}, \quad {\bf f}_2^* \in \text{span}\{ {\bf v}_1, \cdots, {\bf v}_m \}^\perp.
\]
Making use of the orthogonal decomposition and by assuming that ${\bf f}^*$ has a significant projection on the eigen-space of the kernel $ {\bf K}_0(x,x')$ associated with large eigenvalues,
the  following proposition derives the optimization guarantee of $\bar{\bf u}(x,t)$.

\begin{proposition}\label{prop:NTK-stein}
Under the condition of Lemma \ref{lemma:eigen} and notations as therein,
suppose for $\delta >0$ and some integer $m \le M$,  $\mu_1 \ge \cdots \ge \mu_m \ge \delta >0$.
Let ${\bf f}^* = {\bf f}_1^* + {\bf f}_2^*$ be the  orthogonal decomposition as above.
Then, for $\lambda > 0$, starting from $\bar{\bf u}( x ,0) = 0$, for all $t > 0$,
\begin{equation}\label{eq:bound-NTK}
\| \lambda \bar{\bf u}(  \cdot ,t)  - {\bf f}^* \|_{p }  \le   e^{-t \lambda \delta } \| {\bf f}_1^* \|_{p }  +  \| {\bf f}_2^* \|_{p } .
\end{equation}
In particular, if for some $ 0 < \epsilon < 1$,
$\| {\bf f}_2^* \|_{p }  \le \epsilon \| {\bf f}^* \|_{p }  $, 
then  we have
\begin{equation}\label{eq:2eps-bound}
\| \lambda \bar{\bf u}(  \cdot ,t)  - {\bf f}^* \|_{p }  \le 2 \epsilon  \| {\bf f}^* \|_{p } ,
\quad \text{when} \quad  t  \ge  \frac{1}{\lambda} \frac{\log (1/\epsilon)}{\delta}.
\end{equation}
\end{proposition}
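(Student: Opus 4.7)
The plan is to exploit the linearity of the evolution equation \eqref{eq:evolution-baru} together with the spectral decomposition from Lemma \ref{lemma:eigen}. Define the integral operator $T_{K_0} : {\bf v} \mapsto \E_{x'\sim p}{\bf K}_0(\cdot,x')\circ{\bf v}(x')$ on $L^2(p)$, so \eqref{eq:evolution-baru} reads $\partial_t \bar{\bf u} = -T_{K_0}(\lambda \bar{\bf u} - {\bf f}^*)$. The key structural observation is that the range of $T_{K_0}$ is contained in $\mathrm{span}\{{\bf v}_1,\ldots,{\bf v}_M\}$, and the initial condition $\bar{\bf u}(\cdot,0) = 0$ already lies in this span; hence $\bar{\bf u}(\cdot,t)$ stays in this finite-dimensional invariant subspace for all $t$, reducing the evolution to a decoupled system of scalar ODEs in the eigen-basis.

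Concretely, I would write $\bar{\bf u}(x,t) = \sum_{k=1}^M a_k(t){\bf v}_k(x)$, set $b_k := \langle {\bf f}^*, {\bf v}_k\rangle_p$, and let ${\bf f}^*_\perp := {\bf f}^* - \sum_{k=1}^M b_k{\bf v}_k$ denote the component of ${\bf f}^*$ annihilated by $T_{K_0}$. Taking inner-products of \eqref{eq:evolution-baru} against each ${\bf v}_k$ and using ortho-normality yields $\dot a_k(t) = -\mu_k(\lambda a_k(t) - b_k)$ with $a_k(0) = 0$, whose explicit solution is $a_k(t) = (b_k/\lambda)(1 - e^{-\lambda\mu_k t})$. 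Substituting back gives the clean identity
\[
\lambda \bar{\bf u}(\cdot,t) - {\bf f}^* \;=\; -\sum_{k=1}^M b_k e^{-\lambda\mu_k t}\, {\bf v}_k \;-\; {\bf f}^*_\perp,
\]
so by Parseval,
\[
\|\lambda\bar{\bf u}(\cdot,t) - {\bf f}^*\|_p^2 \;=\; \sum_{k=1}^M b_k^2 e^{-2\lambda\mu_k t} + \|{\bf f}^*_\perp\|_p^2.
\]

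To conclude \eqref{eq:bound-NTK}, split the sum at $k = m$: for $k \le m$ use $\mu_k \ge \delta$ to bound $e^{-2\lambda\mu_k t}\le e^{-2\lambda\delta t}$, collecting $e^{-2\lambda\delta t}\sum_{k=1}^m b_k^2 = e^{-2\lambda\delta t}\|{\bf f}_1^*\|_p^2$; for $k > m$ bound $e^{-2\lambda\mu_k t}\le 1$, and combine with $\|{\bf f}^*_\perp\|_p^2$ to recover $\|{\bf f}_2^*\|_p^2$. Taking square roots and using $\sqrt{a^2+b^2}\le a + b$ for $a,b\ge 0$ yields the stated bound. For \eqref{eq:2eps-bound}, I would use $\|{\bf f}_1^*\|_p \le \|{\bf f}^*\|_p$ and pick $t \ge \log(1/\epsilon)/(\lambda\delta)$ so that $e^{-\lambda\delta t}\|{\bf f}_1^*\|_p \le \epsilon\|{\bf f}^*\|_p$, then add this to the hypothesized bound $\|{\bf f}_2^*\|_p \le \epsilon\|{\bf f}^*\|_p$.

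There is no serious technical obstacle here once Lemma \ref{lemma:eigen} is in hand; the proof reduces to an explicit solution of scalar linear ODEs together with Parseval. The main conceptual step is recognizing the invariance of $\mathrm{span}\{{\bf v}_k\}$ under the zero-time NTK flow, which makes the reduction possible. The content of the statement is then transparent: $T_{K_0}$ vanishes on the orthogonal complement of its eigen-functions, so the residual component absorbed into ${\bf f}_2^*$ cannot be driven down by the flow, explaining precisely why the alignment assumption $\|{\bf f}_2^*\|_p \ll \|{\bf f}^*\|_p$ is required for a small final error.
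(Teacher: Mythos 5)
Your proposal is correct and follows essentially the same route as the paper: expand in the eigen-basis of the zero-time NTK from Lemma \ref{lemma:eigen}, reduce the flow to decoupled scalar linear ODEs, and split the spectrum at $k=m$ using $\mu_k \ge \delta$. The only cosmetic difference is that the paper evolves the error ${\bf w} = \lambda\bar{\bf u} - {\bf f}^*$ directly and concludes with the triangle inequality, whereas you solve for the coefficients of $\bar{\bf u}$ and finish via Parseval and $\sqrt{a^2+b^2} \le a+b$; both yield the identical bound.
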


\subsection{Approximation by lazy-training dynamic}\label{subsec:lazy-training-approx}

The network function $ {\bf f}(x, \theta)$ maps from $\calX \times \Theta$ to $\R^d$,
and  $\partial_\theta  {\bf f }  (x, \theta) $ is a $d$-by-$M_\Theta$ matrix.
We denote by $\| \cdot \|$ the vector 2-norm and the matrix operator norm. 
We denote by $B_r$ the open Euclidean ball of radius $r$ (centered at the origin) in $\R^{M_\Theta}$.

\begin{assumption}\label{assump:C1-C2}
Under Assumption \ref{assump:f(x,theta)-L2-F0},
there are positive constants $r$ and $L_1, L_2$ such that $B_{r} \subset \Theta$ and

(C1) For any $\theta \in B_{r}$,  $ \sup_{x \in \calX }   \| \partial_\theta  {\bf f }  (x, \theta) \| \le L_1$.

(C2) 
For any $\theta_1, \theta_2 \in B_r$, 
$ \sup_{x \in \calX} \| \partial_\theta  {\bf f }  (x, \theta_1) -  \partial_\theta  {\bf f }  (x, \theta_2) \| \le L_2 \|\theta_1 - \theta_2\|$.
\end{assumption}

\begin{proposition}\label{prop:NTK-approx}
Under Assumptions \ref{assump:p-q}-\ref{assump:C1-C2}, 
suppose $\theta(0) \in B_{r/2}$, and ${\bf u}( x ,0)  = \bar{\bf u}( x ,0) = 0$,
then for any $\lambda > 0$, 

(i) For any $t\ge 0$ such that $B_{ r/2+ \sqrt{ {t}/{ (2\lambda)} } \| {\bf f}^*\|_p} \subset \Theta$,
\begin{equation}\label{eq:thetat-theta0}
\|\theta(t) - \theta(0)\|
\le \sqrt{ \frac{t}{ 2\lambda} } \| {\bf f}^*\|_p.
\end{equation}

(ii) For any    $ t \le  \frac{1}{2}( \frac{r}{ \| {\bf f}^*\|_p } )^2 \lambda $
(recall that $ \| {\bf f}^*\|_p>0$ by Assumption \ref{assump:p-q-L2}),
\begin{equation}\label{eq:y-bary}
\|  \lambda {\bf u}( \cdot ,t)  - \lambda \bar{\bf u}(  \cdot ,t)  \|_{p } 
 \le   \frac{4\sqrt{2}}{3} L_1 L_2 \sqrt{\lambda} t^{3/2}
 \| {\bf f}^* \|_{p }^2.
 \end{equation}

\end{proposition}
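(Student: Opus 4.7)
\textbf{Plan for Proposition \ref{prop:NTK-approx}.}
For part (i), my plan is to exploit the monotone descent of $L_\lambda$ along the gradient flow combined with the mean-squared-error identity \eqref{eq:Llambda-is-MSE-const}. By Lemma~\ref{lemma:GD-theta-eqn} and the chain rule, $\frac{d}{dt} L_\lambda(\theta(t)) = -\|\dot\theta(t)\|_\Theta^2$; since ${\bf u}(\cdot,0) = 0$, the identity \eqref{eq:Llambda-is-MSE-const} gives $L_\lambda(\theta(0)) = 0$ and $L_\lambda(\theta(t)) \ge -\|{\bf f}^*\|_p^2/(2\lambda)$. Integrating yields $\int_0^t \|\dot\theta(s)\|_\Theta^2\, ds = -L_\lambda(\theta(t)) \le \|{\bf f}^*\|_p^2/(2\lambda)$, and a single Cauchy--Schwarz application $\|\theta(t) - \theta(0)\|_\Theta^2 \le t \int_0^t \|\dot\theta(s)\|_\Theta^2\, ds$ produces \eqref{eq:thetat-theta0}. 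A standard continuity/bootstrap argument then keeps $\theta(s)$ inside the enlarged ball on the relevant time interval.

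For part (ii), I would define ${\bf w}(x,t) := {\bf u}(x,t) - \bar{\bf u}(x,t)$ and subtract \eqref{eq:evolution-baru} from \eqref{eq:evolution-u} to write
\begin{equation*}
\partial_t {\bf w}(x,t) = -\lambda\, \E_{x' \sim p} {\bf K}_0(x,x') \circ {\bf w}(x',t) + {\bf h}(x,t),
\end{equation*}
where ${\bf h}(x,t) := -\E_{x' \sim p} [{\bf K}_t(x,x') - {\bf K}_0(x,x')] \circ (\lambda {\bf u}(x',t) - {\bf f}^*(x'))$. Taking the $\langle\cdot,\cdot\rangle_p$ inner product with ${\bf w}(\cdot,t)$ and using that the integral operator $T$ with kernel ${\bf K}_0$ is PSD on $L^2(p)$ --- indeed $\langle {\bf w}, T {\bf w}\rangle_p = \bigl\|\E_{x' \sim p}\sum_i w_i(x') \partial_\theta f_i(x',\theta(0))\bigr\|_\Theta^2 \ge 0$ by \eqref{eq:def-NTK-0} --- the $-\lambda T {\bf w}$ term has non-positive contribution, leaving $\frac{d}{dt} \|{\bf w}(\cdot,t)\|_p \le \|{\bf h}(\cdot,t)\|_p$. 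Since ${\bf w}(\cdot,0) = 0$, integration gives $\|{\bf w}(\cdot,t)\|_p \le \int_0^t \|{\bf h}(\cdot,s)\|_p\, ds$.

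The remaining step is to bound $\|{\bf h}(\cdot,s)\|_p$ via Assumption \ref{assump:C1-C2}. Writing ${\bf K}_t(x,x') = \partial_\theta {\bf f}(x,\theta(t))\, (\partial_\theta {\bf f}(x',\theta(t)))^T$ and telescoping into the $x$-factor and $x'$-factor, (C1) and (C2) give the uniform matrix-operator-norm bound $\|{\bf K}_t(x,x') - {\bf K}_0(x,x')\|_{\rm op} \le 2 L_1 L_2 \|\theta(t) - \theta(0)\|_\Theta$. Combined with the part (i) descent bound $\|\lambda {\bf u}(\cdot,s) - {\bf f}^*\|_p \le \|{\bf f}^*\|_p$ (read off from $L_\lambda(\theta(s)) \le 0$) and Jensen's inequality in $x'$, one obtains $\|{\bf h}(\cdot,s)\|_p \le 2 L_1 L_2 \sqrt{s/(2\lambda)}\,\|{\bf f}^*\|_p^2$. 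Using $\int_0^t \sqrt{s}\,ds = \frac{2}{3} t^{3/2}$ and multiplying by $\lambda$ then produces the claimed $O\!\bigl(L_1 L_2 \sqrt{\lambda}\,t^{3/2}\,\|{\bf f}^*\|_p^2\bigr)$ bound in \eqref{eq:y-bary}, with the stated constant coming out up to the tightness choice of a Cauchy--Schwarz step.

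The \emph{main obstacle} is the bootstrap: conditions (C1)--(C2) hold only on $B_r$, yet they are invoked along the entire trajectory. The hypothesis $t \le \tfrac{1}{2}(r/\|{\bf f}^*\|_p)^2 \lambda$ in (ii) is precisely tailored so that part (i) forces $\|\theta(s) - \theta(0)\|_\Theta \le r/2$ for $s \in [0,t]$, keeping $\theta(s) \in B_r$ throughout; defining $t^* := \sup\{s \in [0,t] : \theta(s') \in B_r\ \text{for all}\ s' \le s\}$, applying the above estimates on $[0, t^*)$, and deriving a strict inequality at $t^*$ rules out $t^* < t$ and closes the otherwise circular dependence between the velocity bound and the kernel-perturbation bound. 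Secondary care is needed to consistently distinguish the $d \times d$ matrix operator norm from the $L^2(p)$ functional norm when passing the kernel-difference bound through the $x'$-expectation.
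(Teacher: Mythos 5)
Your proposal is correct and follows essentially the same route as the paper's proof: part (i) via monotone decay of $L_\lambda$ along the gradient flow, the identity \eqref{eq:Llambda-is-MSE-const}, and Cauchy--Schwarz; part (ii) by subtracting \eqref{eq:evolution-baru} from \eqref{eq:evolution-u}, discarding the $-\lambda {\bf K}_0$ term by positive semi-definiteness, bounding the kernel perturbation by $2L_1L_2\|\theta(t)-\theta(0)\|$ through (C1)--(C2) together with (i) and the uniform bound $\|\lambda{\bf u}(\cdot,t)-{\bf f}^*\|_p\le\|{\bf f}^*\|_p$, and integrating in time, with the time restriction keeping $\theta(s)\in B_r$. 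Your minor variations (pointwise matrix-operator-norm bound on ${\bf K}_t-{\bf K}_0$ instead of the paper's $L^2(p)$ quadratic-form bound, reading the uniform residual bound off loss monotonicity, and a direct integral inequality in place of the paper's supremum argument) are all valid and in fact yield the slightly better constant $\tfrac{2\sqrt{2}}{3}$, which implies the stated bound.
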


While the idea of proving Proposition \ref{prop:NTK-approx} largely follows that of Theorem 2.2 in \cite{chizat2019lazy},
 we adopt a slightly improved analysis.
Specifically, when we choose $t \sim 1/\lambda$, both bounds in \eqref{eq:thetat-theta0} and \eqref{eq:y-bary} reduce to $O(1/\lambda)$, which echoes Theorem 2.2 in \cite{chizat2019lazy}.
(Note that our normalization of objective multiplies another factor of $\lambda$, and thus our time $t$ corresponds to $T = t \lambda$ for time $T$ in \cite{chizat2019lazy}.)
Here we would like to derive the approximation up to time $t \sim \log (1/\epsilon) /\lambda$, corresponding to time $T \sim \log (1/\epsilon) $ in \cite{chizat2019lazy} instead of $O(1)$ time.
Technically, our analysis also improves the bounding constant in Theorem 2.2 of \cite{chizat2019lazy} by removing a factor of $(e^{C T}-1)$,
which will become a factor of $\epsilon^{-C}$ when $T \sim \log(1/\epsilon)$.
Thus our improvement is important to apply to the case when $\epsilon$ is small. 
The improvement is by the special property of  mean-squared loss, which is equivalent to 
the neural Stein minimizing loss 
$ \calL_\lambda  $ up to a constant, cf. \eqref{eq:Llambda-is-MSE-const}.
We include a proof in Section \ref{sec:proofs} for completeness.

We are ready to derive the main theorem of this section by combining Propositions \ref{prop:NTK-stein} and \ref{prop:NTK-approx}.

\begin{theorem}\label{thm:NTK-stein}
Under Assumptions \ref{assump:p-q}-\ref{assump:C1-C2}, 
let the decomposition of  ${\bf f}^*$ into ${\bf f}_1^*$ and ${\bf f}_2^*$ be as in Proposition \eqref{prop:NTK-stein} and satisfy the conditions therein and for some $ 0 < \epsilon <1$,
$\| {\bf f}_2^* \|_{p }  \le \epsilon \| {\bf f}^* \|_{p }  $.
Suppose $\theta(0) \in B_{r/2}$ and ${\bf u}( x ,0)  = 0$,
then when  $ \lambda >  ( \frac{2 \log (1/\epsilon) }{ \delta} )^{1/2} \frac{ \| {\bf f}^*\|_p }{r}$,
for 
\begin{equation}\label{eq:cond-t0-thm}  
t = \frac{t_0}{\lambda }  \frac{ \log(1/\epsilon)}{\delta }, 
\quad 1 \le t_0 \le ( \frac{r}{ \| {\bf f}^*\|_p } )^2  \frac{\delta}{2\log(1/\epsilon)} \lambda^2,
\end{equation}
we have 
\begin{equation}\label{eq:2eps-bound-thm}
\| \lambda {\bf u}(  \cdot ,t)  - {\bf f}^* \|_{p }  
\le \left( 2 \epsilon +   \frac{ C_1  }{\lambda}  \left( \frac{t_0 \log (1/\epsilon) }{\delta} \right)^{3/2}  \right) \| {\bf f}^* \|_{p },
\end{equation}
where $C_1: = \frac{4\sqrt{2}}{3}  L_1 L_2  \| {\bf f}^* \|_{p } $ is a constant 
determined by ${\bf f}^*$ and Assumption \ref{assump:C1-C2}(C1)(C2).
\end{theorem}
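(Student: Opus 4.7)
The plan is to prove Theorem \ref{thm:NTK-stein} by a straightforward triangle inequality that splits the error $\lambda {\bf u}(\cdot,t) - {\bf f}^*$ into a kernel-learning part handled by Proposition \ref{prop:NTK-stein} and a lazy-training approximation part handled by Proposition \ref{prop:NTK-approx}. Concretely, I would write
\begin{equation*}
\| \lambda {\bf u}(\cdot,t) - {\bf f}^*\|_p
\le \| \lambda {\bf u}(\cdot,t) - \lambda \bar{\bf u}(\cdot,t) \|_p
+ \| \lambda \bar{\bf u}(\cdot,t) - {\bf f}^*\|_p,
\end{equation*}
and then bound each term separately under the hypotheses of the theorem.

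First I would verify that the choice of $t$ in \eqref{eq:cond-t0-thm} is admissible for both propositions. Since $t_0 \ge 1$, we have $t \ge \frac{1}{\lambda}\frac{\log(1/\epsilon)}{\delta}$, which is exactly the time condition in \eqref{eq:2eps-bound}, so Proposition \ref{prop:NTK-stein} gives $\|\lambda \bar{\bf u}(\cdot,t)-{\bf f}^*\|_p \le 2\epsilon \|{\bf f}^*\|_p$. For the other bound, I need $t \le \tfrac{1}{2}(r/\|{\bf f}^*\|_p)^2 \lambda$; substituting $t = \tfrac{t_0}{\lambda}\tfrac{\log(1/\epsilon)}{\delta}$ reduces this to the upper bound on $t_0$ stated in \eqref{eq:cond-t0-thm}. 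The lower bound $\lambda > (2\log(1/\epsilon)/\delta)^{1/2}\|{\bf f}^*\|_p/r$ is precisely what makes $t_0 = 1$ lie inside the admissible range, i.e., it guarantees the interval in \eqref{eq:cond-t0-thm} is nonempty. I would also note that ${\bf u}(x,0)=0$ together with $\theta(0) \in B_{r/2}$ match the initialization hypothesis of Proposition \ref{prop:NTK-approx}, and that $\bar{\bf u}(x,0) = {\bf u}(x,0) = 0$ matches that of Proposition \ref{prop:NTK-stein}.

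Having secured admissibility, I plug the time $t = \tfrac{t_0}{\lambda}\tfrac{\log(1/\epsilon)}{\delta}$ into the approximation bound \eqref{eq:y-bary}:
\begin{equation*}
\| \lambda {\bf u}(\cdot,t) - \lambda \bar{\bf u}(\cdot,t)\|_p
\le \tfrac{4\sqrt{2}}{3} L_1 L_2 \sqrt{\lambda}\, t^{3/2} \|{\bf f}^*\|_p^2
= \tfrac{4\sqrt{2}}{3} L_1 L_2 \|{\bf f}^*\|_p^2 \, \frac{1}{\lambda}\Bigl(\frac{t_0 \log(1/\epsilon)}{\delta}\Bigr)^{3/2},
\end{equation*}
where the $\sqrt{\lambda}$ factor combines with $t^{3/2}$ to yield the promised $1/\lambda$ factor. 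Recognizing $C_1 = \tfrac{4\sqrt{2}}{3}L_1 L_2 \|{\bf f}^*\|_p$ and dividing one factor of $\|{\bf f}^*\|_p$ out, this contribution becomes $(C_1/\lambda)(t_0 \log(1/\epsilon)/\delta)^{3/2} \|{\bf f}^*\|_p$. Adding the $2\epsilon \|{\bf f}^*\|_p$ contribution from Proposition \ref{prop:NTK-stein} gives \eqref{eq:2eps-bound-thm}.

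There is no substantial obstacle here since both propositions do the heavy lifting; the proof is essentially a bookkeeping exercise. The main care point is to verify that the time window $[\tfrac{1}{\lambda}\tfrac{\log(1/\epsilon)}{\delta}, \tfrac{\lambda}{2}(r/\|{\bf f}^*\|_p)^2]$ in which both bounds are simultaneously valid is non-empty, which is exactly what the lower bound on $\lambda$ enforces, and to track how the $\sqrt{\lambda}\,t^{3/2}$ scaling in the lazy-training bound produces the final $1/\lambda$ decay of the approximation error term once $t$ is calibrated to the kernel-learning timescale $\sim 1/\lambda$.
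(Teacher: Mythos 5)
Your proposal is correct and follows essentially the same route as the paper's proof: triangle inequality through $\bar{\bf u}$, admissibility of $t$ checked against both propositions (with the largeness of $\lambda$ guaranteeing a non-empty $t_0$ range), and the identity $\sqrt{\lambda}\,t^{3/2}=\lambda^{-1}(t_0\log(1/\epsilon)/\delta)^{3/2}$ to produce the $C_1/\lambda$ term. The only detail the paper adds that you omit is the quick verification that Proposition \ref{prop:NTK-stein} is applicable because $\partial_\theta {\bf f}(\cdot,\theta(0))$ is in $L^2(p)$, which follows from the uniform bound in Assumption \ref{assump:C1-C2}(C1) since $\theta(0)\in B_r$.
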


The needed upper bound of $t$, which $\sim \lambda$ is technical (to ensure that $\theta(t)$ stays inside $B_r$).
The theorem suggests that when the {scaleless} optimal critic $ {\bf f}^* $ can be represented using the leading eigen-modes of the zero-time NTK up to an $O(\epsilon)$ residual,
training the neural Stein critic for $\sim \log(1/\epsilon)/\lambda$ time
achieves an approximation of $ {\bf f}^*  $ with a relative error of $O(\epsilon, 1/\lambda)$
up to a factor involving $\log(1/\epsilon)$.
The theoretical bound in Theorem \ref{thm:NTK-stein} does not depend on data dimension $d$ or the domain $\calX$ explicitly, however, such dependence are indirect through the constants $L_1$ and $L_2$ in (C1)(C2).
The same applies to the finite-sample analysis in Theorem \ref{thm:NTK-stein-finite-sample} with respect to the constants in Assumptions \ref{assump:C1-C2} and \ref{assump:C3-C5}.

\begin{remark}[Small initialization]\label{rk:small-u0}
The result extends to when the initial network function ${\bf u}(x,0)$ is non-zero by satisfying $ \|  \lambda {\bf u}(x,0) \|_{p } \le \epsilon \| {\bf f}^* \|_{p }$.
By considering the evolution of $\bar{\bf u}(x,t)$ starting from $\bar{\bf u}(x,0) = {\bf u}(x,0)$,
one can extend Proposition \ref{prop:NTK-approx} where the bounds in \eqref{eq:thetat-theta0} and \eqref{eq:y-bary}
are multiplied by $O(1)$ constant factors
(due to that $\| \lambda {\bf u}(x,0) -  {\bf f}^* \|_{p } \le (1+\epsilon) \| {\bf f}^* \|_{p } \le 2 \| {\bf f}^* \|_{p }$ 
when using the argument in \eqref{eq:bound-thetadot-square-int}).
Proposition \ref{prop:NTK-stein} also extends 
by considering the evolution equation \eqref{eq:parital-w} from ${\bf w} (x, 0) = \lambda {\bf u }( x, t) -  {\bf f}^*(x)$,
and then $\lambda {\bf u }( x, t) $ contributes to another $\epsilon \|  {\bf f}^* \|_{p }$ in the bound \eqref{eq:2eps-bound}. 
\end{remark}

\subsection{{Training with finite samples}}\label{subsec:NTK-finite-sample}

In this subsection, we extend the analysis to training using empirical training loss 
$\hat{L}_\lambda(\theta) $
with $n$ training samples, which can be written as
\begin{equation}\label{eq:def-hatL_lambda}
 \hat{L}_\lambda(\theta) = 
    \E_{x\sim \hat{p}} \left(   - T_q {\bf f}(x, \theta)   + \frac{\lambda}{2}  \| {\bf f}(x, \theta) \|^2 \right),
\end{equation}
where $ \E_{x\sim \hat{p}} $ denotes the sample average over i.i.d. samples $ x_i \sim p$, 
i.e., $\E_{x\sim \hat{p}} g(x) = \frac{1}{n} \sum_{i=1}^n g(x_i)$. 
Again, using continuous-time evolution, the GD dynamic of $\hat{\theta}(t)$ is defined by
\begin{equation}\label{eq:def-GD-theta-n}
\dot{ \hat{\theta}}(t) = - {\partial_\theta} \hat{L}_\lambda( \hat{\theta} (t )),
\quad \hat{\theta}(0) = \theta(0),
\end{equation}
where $\theta(0)$ is some random initialization of the parameters. 
We define
\[
\hat{\bf u}( x,t ) := {\bf f}(x , \hat{\theta}(t)),
\]
assume zero-initialization $\hat{\bf u}( x,0 ) = 0$, 
and similarly as in \eqref{eq:dudt-1} we have
\begin{equation}\label{eq:partialt_hatu_1}
{\partial_t} \hat{\bf u }( x, t) 
 = \langle \partial_\theta {\bf f}( x, \hat{\theta}(t) ), \dot{ \hat{\theta}}(t) \rangle_\Theta.
\end{equation}
As the counterpart of \eqref{eq:def-NTK-t},
we introduce the finite-time empirical NTK  as 
\begin{equation*}
[ \hat{\bf K}_t(x,x') ]_{ij } 
:= \langle \partial_\theta  f_i( x,  \hat{\theta}(t) ), 
 \partial_\theta  f_j(x', \hat{\theta} (t)) \rangle_\Theta, \quad i,j =1, \cdots, d,
\end{equation*}
which we also denote as
\begin{equation}\label{eq:def-NTK-t-n}
\hat{\bf K}_t(x,x')  
= 
\langle \partial_\theta {\bf f}( x, \hat{\theta}(t) ),   
 \partial_\theta  {\bf f}(x', \hat{\theta}(t)) 
\rangle_\Theta 
=   \partial_\theta  {\bf f}( x,  \hat{\theta}(t) ) 
   	 \partial_\theta  {\bf f}(x', \hat{\theta} (t))^T,
\quad \partial_\theta  {\bf f}( x,  \theta ) \in \R^{d \times M_\Theta}.
\end{equation}
Since $ \hat{\theta}(0) = \theta(0)$, we have that 
\begin{equation}\label{eq:hatNTK-t=0}
\hat{\bf K}_0(x,x')  = {\bf K}_0(x,x') = 
 \partial_\theta  {\bf f}( x,  {\theta}(0) ) 
   	 \partial_\theta  {\bf f}(x', {\theta} (0))^T,
\end{equation}
which is a kernel matrix independent from training data, and this fact is important for our analysis. 
Using the definition of $\hat{\bf K}_t(x,x') $, 
the dynamic \eqref{eq:partialt_hatu_1} has the following equivalent form.

\begin{lemma}\label{lemma:hatu-dynamic}
The dynamic of $\hat{\bf u }( x, t) $ follows that 
\begin{equation}\label{eq:evolution-hatu}
{\partial_t} \hat{\bf u }( x, t) 
 =  - \E_{x' \sim \hat{p}} \left( 
 \hat{\bf K}_t(x,x')   \circ \left( \lambda  \hat{\bf u}( x' ,t) -  {\bf s}_q(x')  \right)
 -\nabla_{x'}  \cdot  \hat{\bf K}_t(x,x')  \right).
\end{equation}
\end{lemma}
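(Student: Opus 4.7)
\textbf{Proof plan for Lemma \ref{lemma:hatu-dynamic}.} The plan is to follow the same structure as the derivation of Lemma \ref{lemma:u(x,t)-dynamic}, but crucially to avoid the integration-by-parts step that, in the population case, replaces the Stein operator with an inner product against ${\bf f}^*$. For the empirical loss, the integration by parts is not available because $\hat{p}$ is a sum of Dirac masses, and no boundary condition is satisfied. Consequently, the divergence part of $T_q$ has to be tracked through all the calculations and will eventually manifest as the $\nabla_{x'}\cdot\hat{\bf K}_t(x,x')$ term.

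First, I would expand
\[
\hat{L}_\lambda(\theta) = \E_{x\sim\hat{p}}\left(-{\bf s}_q(x)\cdot{\bf f}(x,\theta) - \nabla_x\cdot{\bf f}(x,\theta) + \tfrac{\lambda}{2}\|{\bf f}(x,\theta)\|^2\right)
\]
and differentiate in $\theta$ under the (finite) sample average. Since ${\bf s}_q(x)$ does not depend on $\theta$, the first term gives $-\partial_\theta{\bf f}(x,\theta)^T{\bf s}_q(x)$. For the divergence term, swapping the order of $\partial_\theta$ and $\partial_{x_j}$ (justified by the differentiability hypothesis in Assumption \ref{assump:f(x,theta)-L2-F0}) yields $\partial_\theta\nabla_x\cdot{\bf f}(x,\theta) = \nabla_x\cdot\partial_\theta{\bf f}(x,\theta)$, understood as an $M_\Theta$-vector whose $k$-th entry is $\sum_j \partial_{x_j}\partial_{\theta_k} f_j(x,\theta)$. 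The quadratic term contributes $\lambda\,\partial_\theta{\bf f}(x,\theta)^T{\bf f}(x,\theta)$. Plugging into \eqref{eq:def-GD-theta-n} gives
\[
\dot{\hat\theta}(t) = -\E_{x'\sim\hat{p}}\Big[\partial_\theta{\bf f}(x',\hat\theta(t))^T\big(\lambda{\bf f}(x',\hat\theta(t)) - {\bf s}_q(x')\big) - \nabla_{x'}\cdot\partial_\theta{\bf f}(x',\hat\theta(t))\Big].
\]

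Next, I would substitute this into \eqref{eq:partialt_hatu_1}, which after writing the inner product in $\Theta$ as a matrix-vector product becomes
\[
\partial_t\hat{\bf u}(x,t) = \partial_\theta{\bf f}(x,\hat\theta(t))\,\dot{\hat\theta}(t).
\]
The first piece produces $\partial_\theta{\bf f}(x,\hat\theta(t))\partial_\theta{\bf f}(x',\hat\theta(t))^T$, which by \eqref{eq:def-NTK-t-n} is exactly $\hat{\bf K}_t(x,x')$, acting (by matrix-vector multiplication, i.e.\ $\circ$) on $\lambda\hat{\bf u}(x',t)-{\bf s}_q(x')$. The remaining piece $\partial_\theta{\bf f}(x,\hat\theta(t))\,(\nabla_{x'}\cdot\partial_\theta{\bf f})(x',\hat\theta(t))$ is identified with $\nabla_{x'}\cdot\hat{\bf K}_t(x,x')$ by the chain identity
\[
\sum_k \partial_{\theta_k} f_i(x,\hat\theta(t))\,\partial_{x'_j}\partial_{\theta_k} f_j(x',\hat\theta(t)) = \partial_{x'_j}\big[\hat{\bf K}_t(x,x')\big]_{ij},
\]
summed over $j$; this is just the definition of the row-wise divergence of the matrix kernel together with the commutation of $\partial_\theta$ and $\partial_{x'}$. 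Assembling the two pieces yields \eqref{eq:evolution-hatu}.

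There is really no hard step here; the argument is a direct calculation. The only conceptual point worth flagging is that in the population version (Lemma \ref{lemma:u(x,t)-dynamic}), the divergence gets absorbed into ${\bf f}^*$ through integration by parts using the boundary vanishing of $p{\bf f}$, so the resulting evolution only involves $\lambda{\bf u}-{\bf f}^*$ under the kernel. In the empirical version this absorption is impossible, and the ``leftover'' divergence lands on the kernel itself, producing the explicit extra $\nabla_{x'}\cdot\hat{\bf K}_t(x,x')$ correction. Thus the main care needed is bookkeeping of transposes, index conventions for $\partial_\theta{\bf f}\in\R^{d\times M_\Theta}$, and consistency in the definition of $\nabla_{x'}\cdot$ of a matrix-valued kernel; no nontrivial estimates or regularity arguments beyond those already guaranteed by Assumptions \ref{assump:p-q} and \ref{assump:f(x,theta)-L2-F0} are required.
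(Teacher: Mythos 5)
Your proposal is correct and follows essentially the same route as the paper: differentiate the empirical loss (keeping the divergence term of the Stein operator explicit, since no integration by parts is available for $\hat{p}$), substitute into the gradient-descent ODE and the chain rule for $\hat{\bf u}$, and identify $\partial_\theta{\bf f}(x,\hat\theta(t))\partial_\theta{\bf f}(x',\hat\theta(t))^T$ with $\hat{\bf K}_t$ and the leftover term with $\nabla_{x'}\cdot\hat{\bf K}_t$. The only small remark is that the commutation of $\partial_\theta$ and $\nabla_{x'}$ is better attributed to the $C^2$ regularity in Assumption \ref{assump:C3-C5} than to Assumption \ref{assump:f(x,theta)-L2-F0}, which only asserts differentiability.
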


Our analysis will compare the kernel $ \hat{\bf K}_t(x,x')$ to ${\bf K}_0(x,x')$, 
which allows to compare $\hat{\bf u}(x,t)$ to $\bar{\bf u}(x,t)$ where we will also need to control the error by replacing $\E_{x' \sim {p}}$ with $\E_{x' \sim \hat{p}}$. 
The kernel comparison relies on showing that $\| \hat{\theta}(t) - \theta(0) \|$ is small, which we derive in the next lemma after introducing additional technical assumptions on  the score functions ${\bf s}_p$, ${\bf s}_q$, the function ${\bf f}(x,\theta)$ and its derivatives. 
For a set $B \in \R^d$, we denote by $\bar{B}$ the closure of the set.

\begin{assumption}\label{assump:C3-C5} 
Suppose $ {\bf f}(x, \theta)$ is $C^2$ on $\calX \times \Theta$.
For the $B_r$ as in  Assumption \ref{assump:C1-C2},

(C3) 
There is  $L_3 >0$ such that,
for any $\theta \in B_{r}$,  $ \sup_{x \in \calX }   \| \nabla_x \cdot \partial_\theta  {\bf f }  (x, \theta) \| \le L_3$.

(C4) 
There is $L_4 > 0$ such that,
for any $\theta_1, \theta_2 \in B_r$, 
$ \sup_{x \in \calX} \| \nabla_x \cdot \partial_\theta  {\bf f }  (x, \theta_1) -  \nabla_x \cdot \partial_\theta  {\bf f }  (x, \theta_2) \| \le L_4 \|\theta_1 - \theta_2\|$.

(C5) There are positive constants $b_{(0)}$, $b_{(1)}$ and $b_p$ such that, for any $\theta \in B_r$, 
$\sup_{x \in \calX} \| {\bf f}(x,\theta) \| \le b_{(0)}$, 
$\sup_{x \in \calX} | \nabla_x \cdot {\bf f}(x,\theta) | \le b_{(1)}$, 
and 
$\sup_{x \in \calX} | {\bf f}(x,\theta) \cdot {\bf s}_p(x)| \le  b_{(0)} b_p$. 
There are positive constants $C_\calF$ and $b_\calF$ and a positive $\gamma \le 1/2$, such that, w.p. $\ge 1-2n^{-10}$, 
\begin{equation}\label{eq:bound-C5}
\max\{
\sup_{\theta \in \overline{B}_r} | (\E_{x \sim \hat{p}} - \E_{x \sim {p}}) \nabla_x \cdot {\bf f}(x, \theta)  |, \,
\sup_{\theta \in \overline{B}_r} | (\E_{x \sim \hat{p}} - \E_{x \sim {p}}) {\bf f}(x, \theta)  \cdot {\bf s}_p(x) | 
\}
\le \frac{C_\calF}{n^\gamma} + b_\calF \sqrt{\frac{\log n }{n}}.
\end{equation}

(C6) There is a constant $b_q > 0$ such that $\|  {\bf s}_q \|_p \le b_q$ and, for any $\theta \in B_r$, 
$\sup_{x \in \calX} | {\bf f}(x,\theta) \cdot {\bf s}_q(x)| \le  b_{(0)} b_q$, 
and 
$\sup_{x \in \calX} \| \partial_\theta {\bf f}(x,\theta)^T {\bf s}_q(x)\| \le  L_1 b_q$. 
The random variables  $\|{\bf s}_q(x)\|^2 $ and $\|{\bf f}^*(x)\|^2 $ with $x \sim p$ are sub-exponential.
Specifically, the constant $b_q$ and another constant $b_2 >0 $ satisfy that, 
when $n$ is large (s.t. $\sqrt{{\log n }/{n}}$ is less than a constant possibly depending on $b_2$ and $b_q$), 
w.p. $\ge 1- n^{-10}$, $(\E_{x \sim \hat{p}} - \E_{x \sim {p}}) \|{\bf s}_q(x)\|^2
\le \sqrt{20} b_q^2 \sqrt{{\log n }/{n}}$;
w.p. $\ge 1- n^{-10}$, $(\E_{x \sim \hat{p}} - \E_{x \sim {p}}) \|{\bf f}^*(x)\|^2
\le \sqrt{20} b_2 \sqrt{{\log n }/{n}}$.
\end{assumption}

 In the below, we adopt big-O notation to facilitate exposition and $\tilde{O}$ stands for the involvement of a log factor. 
The constant dependence can be tracked in the proof.
We derive a non-asymptotic result which holds at a sufficiently large finite sample size $n$.

\begin{remark}[Uniform-law and rates]
The condition (C5) gives the standard uniform-law bounds which can be derived by Rademacher complexity of the relevant function classes over $\theta \in \overline{B}_r$, see e.g. \cite{wainwright2019high}, where we treat $ {\bf s}_p$ as a fixed bounded function on $\calX$.
The $C_\calF n^{-\gamma}$ term corresponds to the Rademacher complexity which is bounded by the covering complexity of the function class,
and the exponent $\gamma$ is determined by the covering number bound that usually involves the dimensionality of the domain and the regularity of the function class. The constant $b_\calF$ corresponds to the boundedness of the functions, namely $ b_{(1)}$ and $b_{(0)} b_p$.
We note that while (C5) gives an overall $\tilde{O}(n^{- {\gamma }})$ bound, it will only be used in the middle-step analysis (Lemma \ref{lemma:hatthetat-theta0}) and our final finite-sample bound in Theorem \ref{thm:NTK-stein-finite-sample} achieves the parametric rate of $\tilde{O}(n^{-1/2})$.
\end{remark}

\begin{remark}[Boundedness condition]
In (C5)(C6), the uniform boundedness of ${\bf f}(x,\theta) \cdot {\bf s}_p(x)$,  ${\bf f}(x,\theta) \cdot {\bf s}_q(x)$,  and $\| \partial_\theta {\bf f}(x,\theta)^T {\bf s}_q(x)\|$ on $\calX$ can be fulfilled if ${\bf f}(x,\theta)$ and $\partial_\theta {\bf f}(x,\theta)$ vanish sufficiently fast when approaching $\partial \calX$, and thus allowing the score functions ${\bf s}_p$ and ${\bf s}_q$ to be potentially unbounded in $L^\infty(\calX)$.
The score functions still need to be in $L^2(p)$ and the tails cannot be too heavy to guarantee that $\|{\bf s}_q(x)\|^2 $ and  $\|{\bf f}^*(x)\|^2 $ are sub-exponential. The Bernstein-type concentration bound of in (C6) follows from the standard property of sub-exponential random variable, see e.g. \cite[Chapter 2]{wainwright2019high}.
\end{remark}

\begin{lemma}\label{lemma:hatthetat-theta0}
Under Assumptions \ref{assump:p-q}-\ref{assump:C3-C5}, 
suppose $\theta(0) \in B_{r/2}$, and $\hat{\bf u}( x ,0)   = 0$. 
Then for any $\lambda > 0$, 
there is positive integer $n_\lambda$ s.t. when $n > n_\lambda$, 
under a good event which happens  w.p. $\ge 1-3n^{-10}$, 
for any $t \ge 0$ s.t. $  \|{\bf f}^*\|_p \sqrt{{t}/{\lambda}} \le {r}/{2}$, 
$\| \hat{\theta} (t) - {\theta} (0) \| \le  \sqrt{ {t}/{ \lambda}  } \|{\bf f}^*\|_p$
and  $ \hat{\theta} (t) \in B_r$.
\end{lemma}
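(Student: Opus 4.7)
The plan is to parallel the continuous-time energy argument used to prove Proposition~\ref{prop:NTK-approx}(i), but with $L_\lambda$ replaced by $\hat L_\lambda$, and to absorb the sampling error into a slightly looser constant---which is why the empirical bound $\sqrt{t/\lambda}\,\|{\bf f}^*\|_p$ is $\sqrt{2}$ times the population bound $\sqrt{t/(2\lambda)}\,\|{\bf f}^*\|_p$. Along the flow $\dot{\hat\theta}(t)=-\partial_\theta\hat L_\lambda(\hat\theta(t))$, the standard identity $\int_0^t\|\dot{\hat\theta}(s)\|^2\,ds = \hat L_\lambda(\hat\theta(0))-\hat L_\lambda(\hat\theta(t))$ holds, and the zero initialization $\hat{\bf u}(x,0)=0$ forces both $\|{\bf f}(x,\theta(0))\|^2$ and $T_q{\bf f}(x,\theta(0))$ to vanish, hence $\hat L_\lambda(\hat\theta(0))=0$. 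Cauchy--Schwarz in $s$ then gives $\|\hat\theta(t)-\theta(0)\|^2\le -t\,\hat L_\lambda(\hat\theta(t))$, so it suffices to prove the lower bound $\hat L_\lambda(\hat\theta(t))\ge -\|{\bf f}^*\|_p^2/\lambda$ on a high-probability event.

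From \eqref{eq:Llambda-is-MSE-const}, the population loss satisfies $L_\lambda(\theta)\ge -\|{\bf f}^*\|_p^2/(2\lambda)$, so the task reduces to the uniform concentration bound
\[
\sup_{\theta\in\bar B_r}\bigl|\hat L_\lambda(\theta)-L_\lambda(\theta)\bigr|\le \frac{\|{\bf f}^*\|_p^2}{2\lambda}.
\]
Using the decomposition
\[
\hat L_\lambda(\theta)-L_\lambda(\theta) = (\E_{\hat p}-\E_p)\!\left(-{\bf s}_q\cdot{\bf f}(x,\theta)-\nabla_x\!\cdot\!{\bf f}(x,\theta)+\tfrac{\lambda}{2}\|{\bf f}(x,\theta)\|^2\right),
\]
Assumption~\ref{assump:C3-C5}(C5) directly controls the divergence term, while the first and third are uniformly bounded on $\calX\times\bar B_r$ by the pointwise bounds in (C5)(C6), so the same Rademacher-/covering-complexity scheme that justifies (C5) gives uniform fluctuations of order $\tilde O(n^{-\gamma})$ for each. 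A union bound over these events produces the probability $1-3n^{-10}$ stated in the lemma. The prefactor $\lambda/2$ in front of $\|{\bf f}\|^2$ makes the aggregate gap scale like $\lambda\cdot\tilde O(n^{-\gamma})$, so the smallest $n=n_\lambda$ for which the gap is at most $\|{\bf f}^*\|_p^2/(2\lambda)$ grows polynomially in $\lambda$, roughly like $\lambda^{2/\gamma}$.

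Finally, the uniform concentration above only applies while the trajectory stays in $\bar B_r$, whereas the containment $\hat\theta(t)\in B_r$ is itself a consequence of the derived bound; this chicken-and-egg is resolved by a continuity/bootstrap argument. Let $\tau=\inf\{s\ge 0:\hat\theta(s)\notin B_r\}$, which is strictly positive since $\theta(0)\in B_{r/2}$ and $\hat\theta$ is continuous. On $[0,\tau)$ all of the preceding estimates apply, yielding $\|\hat\theta(t)-\theta(0)\|\le \sqrt{t/\lambda}\,\|{\bf f}^*\|_p\le r/2$ throughout the time window stated in the lemma; this places $\hat\theta(t)$ strictly inside $B_r$ and forces $\tau$ to exceed that window, so the estimate holds for all admissible $t$. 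The main technical obstacle is precisely controlling the uniform-law rate for the $\lambda$-weighted $\|{\bf f}(\cdot,\theta)\|^2$ term: it must be small compared to $1/\lambda$, which both pins down $n_\lambda$ and explains the $\sqrt{2}$ loss in the finite-sample constant relative to the population counterpart.
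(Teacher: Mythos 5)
Your overall skeleton matches the paper's: the energy identity $\int_0^t\|\dot{\hat\theta}(s)\|^2ds=-\hat L_\lambda(\hat\theta(t))$ under zero initialization, Cauchy--Schwarz in $s$, a lower bound on $\hat L_\lambda(\hat\theta(t))$ of order $-\|{\bf f}^*\|_p^2/\lambda$ on a good event, and a continuity/contradiction argument to keep $\hat\theta(t)$ inside $B_r$ (your stopping-time formulation is equivalent to the paper's ``first exit time $t_1$'' contradiction). The factor-of-$2$ slack and the role of $n_\lambda$ are also understood correctly.

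The genuine gap is in the middle step. You reduce everything to the uniform law
$\sup_{\theta\in\bar B_r}|\hat L_\lambda(\theta)-L_\lambda(\theta)|\le \|{\bf f}^*\|_p^2/(2\lambda)$, which requires uniform-in-$\theta$ concentration of \emph{all three} terms $(\E_{\hat p}-\E_p)\,{\bf s}_q\cdot{\bf f}(\cdot,\theta)$, $(\E_{\hat p}-\E_p)\,\nabla_x\cdot{\bf f}(\cdot,\theta)$ and $(\E_{\hat p}-\E_p)\,\tfrac{\lambda}{2}\|{\bf f}(\cdot,\theta)\|^2$. But Assumption \ref{assump:C3-C5}(C5) supplies a uniform law only for $\nabla_x\cdot{\bf f}(\cdot,\theta)$ and ${\bf f}(\cdot,\theta)\cdot{\bf s}_p$; (C6) gives concentration only for the \emph{fixed} functions $\|{\bf s}_q\|^2$ and $\|{\bf f}^*\|^2$ plus pointwise bounds. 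Saying ``the same Rademacher/covering scheme that justifies (C5)'' applies to the other two terms is not available, since (C5) is an assumption, not a proved result; to make your route rigorous you would have to add a covering-number argument over $\bar B_r$ (using the Lipschitz bounds in (C1)--(C2), (C6)), which is extra work the lemma's hypotheses were not designed to require, and the $\lambda$-weighted quadratic term then forces $n_\lambda\gtrsim\lambda^{2/\gamma}$ rather than the paper's $\lambda^{1/\gamma}$-type requirement. The paper sidesteps this entirely by an algebraic step you are missing: after integrating by parts only the divergence term (whose empirical--population error is exactly what (C5) controls, together with ${\bf f}\cdot{\bf s}_p$), it completes the square \emph{under the empirical measure}, writing $\E_{\hat p}\bigl(\tfrac{\lambda}{2}\|\hat{\bf f}\|^2-{\bf s}_q\cdot\hat{\bf f}+{\bf s}_p\cdot\hat{\bf f}\bigr)=\E_{\hat p}\bigl(\tfrac{\|\lambda\hat{\bf f}-{\bf f}^*\|^2}{2\lambda}-\tfrac{\|{\bf f}^*\|^2}{2\lambda}\bigr)\ge-\tfrac{1}{2\lambda}\E_{\hat p}\|{\bf f}^*\|^2$, so the only remaining stochastic quantity is $(\E_{\hat p}-\E_p)\|{\bf f}^*\|^2$ for the fixed function ${\bf f}^*$, handled by (C6). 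This is why no uniform law is needed for the ${\bf s}_q\cdot{\bf f}$ and $\|{\bf f}\|^2$ terms, and why the good event costs exactly $1-3n^{-10}$ ($2n^{-10}$ from (C5), $n^{-10}$ from (C6)). As written, your proof invokes concentration statements the assumptions do not license; either import the paper's empirical completion-of-square or supply the missing uniform-law argument explicitly.
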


Based on the lemma, we derive the comparison of $ \hat{\bf u}(x,t)$ with $ \bar{\bf u}(x,t)$ in the following proposition.


\begin{proposition}\label{prop:hatu-baru}
Under the same assumption as in Lemma \ref{lemma:hatthetat-theta0},
$\hat{\bf u}( x ,0)   = \bar{\bf u}( x ,0)  =0$. For $\lambda > 0$ and any $ t \le  ( \frac{r}{ 2\| {\bf f}^*\|_p } )^2 \lambda $,
when $n$ is sufficiently large (depending on $\lambda, t, \|{\bf f}^*\|_p, \log M_\Theta$), 
under the intersection of the good event in Lemma \ref{lemma:hatthetat-theta0}
and another good event which happens w.p.$\ge 1-4 n^{-10}$,
\begin{equation}\label{eq:bound-prop-hatu-baru}
\|  \lambda \hat{\bf u}( \cdot ,t)  - \lambda \bar{\bf u}( \cdot ,t)  \|_{p } 
 \le    C_2 
(   1 +   \lambda t  )\left(   \lambda^{1/2} t^{3/2} \|{\bf f}^*\|_p + (1+\lambda t )  \lambda t  \sqrt{\frac{ \log n + \log M_\Theta}{n}}  \right) 
 \end{equation}
 where 
 $C_2$ is an $O(1)$ constant bounded by a multiple of $(1+ b_q +  \| {\bf f}^*\|_p)$
 and the constant factor depends on $L_1, L_2, L_3, L_4$.
\end{proposition}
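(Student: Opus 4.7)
The plan is to set $w(x,t) := \hat{\bf u}(x,t) - \bar{\bf u}(x,t)$, derive an inhomogeneous linear evolution equation for $\lambda w$ whose homogeneous part is the dissipative zero-time NTK flow, decompose the driving source into a kernel-perturbation piece and a finite-sample piece, and close the bound by Duhamel's formula together with the control of $\|\hat{\theta}(t)-\theta(0)\|$ already supplied by Lemma~\ref{lemma:hatthetat-theta0}.

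First I would rewrite \eqref{eq:evolution-hatu} in a form directly comparable to \eqref{eq:evolution-baru}. Because each row of $\hat{\bf K}_t(x,\cdot)$ inherits the boundary condition in $\calF_0(p)$ from Assumption~\ref{assump:f(x,theta)-L2-F0}, integration by parts against $p$ gives
\[
\E_{x'\sim p}\nabla_{x'}\cdot \hat{\bf K}_t(x,x') = -\E_{x'\sim p}\hat{\bf K}_t(x,x')\circ {\bf s}_p(x').
\]
Combined with ${\bf f}^*={\bf s}_q-{\bf s}_p$, this turns $\E_{x'\sim p}[\hat{\bf K}_t\circ {\bf s}_q - \nabla_{x'}\cdot\hat{\bf K}_t]$ into $\E_{x'\sim p}[\hat{\bf K}_t\circ {\bf f}^*]$. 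Subtracting \eqref{eq:evolution-baru} from \eqref{eq:evolution-hatu} and adding and subtracting $\E_{x'\sim p}{\bf K}_0\circ(\lambda \hat{\bf u}-{\bf f}^*)$ yields
\[
\partial_t w(x,t) = -\lambda\E_{x'\sim p}{\bf K}_0(x,x')\circ w(x',t) + S_{\rm ker}(x,t) + S_{\rm stat}(x,t),
\]
where $S_{\rm ker}(x,t) = -\E_{x'\sim p}(\hat{\bf K}_t-{\bf K}_0)(x,x')\circ(\lambda\hat{\bf u}(x',t)-{\bf f}^*(x'))$ and $S_{\rm stat}(x,t) = -(\E_{x'\sim\hat p}-\E_{x'\sim p})[\hat{\bf K}_t\circ(\lambda\hat{\bf u}-{\bf s}_q)-\nabla_{x'}\cdot\hat{\bf K}_t]$. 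Because ${\bf K}_0\succeq 0$ as an integral operator on $L^2(p)$, the semigroup is a contraction, and Duhamel gives $\|\lambda w(\cdot,t)\|_p \le \lambda\int_0^t(\|S_{\rm ker}(\cdot,s)\|_p+\|S_{\rm stat}(\cdot,s)\|_p)\,ds$.

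For the kernel source, Lemma~\ref{lemma:hatthetat-theta0} supplies $\|\hat{\theta}(s)-\theta(0)\|\le\sqrt{s/\lambda}\,\|{\bf f}^*\|_p$ with $\hat{\theta}(s)\in B_r$, so (C2) gives $\sup_{x,x'}\|(\hat{\bf K}_s-{\bf K}_0)(x,x')\|\le 2L_1L_2\sqrt{s/\lambda}\,\|{\bf f}^*\|_p$; using (C1) and the zero initialization, $\|\lambda\hat{\bf u}(\cdot,s)-{\bf f}^*\|_p$ is bounded by a constant multiple of $(1+L_1\sqrt{\lambda s})\|{\bf f}^*\|_p$, so $\lambda\|S_{\rm ker}(\cdot,s)\|_p$ is at most a constant times $L_1L_2\sqrt{\lambda s}\,(1+\sqrt{\lambda s})\|{\bf f}^*\|_p^2$. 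Integrating from $0$ to $t$ produces the first envelope $(1+\lambda t)\lambda^{1/2}t^{3/2}\|{\bf f}^*\|_p$ in \eqref{eq:bound-prop-hatu-baru}. For the statistical source, the integrand depends on the random iterate $\hat{\theta}(s)$ through $\hat{\bf K}_s$, so one cannot invoke (C5) pointwise. Using the Lipschitz estimates (C2), (C4), the pointwise bounds in (C5), and the sub-exponential tail bounds in (C6), a standard $\epsilon$-net argument on $\overline{B}_r\subset\R^{M_\Theta}$ (net cardinality $\sim(r/\epsilon)^{M_\Theta}$, giving the $\log M_\Theta$ contribution) combined with a Bernstein union bound (giving the $\log n$ contribution) upgrades \eqref{eq:bound-C5} to the parametric rate $\sqrt{(\log n+\log M_\Theta)/n}$, uniformly over $\theta\in\overline{B}_r$, for the specific integrands $\hat{\bf K}_t\circ(\lambda\hat{\bf u}-{\bf s}_q)-\nabla_{x'}\cdot\hat{\bf K}_t$. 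Since $\|\lambda\hat{\bf u}(\cdot,s)-{\bf s}_q\|_p$ grows at most like $(1+\lambda s)(\|{\bf f}^*\|_p+b_q)$, a Cauchy--Schwarz bound followed by time integration produces the second envelope $(1+\lambda t)^2\lambda t\sqrt{(\log n+\log M_\Theta)/n}$. Collecting both contributions and pulling out the overall $(1+\lambda t)$ into the stated form gives \eqref{eq:bound-prop-hatu-baru}, with $C_2$ a constant bounded by a multiple of $(1+b_q+\|{\bf f}^*\|_p)$ and factors of $L_1,L_2,L_3,L_4$.

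The main obstacle is the uniform concentration controlling $S_{\rm stat}$: the kernel $\hat{\bf K}_t$ couples parameter randomness and sample randomness, and the integrand involves $\nabla_x\cdot\partial_\theta{\bf f}$, $\partial_\theta{\bf f}^T{\bf s}_q$, and the data-dependent function $\hat{\bf u}$ simultaneously. Achieving the parametric $\sqrt{(\log n+\log M_\Theta)/n}$ rate rather than the weaker $\tilde O(n^{-\gamma})$ of (C5) requires exploiting (C2), (C4) to discretize over $\theta$ on a net of resolution scaling with $n^{-1}$ and invoking (C5), (C6) on the net, then stitching back with the Lipschitz bounds. The residual technical work is to intersect this uniform-law event (probability $\ge 1-4n^{-10}$) with the good event of Lemma~\ref{lemma:hatthetat-theta0} (probability $\ge 1-3n^{-10}$) and verify that the sample-size threshold $n$ for all concentration bounds to activate depends on $\lambda$, $t$, $\|{\bf f}^*\|_p$, and $\log M_\Theta$ as indicated in the statement.
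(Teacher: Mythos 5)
Your overall architecture (write $\partial_t(\hat{\bf u}-\bar{\bf u})$ as a dissipative ${\bf K}_0$-flow plus two sources, bound via a contraction/Duhamel or Gr\"onwall-in-time argument) is sound, and your treatment of the kernel-perturbation source is essentially correct and parallels the paper's handling of its terms $\textcircled{2}$ and $\textcircled{4}$: Lemma \ref{lemma:hatthetat-theta0} gives $\|\hat{\theta}(s)-\theta(0)\|\le\sqrt{s/\lambda}\,\|{\bf f}^*\|_p$, (C1)--(C2) give $\sup_{x,x'}\|\hat{\bf K}_s-{\bf K}_0\|\lesssim L_1L_2\sqrt{s/\lambda}\,\|{\bf f}^*\|_p$, and the Lipschitz bound $\|\lambda\hat{\bf u}(\cdot,s)-{\bf f}^*\|_p\le(1+L_1\sqrt{\lambda s})\|{\bf f}^*\|_p$ correctly replaces the monotonicity argument that is only available for the population dynamic. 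The genuine gap is in the statistical source $S_{\rm stat}$. You keep the data-dependent objects $\hat{\bf K}_t$ and $\hat{\bf u}$ inside $(\E_{x'\sim\hat p}-\E_{x'\sim p})$ and propose to restore the parametric rate by a Bernstein union bound over an $\epsilon$-net of $\overline{B}_r\subset\R^{M_\Theta}$. But the log-cardinality of such a net is of order $M_\Theta\log(r/\epsilon)$, not $\log M_\Theta$, so this route yields a deviation of order $\sqrt{M_\Theta/n}$ up to logarithms rather than the $\sqrt{(\log n+\log M_\Theta)/n}$ required by \eqref{eq:bound-prop-hatu-baru}; and (C5), the only uniform-over-$\theta$ law assumed, only gives the slower $\tilde O(n^{-\gamma})$ rate. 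As written, the claimed second envelope does not follow.

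The paper closes this by never applying concentration to any $\theta$- or $t$-dependent random quantity. It exploits $\hat{\bf K}_0={\bf K}_0$ (data-independent, since $\theta(0)$ is independent of the samples) and the explicit integral representation \eqref{eq:expression-lambda-baruxt} of $\bar{\bf u}$ through ${\bf K}_0$, so that every $(\E_{\hat p}-\E_p)$ term reduces to fixed matrices such as $\partial_\theta{\bf f}(x,\theta(0))^T\partial_\theta{\bf f}(x,\theta(0))$, ${\bf s}_q(x)^T\partial_\theta{\bf f}(x,\theta(0))$, $\nabla_x\cdot\partial_\theta{\bf f}(x,\theta(0))$ and $\|{\bf s}_q(x)\|^2$, to which matrix Bernstein (Lemma \ref{lemma:matrix-bernstein}, dimension factor $M_\Theta$ appearing only inside the logarithm) and (C6) apply directly and uniformly in $t$; all deviations caused by $\hat{\bf K}_t-{\bf K}_0$ or $\hat{\bf u}-\bar{\bf u}$ are bounded deterministically via the parameter-distance bound. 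The remaining coupling through the unknown error itself (the analogue of your homogeneous term, but with the empirical kernel) is handled by a two-stage argument: first a bound on $\|\lambda(\hat{\bf u}-\bar{\bf u})\|_{\hat p}$ using nonnegativity of $\hat{\bf K}_t$ under the empirical measure, then feeding this into the bound for $\|\lambda(\hat{\bf u}-\bar{\bf u})\|_{p}$, which is how the extra $(1+\lambda t)$ factor arises. To repair your proof, replace the $\epsilon$-net step with this reorganization (split $S_{\rm stat}$ further so the empirical-minus-population operator hits only ${\bf K}_0$, $\nabla\cdot{\bf K}_0$, and $\lambda\bar{\bf u}-{\bf s}_q$ expressed via \eqref{eq:expression-lambda-baruxt}), and add the $\hat p$-norm bootstrapping; otherwise the stated dependence on $M_\Theta$ cannot be achieved.
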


\begin{remark}[Largeness of $n$ and the relation to $ \|{\bf f}^*\|_p$]
\label{rk:n-C-prop-finite-sample}
The largeness requirement of $n$ depends on $\lambda, t, \|{\bf f}^*\|_p, \log M_\Theta$, 
and specifically, $n > \max\{ n_\lambda, n_5, n_{\lambda,t} \}$ which are defined in \eqref{eq:def-n-lambda}\eqref{eq:def-n5}\eqref{eq:def-n-lambda-t} respectively:
$n_\lambda$ depends on $\lambda$, $ \|{\bf f}^*\|_p$ and constants $b_2$, $b_q$, $C_\calF$, $b_\calF$, and when $ \|{\bf f}^*\|_p$ is small it calls for larger $n$, specifically $ \| {\bf f}^*\|_p^2 > \lambda n^{-\gamma }$ as the leading term, up to constant;
$n_5$  depends on $\log M_\Theta$;
$n_{\lambda,t}$ depends on $\lambda t$ and $\log M_\Theta$.
The construction of the constant $C_2$ can be found in the proof.
The first term in the r.h.s. of \eqref{eq:bound-prop-hatu-baru} is the analog of \eqref{eq:y-bary} in Proposition \ref{prop:NTK-approx}(ii).
When $\| {\bf f}^*\|_p$ is small, $C_2$ stays bounded (as long as $b_q$ is bounded)
and the first term in \eqref{eq:bound-prop-hatu-baru}  is proportional to $\| {\bf f}^*\|_p$.
The second term of the order  $\tilde{O}(n^{-1/2})$ is due to the finite-sample training,
and when $\| {\bf f}^*\|_p$ is small, it suggests that larger $n$ is needed so as to make the second term balance with the first term.
\end{remark}

The main theorem for finite-sample lazy training follows by combining Propositions \ref{prop:NTK-stein} and \ref{prop:hatu-baru}.
\begin{theorem}\label{thm:NTK-stein-finite-sample}
Under Assumptions \ref{assump:p-q}-\ref{assump:C3-C5}, 
suppose the decomposition of  ${\bf f}^*$ satisfies the same condition as in Theorem \ref{thm:NTK-stein} with respect to $ 0 < \epsilon <1$,
 $\theta(0) \in B_{r/2}$, and $\hat{\bf u}( x ,0) =0$. 
For $  \lambda  > 2 (\frac{ \log(1/\epsilon)}{\delta })^{1/2} \frac{ \| {\bf f}^*\|_p }{r}  $, let
\begin{equation}\label{eq:cond-t0-thm-finite-sample}  
t = \frac{ t_0}{\lambda }  \frac{ \log(1/\epsilon)}{\delta }, 
\quad 
 1 \le {t_0}   \le  ( \frac{r}{ \| {\bf f}^*\|_p } )^2 \frac{\delta }{ 4 \log(1/\epsilon)} \lambda^2.
\end{equation}
Then, 
when $n$ is sufficiently large (depending on $\lambda, \|{\bf f}^*\|_p, \log M_\Theta$ and $t_0 \frac{ \log (1/\epsilon) }{\delta}$), 
w.p.$\ge 1-7 n^{-10}$,
\begin{equation}\label{eq:2eps-bound-thm-finite-sample}
\| \lambda \hat{\bf u}(  \cdot ,t)  - {\bf f}^* \|_{p }  
\le \left( 2 \epsilon +   \frac{ C_2 }{\lambda}  
	\kappa_1
	 \right) \| {\bf f}^* \|_{p }
+ C_2 
	\kappa_2
	\sqrt{\frac{ \log n + \log M_\Theta}{n}}, 
\end{equation}
where $C_2$ is as in Proposition \ref{prop:hatu-baru} and
$\kappa_1$, $\kappa_2$ are constant factors involving powers of $t_0 \frac{ \log (1/\epsilon) }{\delta}$.
\end{theorem}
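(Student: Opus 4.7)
The proof is a direct combination of Proposition \ref{prop:NTK-stein} (lazy dynamic approximates the optimal critic in population) and Proposition \ref{prop:hatu-baru} (finite-sample dynamic approximates the lazy dynamic) via the triangle inequality. Specifically, I would write
\[
\| \lambda \hat{\bf u}(\cdot, t) - {\bf f}^* \|_p
\le \| \lambda \hat{\bf u}(\cdot, t) - \lambda \bar{\bf u}(\cdot, t) \|_p + \| \lambda \bar{\bf u}(\cdot, t) - {\bf f}^* \|_p,
\]
where $\bar{\bf u}(x,t)$ is the zero-time NTK lazy-training trajectory defined by \eqref{eq:evolution-baru} with $\bar{\bf u}(x,0) = 0$. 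The second term is handled by Proposition \ref{prop:NTK-stein}: since the decomposition of ${\bf f}^*$ with respect to the leading eigen-modes satisfies $\| {\bf f}_2^* \|_p \le \epsilon \| {\bf f}^* \|_p$, whenever $t \ge \frac{1}{\lambda}\frac{\log(1/\epsilon)}{\delta}$ (which follows from $t_0 \ge 1$ in \eqref{eq:cond-t0-thm-finite-sample}) we obtain $\| \lambda \bar{\bf u}(\cdot, t) - {\bf f}^* \|_p \le 2\epsilon \| {\bf f}^* \|_p$, which yields the $2\epsilon \| {\bf f}^* \|_p$ contribution in \eqref{eq:2eps-bound-thm-finite-sample}.

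For the first term, I would invoke Proposition \ref{prop:hatu-baru}. The admissibility condition $t \le (\frac{r}{2\|{\bf f}^*\|_p})^2 \lambda$ must be checked from the upper bound on $t_0$ in \eqref{eq:cond-t0-thm-finite-sample}: substituting $t = \frac{t_0}{\lambda} \frac{\log(1/\epsilon)}{\delta}$ gives $t \le \frac{1}{4} (\frac{r}{\|{\bf f}^*\|_p})^2 \lambda$, which is exactly the required range. This in turn forces $\lambda > 2 (\frac{\log(1/\epsilon)}{\delta})^{1/2} \frac{\|{\bf f}^*\|_p}{r}$ so that the interval for $t_0$ is non-empty, matching the hypothesis of the theorem. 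Plugging $\lambda t = t_0 \frac{\log(1/\epsilon)}{\delta}$ and $\lambda^{1/2} t^{3/2} = \frac{1}{\lambda} (t_0 \frac{\log(1/\epsilon)}{\delta})^{3/2}$ into \eqref{eq:bound-prop-hatu-baru} yields exactly the two contributions $\frac{C_2}{\lambda} \kappa_1 \| {\bf f}^*\|_p$ and $C_2 \kappa_2 \sqrt{(\log n + \log M_\Theta)/n}$, where I would read off
\[
\kappa_1 = \Bigl( 1 + t_0 \tfrac{\log(1/\epsilon)}{\delta} \Bigr) \Bigl( t_0 \tfrac{\log(1/\epsilon)}{\delta} \Bigr)^{3/2},
\quad
\kappa_2 = \Bigl( 1 + t_0 \tfrac{\log(1/\epsilon)}{\delta} \Bigr)^{2} \Bigl( t_0 \tfrac{\log(1/\epsilon)}{\delta} \Bigr).
\]

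The last step is bookkeeping for the probability and the largeness of $n$. Proposition \ref{prop:hatu-baru} requires the intersection of two good events of probabilities at least $1 - 3n^{-10}$ and $1 - 4n^{-10}$, yielding the overall $1 - 7n^{-10}$ bound; the requirement on $n$ depends on $\lambda$, $\|{\bf f}^*\|_p$, $\log M_\Theta$ and the quantity $t_0 \frac{\log(1/\epsilon)}{\delta}$ through $\lambda t$, as explained in Remark \ref{rk:n-C-prop-finite-sample}. Since the population bound in Proposition \ref{prop:NTK-stein} is deterministic, no additional probability is lost. Honestly, there is no serious obstacle here: the work has been done inside Propositions \ref{prop:NTK-stein} and \ref{prop:hatu-baru}, and the only care needed is verifying that the same $t$ simultaneously lies above the lower bound required for lazy convergence to ${\bf f}^*$ and below the upper bound guaranteeing that $\hat{\theta}(t)$ stays in $B_r$ so the NTK comparison holds.
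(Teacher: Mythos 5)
Your proposal is correct and follows essentially the same route as the paper's proof: triangle inequality combining Proposition \ref{prop:NTK-stein} (via $t_0 \ge 1$) and Proposition \ref{prop:hatu-baru} (via the upper bound on $t_0$ guaranteeing $t \le (\frac{r}{2\|{\bf f}^*\|_p})^2\lambda$), with the identical $\kappa_1,\kappa_2$ and the same $1-7n^{-10}$ probability accounting.
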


The result in Theorems \ref{thm:NTK-stein} and \ref{thm:NTK-stein-finite-sample} suggests that by using a larger $\lambda$ at the beginning of the training process, the training can achieve the NTK kernel learning solution more rapidly.
In practice, staying with large $\lambda$ too long would lead to the worsening of the model, 
and we propose the gradual annealing scheme of $\lambda$ as described in Section \ref{subsec:staged}
so as to combine the benefits of both large $\lambda$ in the beginning and small $\lambda$ in the later phases of training. 
The theoretical benefit of using large $\lambda$ at the beginning phase of training  is supported  by experiments in Section \ref{sec:exp}.

\section{{Applications to testing and model evaluation}}

\subsection{Goodness-of-Fit (GoF) testing}\label{sec:gof_test}

In a GoF test, we are given $n_{\rm sample}$ data samples $x_i \sim p$ and a model distribution $q$, 
and assume we can sample from $q$ as well as access its score function ${\bf s}_q$.
To apply the neural Stein test, we conduct a training-test split of the samples, where the two splits have $n_{\rm tr}$ and $n_{\rm GoF}$ samples respectively, and $n_{\rm sample} = n_{\rm tr} + n_{\rm GoF}$.
We first train a neural Stein critic ${\bf f}(x, \theta)$ from the training split $\{ x_i^{\rm tr} \}_{i=1}^{n_{\rm tr}}$,
and then we compute the following test statistic on the test split $\{ x_i \}_{i=1}^{n_{\rm GoF}}$
\begin{equation}
    \hat{T} = \frac{1}{n_{\rm GoF}} \sum_{i=1}^{n_{\rm GoF}} T_q {\bf f}(x_i, \theta),
    \label{eq:test_stat}
\end{equation}
which can be viewed as a sample-average estimator of ${\rm SD}[ {\bf f} (\cdot, \theta) ]$ as defined in \eqref{eq:sd_operator}. 

To assess the null hypothesis as in \eqref{eq:hypotheses}, we will adopt a bootstrap strategy to compute the test threshold $t_{\rm thresh}$ by drawing samples from $q$, to be detailed in Section \ref{sec:bootstrap}. 
We also derive GoF test consistency analysis in Section \ref{sec:consistency_analysis}.

\subsubsection{Bootstrap strategy to compute test threshold}\label{sec:bootstrap}

The bootstrap strategy draws independent samples $y_i \sim q$ to simulate the distribution of the test statistic under  $H_0$.
We denote the test statistic as  $\hat{T}_{\rm null}$, which can be computed from a set of samples $\{y_i\}_{i=1}^{n_{\rm GoF}}$ as $\hat{T}_{\rm null} = \frac{1}{n_{\rm GoF}} \sum_{i=1}^{n_{\rm GoF}} T_q {\bf f}(y_i,\theta)$.
We will set $t_{\rm thresh}$ as the $(1-\alpha)$ quantile of the distribution of $\hat{T}_{\rm null}$.
To simulate the distribution, one can compute $ \hat{T}_{\rm null} $ in  $n_{\rm boot}$ independent replicas, and then set $t_{\rm thresh}$ to be the quantile of the empirical distribution. 
This means that the $\hat{T}_{\rm null}$ in each replica is computed from a ``fresh'' set of  $n_{\rm GoF}$ samples from $q$. 
We call these independent copies of $\hat{T}_{\rm null}$ the ``fresh null statistics''. 
Note that this would require evaluating the trained neural network on $n_{\rm boot} n_{\rm GoF}$ many samples $y_i$ in total,
which can be significant since $n_{\rm boot} $ is usually a few hundred (we use $n_{\rm boot}=500$ in all experiments). 
To accelerate computation, 
we propose an ``efficient'' bootstrap procedure, which begins by drawing $n_{\rm pool}$ samples from $q$, $n_{\rm pool}=r_{\rm pool}\cdot n_{\rm GoF}$.
The trained network is evaluated on the  $n_{\rm pool}$ samples of $y_i$ to obtain the values of $ T_q {\bf f}(y_i, \theta) $,
and then we compute $n_{\rm boot} $ many times of the $n_{\rm GoF}$-sample  average by drawing from the pool with replacement.
We call the set of values of $\hat{T}_{\rm null}$ computed this way the ``efficient null statistics''. 
 We observed that setting $r_{\rm pool}=50$ is usually sufficient to render a null statistic distribution that resembles that of the fresh statistics. 
With $n_{\rm boot}=500$, this yields about ten times  speedup in the computation of the bootstrap. 

To illustrate the validity of the bootstrap strategy, we apply the method to a critic trained on 1D Gaussian mixture data as described in Appendix~\ref{sec:1d_experiment}. 
In this case, a partially trained critic, displayed in Figure~\ref{fig:GoF_consistency}(\textbf{A}), is used to compute the test statistics. 
We set $n_{\rm GoF}=100$, and to better illustrate the empirical distribution of the test statistics we use $n_{\rm boot}=10,000$ replicas. 
Three  sets of 10,000 test statistics are computed: 
1) using $n_{\rm GoF}$ fresh samples from $p$ to compute each $\hat{T}$, 
2) using $n_{\rm GoF}$ fresh samples from $q$ to compute $\hat{T}_{\rm null}$, 
and 3) efficient bootstrap null statistics computed from a pre-generated pool with $r_{\rm pool}=50$. 
The empirical distributions are visualized by histograms  in Figure~\ref{fig:GoF_consistency}(\textbf{B}). 
The plot shows a clear disparity between the distribution of test statistics under $H_1$ and the two distributions of test statistics under $H_0$,
and  the validity of the efficient bootstrap procedure is demonstrated by the similarity of the histograms of the fresh and efficient null statistics.

\begin{figure}[t]
    \centering
    \includegraphics[width=\textwidth]{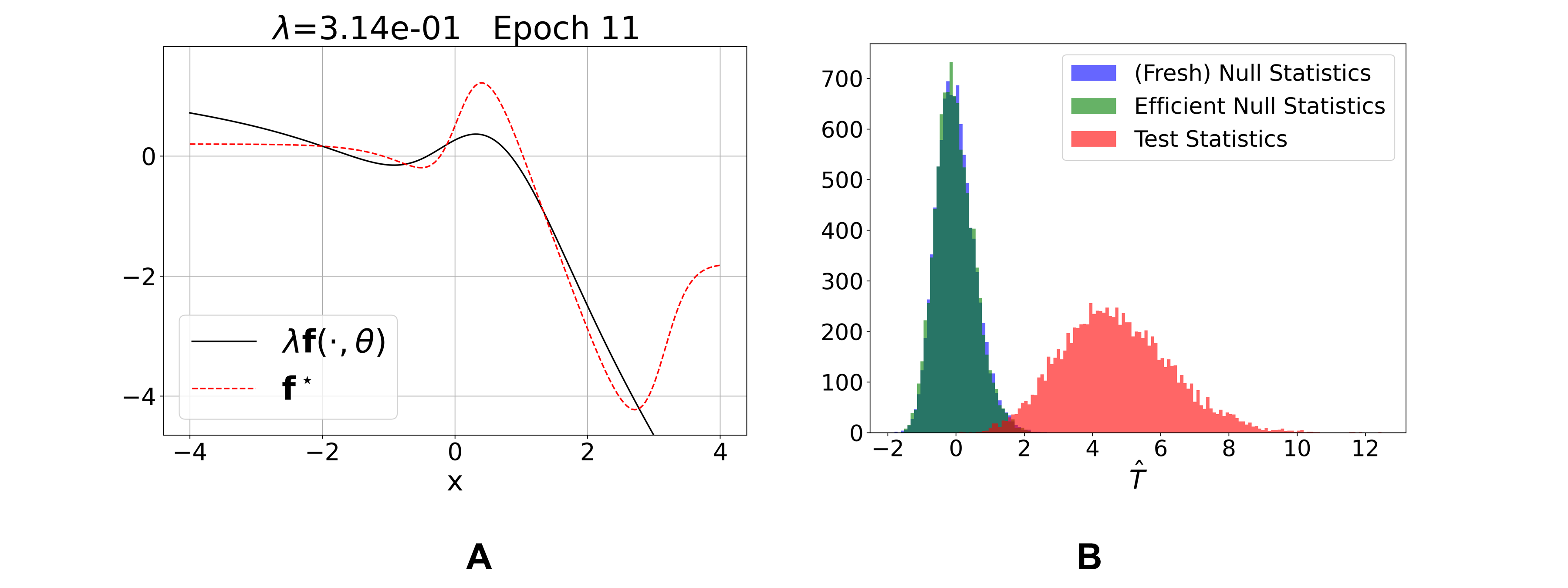}
    \vspace{-20pt}
    \caption{
    In (\textbf{A}), a scaleless neural Stein critic function $\lambda{\bf f}(\cdot,\theta)$ which is trained for a few epochs is compared to the scaleless optimal critic ${\bf f}^*$. 
    This trained neural critic may be further improved by more training time, as shown in Figure \ref{fig:1d_critic}.  
   (\textbf{B}) The trained critic, though not fully fitting the ${\bf f}^*$, still provides a GoF test with significant power:  
   the distribution of the $\hat{T}_{\rm null}$ under $H_0$ is centered around zero,
   while the distribution of $\hat{T}$ under $H_1$ has a non-zero mean around 5, which is significantly beyond the right 95\% quantile of the $\hat{T}_{\rm null}$ distribution. 
   The distribution of $\hat{T}_{\rm null}$ (blue) and that approximately computed from the efficient bootstrap (green) are close. 
 See Section~\ref{sec:bootstrap} for the details of the test statistic computation.
The details of the 1D Gaussian mixture distributions are given in Appendix~\ref{sec:1d_experiment}.
    }\label{fig:GoF_consistency}
\end{figure}

\subsubsection{{Theoretical guarantee of test power}}\label{sec:consistency_analysis}

Let the trained neural Stein critic be $\hat{\bf f}(x):= {\bf f}(x, \hat{\theta}(t))$.
We first observe the asymptotic normality of the test statistics $\hat{T}$ and $\hat{T}_{\rm null}$:
by definition, they are sample averages over i.i.d. test samples $x_i \sim p$ ($y_i \sim q$) independent from the training split.
Thus, conditioning on training from $n_{\rm tr}$ samples,
 the test statistic is an independent sum averaged over i.i.d. random variables
\[
\xi_i: = T_q \hat{\bf f}(x_i) = {\bf s}_q \cdot \hat{\bf f}(x_i) + \nabla \cdot \hat{\bf f}(x_i), \quad i=1, \cdots, n_{\rm GoF}.
\] 
By Assumption \ref{assump:C3-C5}(C5)(C6) (and that $\hat{\theta}(t) \in B_r$ by Lemma \ref{lemma:hatthetat-theta0}), 
$\xi_i$ are uniformly bounded (either evaluated on $x_i$ or $y_i$ in $\calX$).
Thus, by Central Limit Theorem,  as $n_{\rm GoF} \to \infty$, 

\begin{itemize}

\item
$\sqrt{n_{\rm GoF}} \hat{T}$ converges in distribution to $\calN ( \mu_1, \sigma_1^2)$
where $\mu_1 = \E_{x \sim p} T_q \hat{\bf f}(x)$, and $ \sigma_1^2 = {\rm Var}_{x \sim p} ( T_q \hat{\bf f}(x) )$,

\item
$\sqrt{n_{\rm GoF}} \hat{T}_{\rm null}$ converges in distribution to $\calN ( \mu_0, \sigma_0^2)$
where $\mu_0 = \E_{y \sim q} T_q \hat{\bf f}(y) = 0$, and $ \sigma_0^2 = {\rm Var}_{y \sim q} ( T_q \hat{\bf f}(y) )$.
\end{itemize}
The closeness to normal density is typically observed when $n_{\rm GoF}$ is a few hundred, see Figure~\ref{fig:GoF_consistency}(\textbf{B}) where $n_{\rm GoF}=100$.
The uniform boundedness of $\xi_i$ implies that $ \sigma_1^2$ and $ \sigma_0^2$ are at most $O(1)$ constants. 
This indicates that as long as the training obtains a critic $\hat{\bf f}(x)$ that makes $\mu_1 = \E_{x \sim p} T_q \hat{\bf f}(x) > 0$, 
then the GoF test can successfully reject the null when $n_{\rm GoF}$ is large enough.

We also derive the finite-sample test power guarantee of the neural Stein GoF test in the following corollary by incorporating the learning guarantee in Section \ref{sec:lazy_training},
and the proof is left to Section \ref{sec:proofs}.

\begin{corollary}\label{cor:GoF-test-power}
Consider a significance level $\alpha$ and target Type-II error $\beta$, $0 < \alpha, \beta < 1$.
Suppose the conditions in Theorem \ref{thm:NTK-stein-finite-sample} are satisfied 
with $\epsilon <1/2$, $\lambda > 0$ as required and $ 2\epsilon + C_2 \kappa_1 \lambda^{-1} < 0.8$,
and $n_{\rm tr}$ is large enough for the good event (call it $E_{\rm tr}$) to hold, 
$ C_2 \kappa_2
	\sqrt{{( \log n_{\rm tr} + \log M_\Theta)}/{n_{\rm tr}}} < 0.1 \| {\bf f}^*\|_p$,
and  $7n_{\rm tr}^{-10} < 0.1 \min\{\alpha,  \beta\}$. 
Then the GoF test using $\hat{\bf f}(x)= {\bf f}(x, \hat{\theta}(t))$ achieves a significance level $\alpha$ and a test power at least $1- \beta$,
 if 
\[
\sqrt{2} (b_{(0)} b_q + b_{(1)} )
\left( 1+ \sqrt{\log \frac{1}{\alpha}} + \sqrt{\log \frac{1}{\beta}} \right) 
\lambda n_{\rm GoF}^{-1/2} 
<  0.1 \| {\bf f}^*\|_p^2.
\]
In particular, the test power $\to 1$ as $n_{\rm tr},  n_{\rm GoF} \to \infty$.
\end{corollary}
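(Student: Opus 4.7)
The plan is to reduce the GoF test to a concentration argument for bounded sample averages, using the learning guarantee of Theorem~\ref{thm:NTK-stein-finite-sample} to produce a nontrivial mean shift under $H_1$. I would first condition on the good training event $E_{\rm tr}$ from Theorem~\ref{thm:NTK-stein-finite-sample}, which holds with probability $\ge 1-7 n_{\rm tr}^{-10}$, together with the assumed bound $C_2\kappa_2 \sqrt{(\log n_{\rm tr} + \log M_\Theta)/n_{\rm tr}} < 0.1\|{\bf f}^*\|_p$, so that on $E_{\rm tr}$ the bound \eqref{eq:2eps-bound-thm-finite-sample} combined with $2\epsilon + C_2\kappa_1 \lambda^{-1} < 0.8$ gives $\|\lambda \hat{\bf f} - {\bf f}^*\|_p \le 0.9 \|{\bf f}^*\|_p$. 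Throughout what follows, I would work conditionally on $E_{\rm tr}$ and on the training sample, so that $\hat{\bf f}$ is treated as a fixed function independent of the test data.

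Next, I would lower-bound $\mu_1 = \E_{x\sim p} T_q \hat{\bf f}(x)$. By \eqref{eq:SD-as-innerproduct},
\begin{equation*}
\mu_1 = \langle {\bf f}^*, \hat{\bf f}\rangle_p = \frac{1}{\lambda}\bigl(\|{\bf f}^*\|_p^2 + \langle {\bf f}^*, \lambda\hat{\bf f} - {\bf f}^*\rangle_p\bigr) \ge \frac{1}{\lambda}\bigl(\|{\bf f}^*\|_p^2 - \|{\bf f}^*\|_p \cdot 0.9\|{\bf f}^*\|_p\bigr) = \frac{0.1\,\|{\bf f}^*\|_p^2}{\lambda},
\end{equation*}
by Cauchy--Schwarz. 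Under $H_0$ we have $\mu_0 = \E_{y\sim q} T_q \hat{\bf f}(y) = 0$ by Stein's identity (Assumption~\ref{assump:p-q} ensures the boundary condition). By Assumption~\ref{assump:C3-C5}(C5)(C6), on $E_{\rm tr}$ we have $\hat\theta(t) \in B_r$ and so $|T_q \hat{\bf f}(x)| \le b_{(0)}b_q + b_{(1)} =: M$ uniformly on $\calX$, making $\xi_i := T_q\hat{\bf f}(x_i)$ (resp.\ $T_q\hat{\bf f}(y_i)$) bounded i.i.d.\ random variables once we condition on the training data.

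I would then invoke Hoeffding's inequality on both tails. For Type-I error, setting $t_{\rm thresh} = M\sqrt{2\log(1/\alpha')/n_{\rm GoF}}$ with $\alpha' = 0.9\alpha$ gives $\Pr_{H_0}[\hat{T} > t_{\rm thresh}\mid E_{\rm tr}] \le \alpha'$; combining with $\Pr[E_{\rm tr}^c] \le 7 n_{\rm tr}^{-10} < 0.1\alpha$ yields total Type-I error at most $\alpha$. For Type-II error, another application of Hoeffding gives $\Pr_{H_1}[\hat{T} \le t_{\rm thresh}\mid E_{\rm tr}] \le \exp(-n_{\rm GoF}(\mu_1 - t_{\rm thresh})^2/(2M^2))$, which is bounded by $\beta' = 0.9\beta$ provided $\mu_1 - t_{\rm thresh} \ge M\sqrt{2\log(1/\beta')/n_{\rm GoF}}$. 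Plugging in the lower bound on $\mu_1$ and using $\sqrt{\log(1/0.9\alpha)} \le \sqrt{\log(1/\alpha)} + 1$ (and likewise for $\beta$) to absorb the $0.9$-factors into the $1+\cdots$ term, I arrive at the sufficient condition $\sqrt{2}(b_{(0)}b_q + b_{(1)})(1 + \sqrt{\log(1/\alpha)} + \sqrt{\log(1/\beta)})\lambda n_{\rm GoF}^{-1/2} < 0.1\|{\bf f}^*\|_p^2$ stated in the corollary, and the union bound with $E_{\rm tr}^c$ gives Type-II error $\le \beta$. Consistency as $n_{\rm tr}, n_{\rm GoF} \to \infty$ follows by choosing $\epsilon$ and $\lambda$ appropriately.

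The main obstacle is not a technical one but a bookkeeping issue: the corollary uses the bootstrap threshold, while the cleanest proof uses a Hoeffding-based analytic threshold. I would resolve this by observing that the corollary only asserts \emph{existence} of a test with the stated significance and power (both $\alpha$ and $\beta$ can be attained by the Hoeffding threshold), and that the bootstrap-based threshold converges to the true $(1-\alpha)$-quantile under $H_0$ as $n_{\rm boot}, n_{\rm pool} \to \infty$ by the Glivenko--Cantelli theorem; a more refined finite-sample comparison of the two thresholds is routine given the boundedness of $\xi_i$ but would add constants without changing the qualitative conclusion.
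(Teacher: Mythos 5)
Your proposal follows essentially the same route as the paper's proof: condition on $E_{\rm tr}$, use Theorem \ref{thm:NTK-stein-finite-sample} with \eqref{eq:SD-as-innerproduct} and Cauchy--Schwarz to get $\E_{x\sim p} T_q\hat{\bf f}(x) > 0.1\|{\bf f}^*\|_p^2/\lambda$, invoke the uniform bound $b_{(0)}b_q+b_{(1)}$ from (C5)(C6) and Lemma \ref{lemma:hatthetat-theta0}, apply Hoeffding to $\hat{T}$ and $\hat{T}_{\rm null}$ with an analytic threshold (exactly what the paper does, setting $t_2$ as the threshold), and union-bound with $7n_{\rm tr}^{-10}<0.1\min\{\alpha,\beta\}$. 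One small bookkeeping point: to recover the stated factor $1+\sqrt{\log(1/\alpha)}+\sqrt{\log(1/\beta)}$ you should use $\sqrt{\log\frac{1}{0.9\alpha}}\le \sqrt{\log\frac{1}{\alpha}}+0.5$ (since $\sqrt{\log(1/0.9)}<0.5$), as the paper does, rather than the cruder $+1$ per term, which would only yield the condition with $2+\cdots$.
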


\subsection{Evaluation of EBM generative models}\label{sec:eval_generative_models}


We consider the application of the Stein discrepancy in evaluating generative models. 
The model evaluation problem is to detect how the model density $q$ (by a given generative model) is different from the unknown data density $p$. While this can be formulated as a GoF testing, in machine learning applications it is of interest where and how much the two densities differ rather than rejecting or accepting the null hypothesis. To this end, we note that the trained Stein critic can be used as an indicator to reveal where $p$ and $q$ locally differ.

In the case of EBMs, the model probability density  takes the form as 
$q(x) = {\exp(-E_\phi(x))}/{Z}$,
where  $Z$ is the normalizing constant 
and the real-valued energy function $E_\phi(x)$ is parameterized by $\phi$, which is another model such as a neural network (so the set of parameters $\phi$ differs from the neural Stein critic parametrization $\theta$). 
The score function of $q$, therefore, equals the gradient of the energy function, i.e.,
\begin{equation}
    {\bf s}_q = -\nabla E_\phi(x),
    \label{eq:ebm_score}
\end{equation}
which can be computed from the parameterized form of $E_\phi(x)$. In the case that the energy function is represented by a deep generative neural network, the gradient \eqref{eq:ebm_score} can be computed by back-propagation, which is compatible with the auto-differentiation implementation of widely-used deep network platforms. In practice, the evaluation metric of a trained EBM can be computed on a holdout validation dataset.

Below we give the expression of the score function for Gaussian-Bernoulli Restricted Boltzmann Machines (RBMs)~\cite{grathwohl2020learning}, 
which is a specific type of EBM. 
The energy of an RBM with latent Bernoulli variable $h$ is defined as
$    E(x, h|B,b,c) = -\frac{1}{2}x^\text{T}Bh - b^\text{T}x - c^\text{T}h + \frac{1}{2} \|x\|^2$.
Therefore, the score function has the expression 
  $  {\bf s}_q(x) = b - x + B\cdot {\rm tanh}(B^\text{T}x + c)$.
In Section \ref{sec:mnist}, we evaluate Gaussian-Bernoulli RBMs using a Stein discrepancy test computed via neural Stein critic functions, 
and we also compare {neural Stein critics trained} using different regularization strategies.

\section{Experiment}\label{sec:exp}

In this section, we present numerical experiments applying the proposed neural Stein method on differentiating a data distribution $p$ (from which we have access to a set of data samples) and a model distribution $q$ (of which the score function is assumed to be known). We compare the proposed neural Stein method with staged regularization
to that with fixed $L^2$ regularization, as well as to a kernel Stein method previously developed in the literature.

In Sections~\ref{sec:gm_sim} and \ref{sec:compare_to_ksd}, we consider a set of Gaussian mixture models for both $p$ and $q$. 
In Section~\ref{sec:mnist}, the data are sampled from the MNIST handwritten digits dataset \cite{lecun2010mnist}, and the model distribution is a Gaussian-Bernoulli RBM neural network model. Codes to reproduce the results in this section can be found at the following repository: \url{https://github.com/mrepasky3/Staged_L2_Neural_Stein_Critics}.

\subsection{Gaussian mixture data}\label{sec:gm_sim}

\subsubsection{Simulated datasets}\label{sec:gm_sim_data}

We compare the performance of a variety of neural Stein critics in the scenario in which the distributions are bimodal, $d-$dimensional Gaussian mixtures, following an example studied in \cite{liu2020learning}. The model $q$ has equally-weighted components with means $\mu_1=\mathbf{0}_{d}$ and $\mu_2=0.5\times\mathbf{1}_{d}$, both having identity covariance. The data distribution $p$ has the following form:
\begin{equation}
    p= \frac{1}{2} \mathcal{N}\left(\mu_1,\begin{bmatrix}
    1 & \rho_1 & \mathbf{0}_{d-2}^\text{T} \\
    \rho_1 & 1 & \mathbf{0}_{d-2}^\text{T} \\
    \mathbf{0}_{d-2} & \mathbf{0}_{d-2} & \mathbf{I}_{d-2}
    \end{bmatrix}\right) + \frac{1}{2} \mathcal{N}\left(\mu_2, \begin{bmatrix}
    \omega^2 & \omega\rho_2 & \mathbf{0}_{d-2}^\text{T} \\
    \omega\rho_2 & 1 & \mathbf{0}_{d-2}^\text{T} \\
    \mathbf{0}_{d-2} & \mathbf{0}_{d-2} & \mathbf{I}_{d-2}
    \end{bmatrix} \right),
    \label{eq:data_distribution_gm}
\end{equation}
where $\rho_1 = -\rho_2$ represents the covariance shift with respect to the model distribution. We also introduce a parameter $\omega$, which scales the covariance matrix of the second component of the mixture. We examine this scenario in three settings of increasingly high dimensions. In each setting of this section, we fix the parameters $\rho_1=0.5$ and $\omega=0.8$. 

\subsubsection{Experimental setup}\label{sec:gm_sim_setup}

\paragraph{Neural network training.}
All neural Stein critics trained  are two-hidden-layer MLPs with Swish activation~\cite{ramachandran2017searching}, where each hidden layer includes 512 hidden units. 
The weights of the linear layers of the models are initialized using the standard PyTorch weight initialization, and the biases of the linear layers are initialized as zero. 
The Adam optimizer (using the default momentum parameters by PyTorch) was used for network optimization. 
The critics are trained using $n_{\rm tr}=$2,000 samples from the data distribution $p$; the learning rate is fixed at $10^{-3}$, and the batch size is 200 samples. 
All models are trained for 60 epochs. 
To compute the divergence $\nabla \cdot {\bf f}(x,\theta)$ in dimension greater than two, we use Hutchinson's unbiased estimator of the trace Jacobian \cite{hutchinson1989stochastic} following \cite{grathwohl2020learning}.
{In these experiments, the staging of $\lambda$ according to the staged regularization strategies occurs every $B_w=10$ batches, which is equivalent to the end of every epoch.} For each choice of regularization strategy, we train 10 neural Stein critic network replicas.
We compare the neural Stein critics trained using fixed-$\lambda$ regularization vs. staged regularization schemes.

\paragraph{Computation of MSE.}
Having knowledge of the score functions of both $p$ and $q$, we are able to compute the $\widehat{\rm MSE}_q$ \eqref{eq:mse} using {$n_{\rm te}=20,000$} samples from the model $q$. Over the course of training, we also computed {$\widehat{\rm MSE}^{(m)}_p$ } \eqref{eq:validation_mse} using $n_{\rm val}=1,000$ samples from the data distribution $p$. This {monitor} is used to select the ``best" model over the course of training, where the model with the lowest {$\widehat{\rm MSE}^{(m)}_p$ } value is selected. 

\paragraph{{Computation of test power}.}
After a trained neural critic is obtained, we perform the GoF hypothesis testing as described in Section~\ref{sec:gof_test}, including computing the test threshold $t_{\rm thresh}$ by the efficient bootstrap strategy.
We set the significance level $\alpha = 0.05$,
the number of bootstrap $n_{\rm boot}=500$,
and the efficient bootstrap ratio number $r_{\rm pool}=50$.
We use $n_{\rm GoF}$ specific for each data distribution example, see below.

By definition, the test power is the probability at which the null hypothesis $H_0:p=q$ is correctly rejected.
For each trained neural critic from a given training split of $p$ samples, we estimate the test power of this critic empirically by conducting $n_{\rm run}$ times of the GoF tests 
(including independent realizations of the test split of $p$ samples and computing the $t_{\rm thresh}$ by bootstrap) 
and counting the frequency when the null hypothesis is rejected. 
Because the training also contributes to the randomness of the quality of the GoF test, we repeat the procedure for $n_{\rm replica}$ replicas (including training the model on an independent realization of the training split in each replica), and report the mean and standard deviation of the estimated test power from the replicas. 
We use $n_{\rm run} =500$ and  $n_{\rm replica} = 10$  in our experiments.

\paragraph{Setup per example.}
We introduce the specifics of the experimental setup for Gaussian mixtures in 2D, 10D, and 25D below.  
 
\begin{itemize}
\item[(a)] {\it 2-dimensional mixture.} The selected fixed-$\lambda$ regularization strategies in 2 dimensions are $\lambda\in\{1\times 10^{-3}, 1\times 10^{-2}, 1\times 10^{-1}, 1\times 10^0\}$. We compare these fixed schemes to a few staged regularization strategies. Using the notation of Equation \eqref{eq:lam_staging}, these are the $\Lambda(1\times 10^0, 5\times 10^{-2}, 0.95)$ and $\Lambda(1\times 10^0, 5\times 10^{-2}, 0.90)$ staged regularization schemes. In 2D, the number of test samples for the GoF testing is $n_{\rm GoF}=75$.

\item[(b)] {\it 10-dimensional mixture.} For this setting, we examine fixed $\lambda\in\{2.5\times 10^{-4}, 1\times 10^{-3}, 4\times 10^{-3}, 1.6\times 10^{-2}, 6.4\times 10^{-2}, 2.56\times 10^{-1}, 1.024\times 10^0\}$. We analyze the staging schemes $\Lambda(5\times 10^{-1}, 1\times 10^{-3}, 0.80)$ and $\Lambda(5\times 10^{-1}, 1\times 10^{-3}, 0.85)$. For the GoF power analysis, we use $n_{\rm GoF}=200$ test samples from the data distribution $p$.

\item[(c)] {\it 25-dimensional mixture.} The fixed regularization weights are selected as $\lambda\in\{2.5\times 10^{-4}, 1\times 10^{-3}, 4\times 10^{-3}, 1.6\times 10^{-2}, 6.4\times 10^{-2}\}$. The staging schemes analyzed in this case are $\Lambda(4\times 10^{-1}, 5\times 10^{-4}, 0.80)$, $\Lambda(4\times 10^{-1}, 5\times 10^{-4}, 0.85)$, and $\Lambda(4\times 10^{-1}, 5\times 10^{-4}, 0.90)$. We use $n_{\rm GoF}=500$ test samples from the data distribution $p$ for the GoF tests in 25D.
\end{itemize}

\begin{figure}[t]
    \centering
    \includegraphics[width=\textwidth]{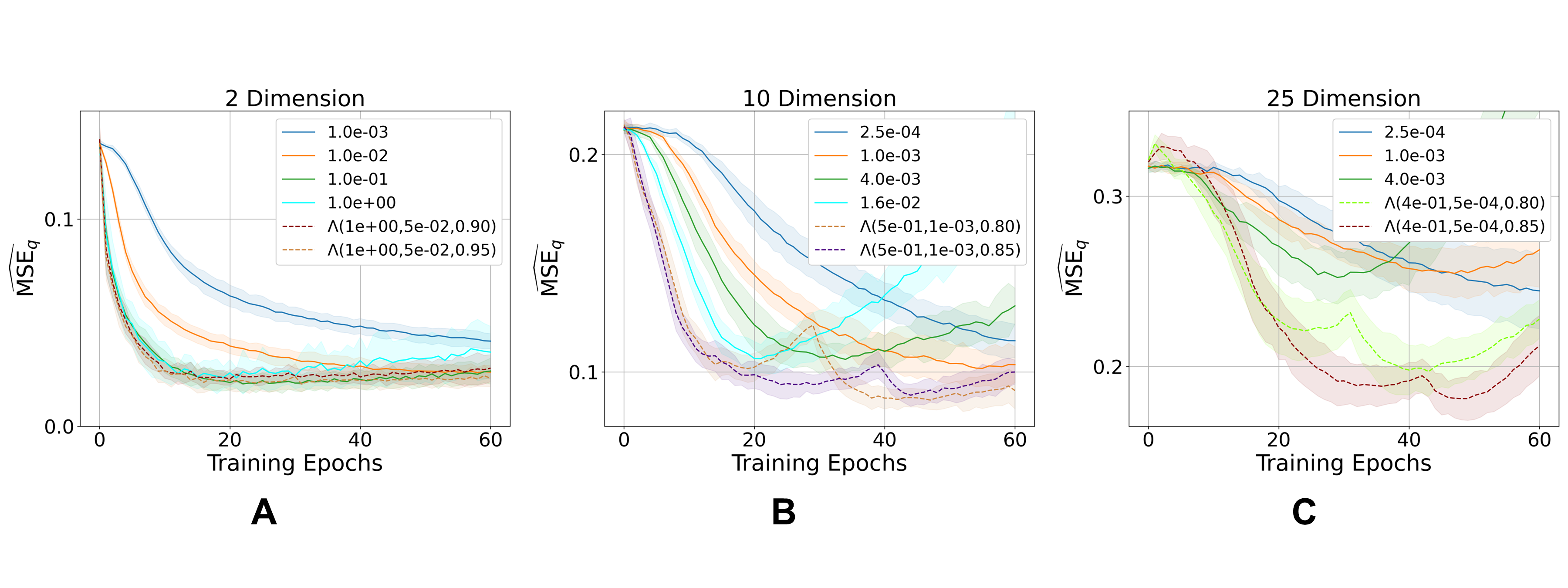}
    \vspace{-30pt}
    \caption{
    $\widehat{\rm MSE}_q$ results for the Gaussian mixture model settings in 2 dimensions (\textbf{A}), in 10 dimensions (\textbf{B}), and in 25 dimensions (\textbf{C}). In each case, 10 models are trained for 60 epochs using each regularization strategy; the mean and standard deviation are visualized over these 10 models. The legends indicate the value of $\lambda$ corresponding to each curve. We use $\Lambda(1\times10^{0},5\times10^{-2},\cdot)$ staging in 2D, $\Lambda(5\times10^{-1},1\times10^{-3},\cdot)$ in 10D, and $\Lambda(4\times10^{-1},5\times10^{-4},\cdot)$ in 25D. The advantage of staged regularization becomes more pronounced with increasing dimension.}\label{fig:GM_MSE}
\end{figure}

\begin{figure}[t]
    \centering
    \includegraphics[width=\textwidth]{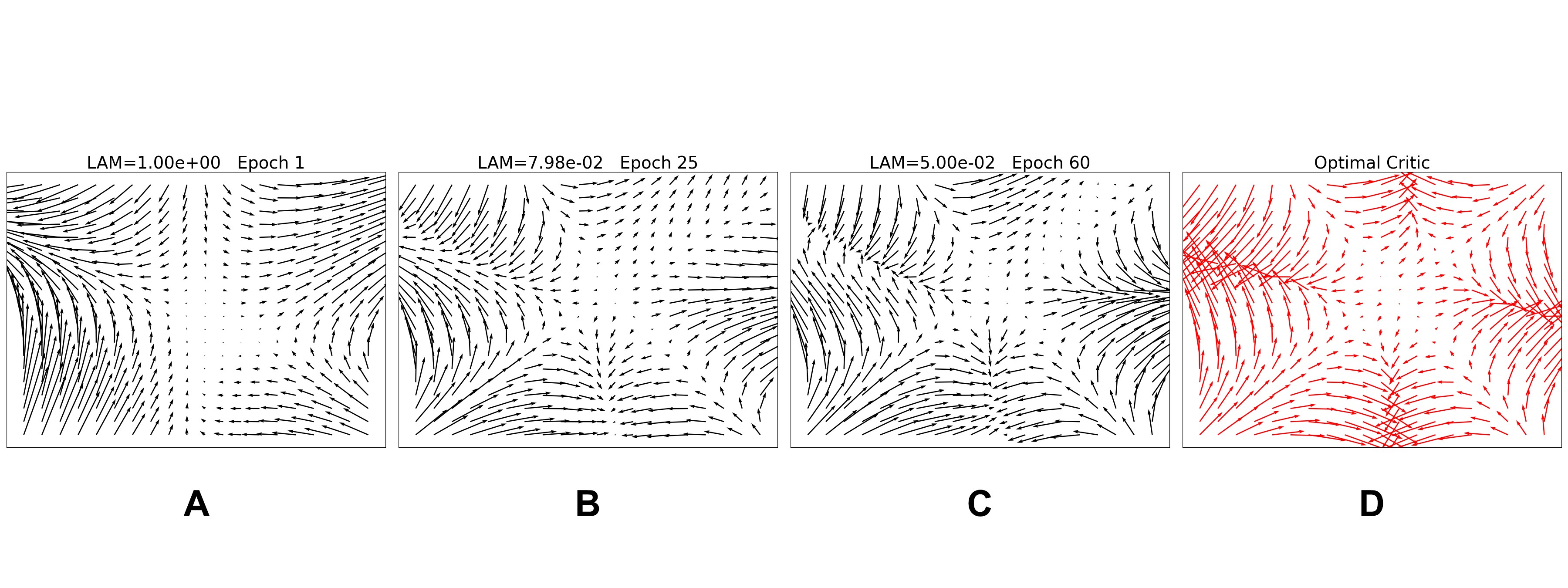}
    \vspace{-30pt}
    \caption{
    A visualization of the scaleless {neural} Stein critic {$\lambda{\bf f}(\cdot,\theta)$} throughout training, where we use the $\Lambda(1\times 10^0, 5\times 10^{-2}, 0.90)$ staging strategy on the 2-dimensional Gaussian mixture data. In (\textbf{A}), the critic quickly approximates some regions in the first epoch, whereas in (\textbf{B}) and (\textbf{C}) the critic makes smaller adjustments throughout the domain, creating an approximation of the scaleless optimal critic function {${\bf f}^*$} \eqref{eq:def-fstar} pictured in (\textbf{D}).}\label{fig:2d_critic}
\end{figure}

\begin{figure}[t]
    \centering
    \includegraphics[width=\textwidth]{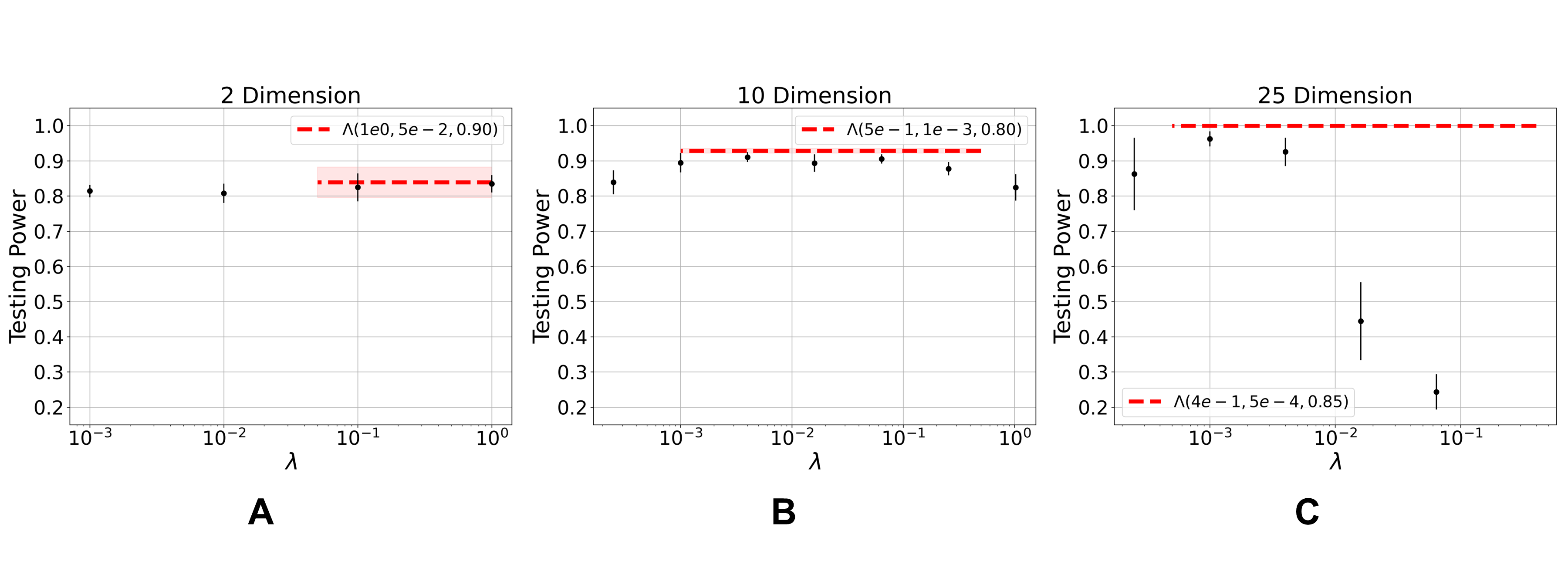}
    \vspace{-30pt}
    \caption{
    Testing power results for the Gaussian mixture model settings in 2 dimensions (\textbf{A}), in 10 dimensions (\textbf{B}), and in 25 dimensions (\textbf{C}). The mean power for fixed-$\lambda$ settings is plotted with standard deviation error bars, and the highest-average-power staged setting is plotted as a (mean) line with shaded standard deviation in each setting, where the dashed line spans the range of $\lambda$ for each staging strategy.}\label{fig:GM_power}
\end{figure}

\subsubsection{Results}\label{sec:gm_sim_result}

\textit{2-dimensional mixture.} The value of $\widehat{\rm MSE}_q$ computed via Equation \eqref{eq:mse} throughout training for all 2D critics can be seen in Figure \ref{fig:GM_MSE}(\textbf{A}), where the mean and standard deviation are plotted across the 10 networks for each staging strategy. We begin by examining the training behavior for fixed regularization weight $\lambda$. Rapid descent in $\widehat{\rm MSE}_q$ is observed for large $\lambda$ early in training while small $\lambda$ make slower and steadier progress. The staged regularization strategies exploit this rapid descent at early times, followed by steady late-stage training. However, note that the advantage of staging in 2 dimensions only yields a marginal benefit in MSE. In Figure \ref{fig:2d_critic}, we visualize the critic vector field plotted through training for the $\Lambda(1\times 10^0, 5\times 10^{-2}, 0.90)$ staging scheme. Rapid fitting of some regions can be seen at the beginning of training in Figure \ref{fig:2d_critic}(\textbf{A}), followed by more nuanced adjustments in Figures \ref{fig:2d_critic}(\textbf{B}) and \ref{fig:2d_critic}(\textbf{C}), resulting in a good fit to the theoretically optimal critic function in Figure \ref{fig:2d_critic}(\textbf{D}). Furthermore, consider the test power results displayed in Figure \ref{fig:GM_power}(\textbf{A}). While the staged regularization cases have among the lowest $\widehat{\rm MSE}_q$ values, {similar} testing power is achieved by the fixed-$\lambda$ settings. For more detail regarding the 2-dimensional experiment, see Table \ref{tab:2D_experiment} in Appendix \ref{app:supp_tables}, which shows the average GoF hypothesis testing power at the approximate ``best" training epoch as chosen by finding the average lowest {monitor {$\widehat{\rm MSE}_p^{(m)}$}} calculated using Equation \eqref{eq:validation_mse}. The table also displays the average $\widehat{\rm MSE}_q$ as calculated by Equation \eqref{eq:mse} at the ``best" epoch for each regularization scheme.
\vspace{5pt}

\noindent
\textit{10-dimensional mixture.} As in 2D, the $\widehat{\rm MSE}_q$ is plotted for critics over the course of training in Figure~\ref{fig:GM_MSE}(\textbf{B}). {The curves corresponding to some fixed $\lambda$ strategies, such as $\lambda=6.4\times10^{-2}$, $2.56\times10^{-1}$, and $1.024\times10^0$ are omitted since they quickly diverge in $\widehat{\rm MSE}_q$.} In higher dimensions, the difference in using a wider range of regularization weights $\lambda$ is more pronounced. Larger values result in networks that rapidly fit a (relatively) poor representation of the optimal critic {${\bf f}^*_\lambda$}, followed by dramatic overfitting. Smaller choices of fixed $\lambda$ delay both phenomena, fitting to a critic which has a lower value of $\widehat{\rm MSE}_q$ at best. The staged regularization strategies exploit both types of training dynamic, descending in $\widehat{\rm MSE}_q$ more rapidly than most fixed-$\lambda$ strategies while achieving lower value (and hence better fit) than any fixed strategy, on average. Examining the results in Figure \ref{fig:GM_power}(\textbf{B}), we find that the power of the $\Lambda(5\times 10^{-1}, 1\times 10^{-3}, 0.80)$ regularization strategy exceeds (on average) that of all the fixed-$\lambda$ strategies. More detail related to the networks obtained via validation can be found in Table \ref{tab:10D_experiment} in Appendix \ref{app:supp_tables}.
\vspace{5pt}

\noindent\textit{25-dimensional mixture.} As in the previous settings, the $\widehat{\rm MSE}_q$ computed via Equation \eqref{eq:mse} is plotted in Figure~\ref{fig:GM_MSE}(\textbf{C}) for the 25D regularization strategies. {As is the case in 10D, the $\widehat{\rm MSE}_q$ of some regularization strategies are relatively high and are therefore omitted from Figure~\ref{fig:GM_MSE}.} The observed trend of increasing the dimension from 2 to 10 is further exemplified by the increase to 25 dimensions. The larger choices of fixed $\lambda$ result in networks that quickly obtain a poor fit of the optimal critic, followed by overfitting. The smaller choices of fixed $\lambda$ yield more stable $\widehat{\rm MSE}_q$ curves. Combining these dynamics in our staging strategies, the $\widehat{\rm MSE}_q$ performance gap between fixed- and staged-$\lambda$ regularization dramatically increases, where the staged strategies substantially outperform the fixed strategies. Figure \ref{fig:GM_power}(\textbf{C}) further corroborates this finding. The GoF hypothesis test power of the staged critics is dramatically higher than any fixed-$\lambda$ training strategy. See Table \ref{tab:25D_experiment} in Appendix \ref{app:supp_tables} for more detail pertaining to trained networks in 25D.
\vspace{5pt}

On the simulated Gaussian mixture data, we find that staging the regularization of the neural Stein critic throughout training yields greater benefit as the dimension increases from 2 to 25. The {scaleless neural Stein} critic {$\lambda{\bf f}(\cdot,\theta)$} rapidly fits the scaleless optimal critic {${\bf f}^*$} \eqref{eq:def-fstar} at early times when $\lambda$ is large, followed by stable convergence to a low-$\widehat{\rm MSE}_q$ critic throughout training. Furthermore, we find that the GoF hypothesis testing power follows a similar trend as that of the $\widehat{\rm MSE}_q$, such that staging $\lambda$ yields an increase in power, especially in higher-dimension.

\subsection{Comparison to kernel Stein discrepancy}\label{sec:compare_to_ksd}

Using the simulated Gaussian mixture data as in Section \ref{sec:gm_sim}, we compare the testing power of the neural Stein critic GoF hypothesis test to that of KSD. To do so, we perform the GoF hypothesis test by computing the KSD test statistic outlined in \cite{liu2016kernelized,chwialkowski2016kernel}, whereby the Stein discrepancy is computed using a critic restricted to an RKHS. Following common practice in the literature, we construct this RKHS to be defined by a radial basis function (RBF) with a bandwidth equal to the median of the data Euclidean distances in a given GoF test. We also compare our method to a KSD test with RBF bandwidth which is selected to maximize the power. To compute the KSD test, we adopt the implementation of \cite{chwialkowski2016kernel}. Further details of the KSD GoF test, including selecting the most effective bandwidth, are given in Appendix~\ref{sec:ksd_gof_test}, with GoF testing power for a range of bandwidths displayed in Figure~\ref{fig:ksd_bandwidth_comparison}.

 It would be natural to ask how the neural network test compares with other traditional parametric and non-parametric tests.  
There are cases where the generalized likelihood ratio (GLR) test can be computed from parametric models and would be the optimal test. 
When a parametric model is not available, traditional non-parametric tests like Kolmogorov-Smirnov may be restricted to low-dimensional data. 
We thus focus on comparison with the KSD test which is a kernel-based non-parametric test generally applicable to high dimensional data. 

\subsubsection{Experimental setup}\label{sec:kernel_experiments}

Using the model distribution $q$ and data distribution $p$ as defined in Section \ref{sec:gm_sim_data}, we construct a two-component Gaussian mixture in 50D to compare the power and computation time of the neural Stein discrepancy GoF hypothesis test (Section \ref{sec:gof_test}) to the KSD test (Appendix \ref{sec:ksd_gof_test}). The model distribution $q$ remains as the isotropic, two-component Gaussian mixture, while the data distribution $p$ has the form of \eqref{eq:data_distribution_gm} with covariance shift $\rho_1=0.5$ and covariance scaling $\omega=0.5$.

\paragraph{Computation of test power.}
For all neural Stein critic GoF tests, a total of $n_{\rm tr}+n_{\rm GoF}=n_{\rm sample}$ samples from $p$ are used, where $n_{\rm sample}$ takes on values in $\{100, 200, 300, 500, 1000, 1500, 2500\}$. In each case, $n_{\rm tr}$ and $n_{\rm GoF}$ are equal to $n_{\rm sample}/2$. The 50\%/50\% training/testing split was chosen by comparing the testing power for various training/testing splits. Our findings indicate a range of training/testing splits exists for which the testing power performance is comparable. While the training duration contributes most significantly to the computation time, the test achieves high power once the training split reaches 50\% of the sample size. That is, the 50\%/50\% train/test split is a generic split choice in this range of high power splits, and therefore we use this split in the results outlined in Section~\ref{sec:ksd_comparison_results} and in Figure~\ref{fig:kernel_test_comparison}. The details of this finding can be found in Appendix~\ref{app:neural_split_power}, Figure~\ref{fig:neural_split_power}, and an application to a simpler setting is displayed in Figure~\ref{fig:ksd_comparison_easy}. 
Furthermore, the training data are partitioned into an 80\%/20\% train/validation split.
The testing procedure and test power computation are as described in Section~\ref{sec:gm_sim_setup}.
Specifically, the significance level $\alpha=0.05$ in all tests, the number of bootstraps $n_{\rm boot}=500$, and the efficient bootstrap ratio is $r_{\rm pool}=50$.

The KSD tests \cite{chwialkowski2016kernel} are computed over the same range of samples size. 
The KSD test is conducted with an RBF kernel using the median data distance heuristic for bandwidth, and we also examine the selected bandwidth chosen to maximize the test power.
To compute $t_{\rm thresh}$, the KSD test uses a ``wild bootstrap'' procedure \cite{chwialkowski2016kernel}.
We provide details of bandwidth selection and wild bootstrap in Appendix~\ref{sec:ksd_gof_test}.
The number of wild bootstrap samples used for KSD is equal to the number of bootstrap samples used for the neural Stein test, namely $n_{\rm boot} = 500$.

The test power is computed using $n_{\rm run}=400$ and $n_{\rm replica}=5$ for each KSD and neural Stein GoF test.

\paragraph{Comparison of test time.}
We compare the time taken to perform 
one KSD GoF test 
vs. 
training a neural Stein critic and performing a neural GoF test.
All reported times are elapsed time on a laptop with an Intel Core i7-1165G7 processor with 16 GB of RAM. 
We record and report the computation times of the tests over a range of sample sizes.

For the neural test, we measure the duration time of training followed by the computation time of a single test statistic and the computation time of the bootstrap, averaged over $n_{\rm replica}=5$ replicas with $n_{\rm run}=10$ and $n_{\rm boot}=500$.

To determine the computation time of the KSD test, we measure the duration of 10 tests using the best-selected bandwidth, which can be broken down into the time taken to compute an individual test statistic and the computation time of the wild bootstrap, averaged over $n_{\rm replica}=5$, $n_{\rm run}=10$, and $n_{\rm boot}=500$.

\paragraph{Neural network training.}
The neural Stein critics are learned using two-hidden-layer (512 nodes) MLPs with Swish activation, initialized using PyTorch standard initialization with biases set to 0. We use the Adam optimizer with a learning rate set to $5\times10^{-3}$, minibatch size equal to $40$, and the critics are trained for 60 epochs. The $\Lambda(4\times 10^{-1}, 5\times 10^{-4}, 0.85)$ staging is used in the training of the neural Stein critics, {where the frequency of $\lambda$ staging, $B_w$ batches, is again chosen to be equivalent to the number of training batches per epoch.} The networks are trained using $n_{\rm tr}$ number of training samples from the data distribution $p$, which are split into an 80\%/20\% train/validation split for model selection. As in Section \ref{sec:gm_sim_setup}, the networks are selected to minimize the {monitor $\widehat{\rm MSE}^{(m)}_p$} of Equation \eqref{eq:validation_mse}.

\begin{figure}[t]
    \centering
    \includegraphics[width=\textwidth]{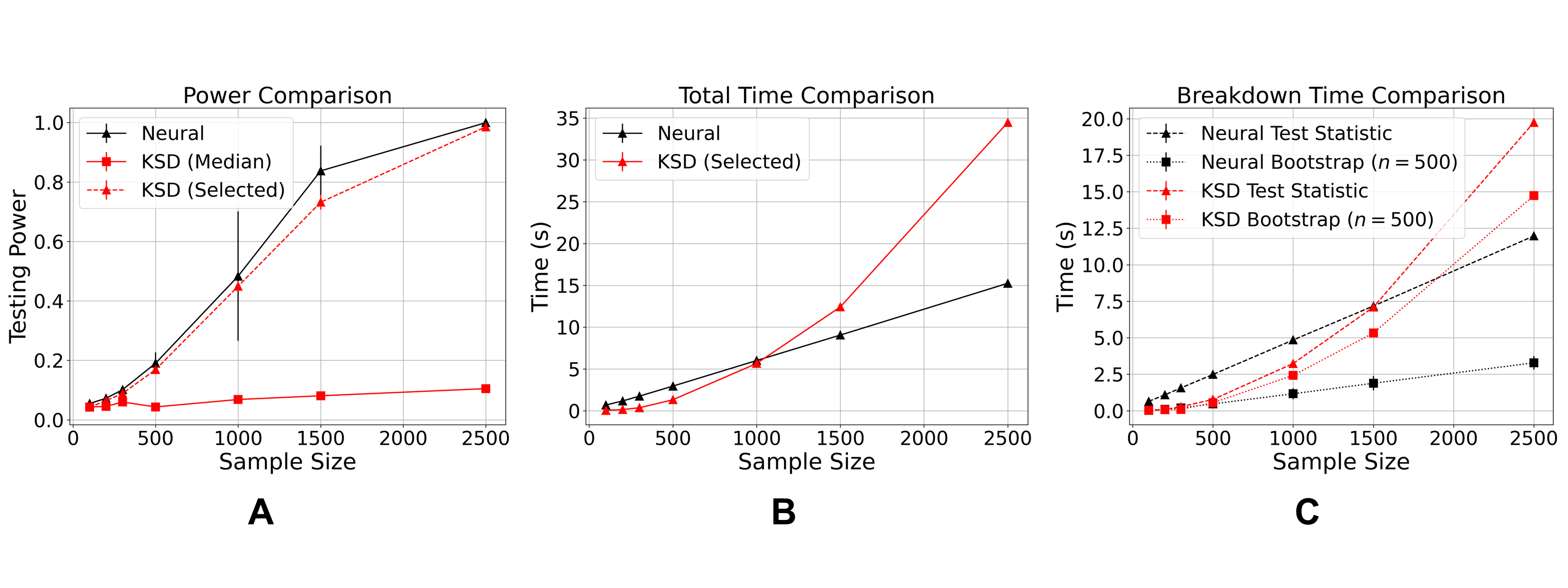}
    \vspace{-20pt}
    \caption{
    Comparison between neural Stein critic GoF test and KSD test \cite{chwialkowski2016kernel} in the 50D setting described in \ref{sec:kernel_experiments}. We perform $n_{\rm run}=400$ number of GoF tests to compute power {with $n_{\rm replica}=5$ replicas}. All results use $n_{\rm boot}=500$ bootstrap samples. The KSD is computed using an RBF kernel with median data distance heuristic bandwidth and bandwidth selected to maximize power (see Appendix~\ref{sec:ksd_gof_test} for details). In each plot, the x-axis indicates the overall sample size $n_{\rm sample}$ provided to each method, where the neural Stein GoF test uses half for training and a half for computation of the test statistic (refer to Figure~\ref{fig:neural_split_power} for a comparison of neural Stein tests with other choices of train/test split), and KSD uses the entire sample for computation of the test statistic. In (\textbf{A}), the average test power and its standard deviation error bar are shown, computed over the 5 models in each case. Figure (\textbf{B}) shows the average duration of network training plus an individual GoF test (computation of test statistic and bootstrap) for the neural Stein test and the average duration of a single test (test statistic and bootstrap) for the KSD test. Finally, (\textbf{C}) shows the breakdown between the average computation time required to compute a test statistic (including training for the neural network test) and the time required to compute the bootstrap. See Section \ref{sec:kernel_experiments} for details of the GoF test power and recording of computation time.}\label{fig:kernel_test_comparison}
\end{figure}

\subsubsection{ Results}\label{sec:ksd_comparison_results}

The result of the testing power comparison on the 50D data can be found in Figure~\ref{fig:kernel_test_comparison}. First, examining the comparison in power between the methods in Figure~\ref{fig:kernel_test_comparison}(\textbf{A}), the neural Stein critic GoF test (solid black line) remains more powerful than the KSD GoF tests for all sample sizes. The KSD GoF test using the median data distances heuristic bandwidth (solid red line) does not achieve comparable power to the staged-regularization neural Stein critic approach, even for as many as 2500 samples. While the KSD test using the best-selected bandwidth (dashed red line) achieves much higher power than the heuristic approach, the power of this method still falls below that of the neural Stein GoF test. Next, Figure~\ref{fig:kernel_test_comparison}(\textbf{B}) highlights the quadratic time complexity of the KSD GoF test, resulting in a dramatic increase in computation time that surpasses the overall time to train a neural Stein critic and compute a test for samples sizes larger than 1,000.
Furthermore, the breakdown into test statistic computation time and bootstrap computation time in Figure~\ref{fig:kernel_test_comparison}(\textbf{C}) reveals that the time to compute an individual test statistic for the neural Stein approach becomes more time-efficient than KSD when $n_{\rm sample}$ is just larger than 1500. While the test statistic computation and wild bootstrap of KSD contribute substantially to the overall computation time of KSD, the neural Stein GoF test computation time is dominated by the training period.

Our findings indicate that the neural Stein critic clearly outperforms KSD in this setting for a larger sample size, with higher power and lower computation time. This result highlights the deeper expressivity of the $L^2$ function space compared to kernel methods in addition to the benefit of the linear time complexity of the neural Stein critic GoF test, as opposed to the quadratic complexity of KSD.

\subsection{MNIST {handwritten} digits data}\label{sec:mnist}

The results of Section \ref{sec:gm_sim} indicate that the staging of regularization when training neural Stein critics yields greater benefit in higher dimensions. Therefore, we extend to a real-data example in an even higher dimension: the MNIST handwritten digits dataset \cite{lecun2010mnist}. We compare the fixed and staged regularization strategies to train critics that discriminate between an RBM and a mixture model of MNIST digits.

\subsubsection{Authentic and synthetic MNIST data}

To construct the model distribution $q$ for the MNIST setting, we follow the approach of \cite{grathwohl2020learning}. We use a Gaussian-Bernoulli RBM that models the MNIST data distribution, trained using a learned neural Stein critic to minimize the Stein discrepancy between the true MNIST density and the RBM. We declare the model distribution $q$ to be this 728-dimensional RBM. The data distribution $p$ is a mixture model composed of 97\% the RBM and 3\% true digits ``1" from the MNIST dataset. Therefore, any disparity in the distributions of $p$ and $q$ are caused by this infusion of digits from MNIST into $p$.

\subsubsection{Learned neural Stein critics}

In addition to being a realistic setting, training a neural Stein critic using MNIST digits will allow us to better interpret the discrepancy. Of course, since we do not have access to the ``true" score function $p$, we cannot calculate the $\widehat{\rm MSE}_q$ for the trained critics using Equation \eqref{eq:mse}. Furthermore, the {computation of $\widehat{\rm MSE}^{(m)}_p$} using validation data from $p$ via \eqref{eq:validation_mse} becomes less accurate in high dimension, as the method would require a large amount of validation data to be an accurate representation of the population ${\rm MSE}_p$.
Therefore, we introduce an additional validation metric to evaluate the fit of the {neural Stein} critic {${\bf f}(\cdot,\theta)$}. First, in the language of the GoF test introduced in Section \ref{sec:gof_test}, we denote the quantity computed by applying the Stein operator with respect to $q$ on {neural Stein} critic {${\bf f}(\cdot,\theta)$} evaluated at a sample $x\in\calX$ as the ``critic witness" of the sample $x$:
\begin{equation}
    w(x{,\theta}) = T_q {\bf f}(x{,\theta}).
    \label{eq:witness}
\end{equation}
{Intuitively,
since {${\bf f}(\cdot,\theta)$} is trained on samples from $p$ (the data distribution) by maximizing the Stein discrepancy,
the value of $w(x{,\theta})$ represents the magnitude of the difference between distributions $p$ and $q$ at $x\in\calX$.} Evaluating the critic witness at ${n_{\rm GoF}}$ samples $x_i\sim q$, under the central limit theorem (CLT) assumption, random variables $w(x_i{,\theta})$ have a (centered) normal distribution with standard deviation $\sigma(w)/\sqrt{{n_{\rm GoF}}}$ when ${n_{\rm GoF}}$ is large, where $\sigma(w)$ is the standard deviation of $w(x_i{,\theta})$.

Note that the test statistic \eqref{eq:test_stat} is the mean of $w(x_i{,\theta})$ computed over testing data $x_i\sim p$. As an assessment for the GoF testing power for the {neural Stein} critic function ${\bf f}{(\cdot,\theta)}$ (which is expensive to compute in such a high dimension), we may compare the mean and variance of $w(\cdot{,\theta})$ applied to a testing dataset sampled from $p$ and to a ``null" dataset drawn from $q$, both of size ${n_{\rm GoF}}$:
\begin{equation}
    \hat{P} = \frac{{\bar w}_p}{\sigma(w_p) + \sigma(w_q)}, \quad {\bar w}_p = \frac{1}{{n_{\rm GoF}}}\sum_{i=1}^{{n_{\rm GoF}}} w(x_i{,\theta}), \quad \sigma(w_p) = \frac{1}{{n_{\rm GoF}}}\sqrt{\sum_{i=1}^{{n_{\rm GoF}}} (w(x_i{,\theta})-{\bar w}_p)^2}
    \label{eq:power_proxy}
\end{equation}
This quantity acts to reflect the capability of the neural Stein critic to differentiate between the distributions in the GoF hypothesis testing setting described in Section \ref{sec:gof_test}. In addition to the $\hat{P}$ metric from Equation \eqref{eq:power_proxy}, we may also apply the Stein discrepancy evaluated at the scaleless neural Stein critic, {i.e., the ${\rm SD}[\lambda{\bf f}(\cdot,\theta)]$ \eqref{eq:sd_operator}}, to the holdout dataset from the data distribution as an evaluation metric for the models as described in Section \ref{sec:eval_generative_models}.

\subsubsection{Experimental setup}

We again train 2-hidden-layer MLP's with Swish activation, where each hidden layer is composed of 512 hidden units, using the default Adam optimizer parameters by PyTorch. The critics observe 2,000 training samples from $p$, training on mini-batches of size 100 with a learning rate $10^{-3}$. Each model is trained for 25 epochs. We consider fixed $\lambda\in\{1\times 10^{-3}, 1\times 10^{-2}, 1\times 10^{-1}, 1\times 10^0, 2\times 10^0\}$ and staging scheme $\Lambda(5\times 10^{-1}, 1\times 10^{-3}, 0.90)$. {The frequency of updates via the staging strategies is $B_w=20$ batches.} We fit 10 critics per regularization strategy. For each critic, we compute the validation SD and the power metric \eqref{eq:power_proxy} throughout training using ${n_{\rm GoF}}=1,000$ samples from $p$ and the same number of ``null" samples from $q$.

In addition to assessing the proxy for the test power of the neural Stein critic in the GoF test via Equation \eqref{eq:power_proxy}, we examine the interpretability of the critic as a diagnostic tool for anomalous observations. By Equation \eqref{eq:def-fstar}, the {scaleless} optimal critic captures the difference in the score of the data distribution and model distribution. Therefore, a trained neural Stein critic can indicate which samples in a validation dataset represent the largest departure from the distribution $q$. We isolate such samples by identifying samples with high critic witness value \eqref{eq:witness}. We do so by both visualizing the images in a holdout validation dataset sampled from $p$ which have a high $w(\cdot{,\theta})$ value, in addition to plotting a heatmap of $w(\cdot{,\theta})$ reduced using a t-SNE embedding \cite{van2008visualizing} applied to the entire validation dataset.

\begin{figure}[t]
    \centering
    \includegraphics[width=\textwidth]{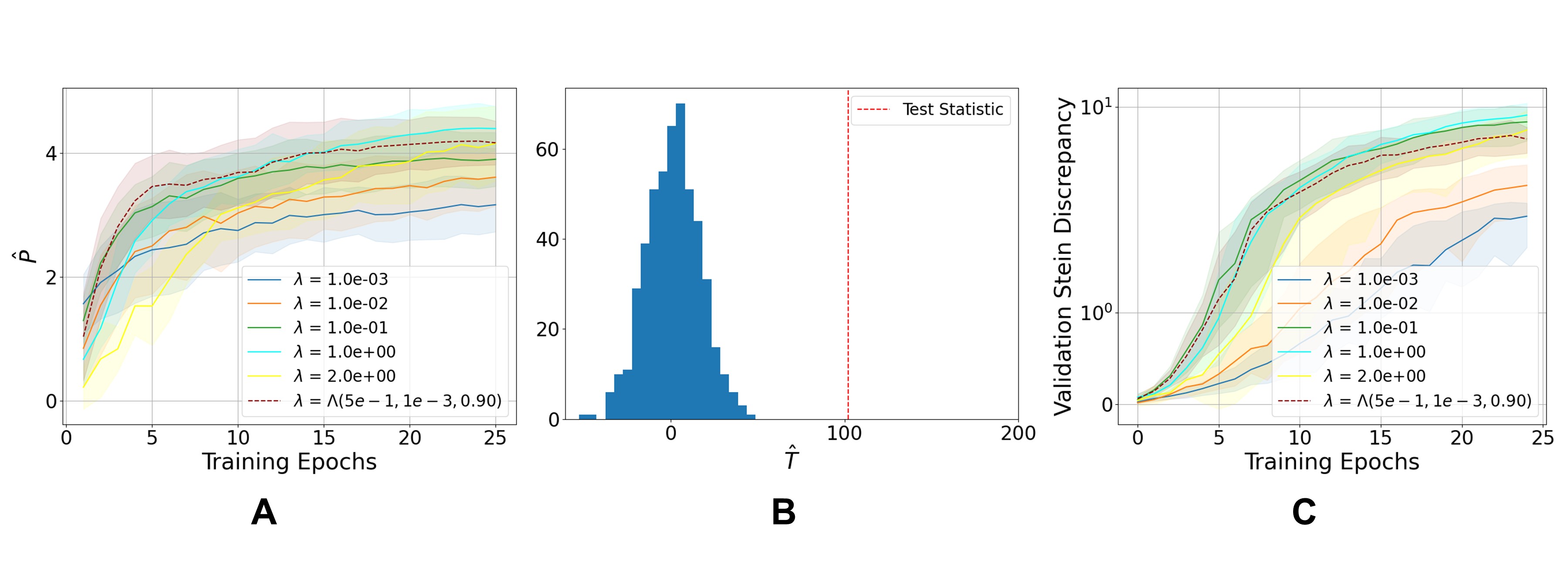}\vspace{-20pt}
    \caption{In (\textbf{A}), we visualize the $\hat{P}$ metric from Equation \eqref{eq:power_proxy} throughout training for various regularization strategies when training neural Stein critics using the MNIST mixture dataset. To compute $\hat{P}$, the model is applied to a validation dataset of 1,000 samples. In each case, 10 models are trained for 25 epochs using each regularization strategy; the mean and standard deviation are visualized over these 10 models. In (\textbf{B}), a distribution of null statistics \eqref{eq:test_stat} are plotted alongside a statistic calculated over an ${n_{\rm GoF}}=100$ sample testing set from $p$, computed using a $\Lambda(5\times 10^{-1}, 1\times 10^{-3}, 0.90)$ staged critic. In (\textbf{C}), the Stein discrepancy evaluated at the scaleless neural Stein critic $\lambda{\bf f}(\cdot,\theta)$ through training is visualized, which is computed using the same validation datasets of 1,000 samples used to compute $\hat{P}$ in (\textbf{A}).}\label{fig:mnist_power}
\end{figure}

\subsubsection{Results}

The power approximation using Equation \eqref{eq:power_proxy} is plotted in Figure \ref{fig:mnist_power}(\textbf{A}) for each regularization strategy, where the means and standard deviations are calculated using the ten models for each regularization scheme. While the staging strategy does not exhibit such an advantage as in the high-dimension Gaussian mixture data, staging provides a more rapid increase in the validation metric in the early training period, yielding a final model of comparable performance to the fixed-$\lambda$ strategies. In Figure \ref{fig:mnist_power}(\textbf{B}), we observe that the test statistic \eqref{eq:test_stat} exhibits clear separation from its bootstrapped ($n_{\rm boot}=500$) null distribution, even in the case when the number of test samples is relatively small (in this case, ${n_{\rm GoF}}=100$). While Figure~\ref{fig:mnist_power}(\textbf{B}) shows this distribution for the $\Lambda(5\times 10^{-1}, 1\times 10^{-3}, 0.90)$ staged regularization strategy, this holds for fixed-$\lambda$ training as well. {Finally, the Stein discrepancy evaluated at the scaleless neural Stein critics {$\lambda{\bf f}(\cdot,\theta)$} applied to the holdout dataset from $p$ is visualized throughout training for each model in Figure \ref{fig:mnist_power}(\textbf{C}). This result is similar to the result from Figure \ref{fig:mnist_power}(\textbf{A}) in that the staged approach performs comparably to the best fixed-$\lambda$ regularization strategies.}

\begin{figure}[t!]
    \centering
    \includegraphics[width=\textwidth]{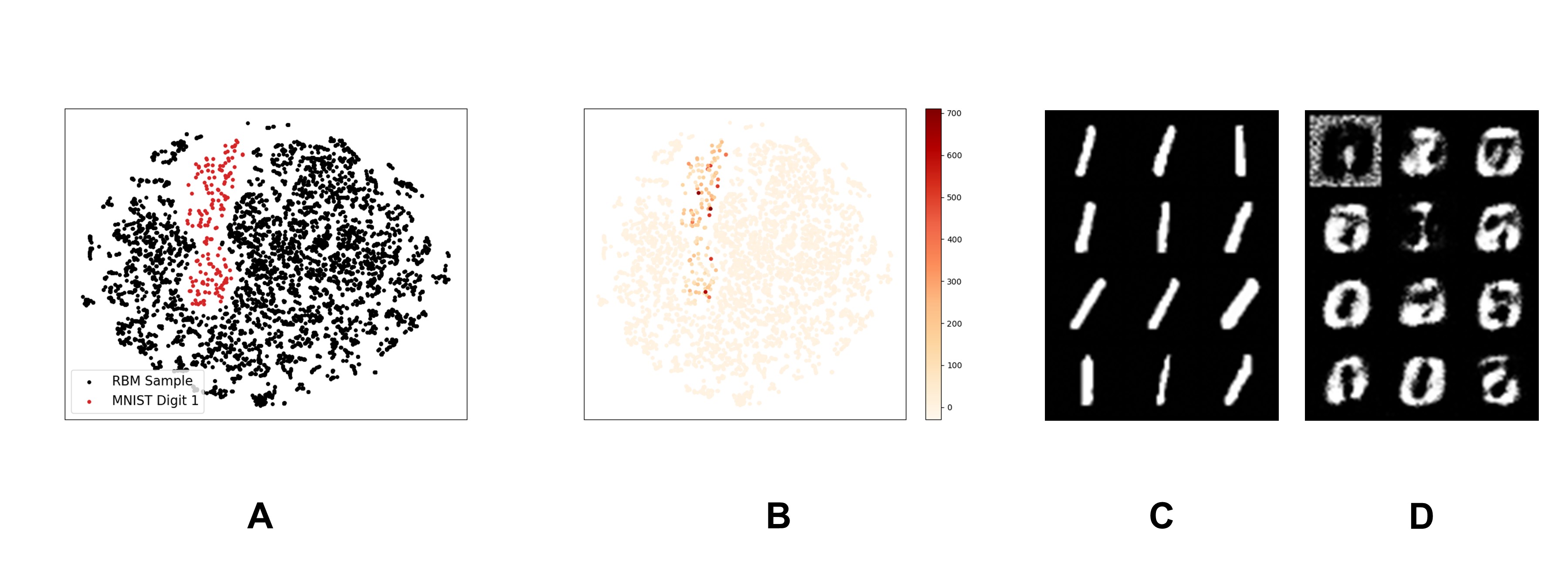}
        \vspace{-30pt}
    \caption{Embedding via t-SNE of the critic witness function \eqref{eq:witness} scaled by the regularization weight $\lambda$ using a $\Lambda(5\times 10^{-1}, 1\times 10^{-3}, 0.90)$ staged, regularized neural Stein critic applied to a validation dataset of 6,000 samples from the data distribution $p$ used to evaluate the diagnostic capacity of the neural Stein critic. After 25 epochs of training, the critic was selected to produce these critic witness values. In (\textbf{A}), the black points represent the portion of the validation set coming from the RBM, while the red points represent the portion that is true digits 1 from MNIST. In (\textbf{B}), the points with high critic witness value are more darkly colored. In (\textbf{C}), the 12 images in the validation set with the highest critic witness value are shown, applying the critic trained by the staging strategy. Similarly, (\textbf{D}) shows the 12 images in the validation set with the lowest critic witness value.}\label{fig:tsne}
\end{figure}

For a more direct understanding of how the {neural Stein} critics perform, consider instances of validation data sampled from $p$ in which the critic witness \eqref{eq:witness} value is very large. These samples indicate the largest deviation between the model distribution $q$ and the data distribution $p$. In the staged $\Lambda(5\times 10^{-1}, 1\times 10^{-3}, 0.90)$ setting, we visualize the critic witness applied to a validation set of 6,000 samples from $p$ by reducing the images to a two-dimensional embedding via t-SNE. In Figure \ref{fig:tsne}(\textbf{A}), we observe the embedding of the validation data into this space, where the true MNIST points are highlighted in red. In Figure \ref{fig:tsne}(\textbf{B}), the true MNIST digits are found to have a larger critic witness value than those of RBM samples in the validation dataset.

Furthermore, in the $\Lambda(5\times 10^{-1}, 1\times 10^{-3}, 0.90)$ case, visualizing the images in the validation set from $p$ which have the highest critic witness value in Figure \ref{fig:tsne}(\textbf{C}) and those which have the lowest critic witness value in Figure \ref{fig:tsne}(\textbf{D}), we find that this approach correctly identifies true digits one from MNIST as anomalous while accepting those generated by the RBM as normal. In the case of the fixed-$\lambda$ regularization strategies, it seems that all methods do well to identify the true digits ``1" in samples from the data distribution $p$.

\section{Discussion}

We have developed a novel training approach for learning neural Stein critics by starting with strong $L^2$ regularization and progressively decreasing the regularization weight over the course of training. 
The advantage of staged regularization is empirically observed in experiments, especially for high-dimension data. In all the experiments, it is observed that critics trained using larger regularization weights at the beginning of training enjoy a more rapid approximation of the target function, 
including in the task of detecting distribution differences between authentic and synthetic MNIST data.
These experimental findings are consistent with the NTK lazy-training phenomenon that happens with large regularization strength at the beginning phase of training, which we theoretically prove. 
We apply the neural Stein method to GoF tests for which we derive a theoretical guarantee of test power.

The work can be extended in several future directions.
Theoretically, within the lazy-training framework,
it would be interesting to characterize the expressiveness of the NTK kernel (the assumption of Proposition \ref{prop:NTK-stein}).
One can also explore a more advanced analysis of the neural network training dynamic, including the SGD training using mini-batches and going beyond kernel learning.
A related question is to theoretically analyze the later training stage, for example, to prove the convergence guarantee with decreasing $\lambda$,
and to find the theoretically optimal annealing strategy of $\lambda$ in later stages. 
A possible way is to track the change of the NTK kernel as time evolves, which remains a challenge in studying NTK theory. 
For the algorithm, one can further investigate the optimal staging scheme and explore other types of regularization of the neural Stein critic than the $L^2$-regularization.
For the application to GoF testing, here we consider the typical setting for the Goodness-of-Fit test, which is a simple null hypothesis, i.e., we test whether or not the data follows a particular distribution/model; a potentially interesting direction is extending to cases where the null hypothesis is composite, i.e., comparing the data to a set of $K$ target distributions. An interesting strategy would be training $K$ neural networks jointly with possible sharing parameters so as to reduce model size and computation. 
Finally, further applications can be conducted on other modern generative modeling approaches in the same manner as the Gaussian-Bernoulli RBM, including normalizing flow architectures.

\section{Proofs}\label{sec:proofs}

\subsection{Proofs in Sections \ref{subsec:theory-GD}-\ref{subsec:lazy-training-approx}}

\begin{proof}[Proof of Lemma \ref{lemma:eigen}]
Introduce the notation 
\[
\left[ {\bf K}_0(x,x') \right]_{ij } = K( (x,i), (x',j)),
\]
where $K( z, z')$ is a positive semi-definite (PSD) kernel defined on the space of $\calZ = \calX \times [d]$, that is,
\[ 
z = (x,i),  \quad x \in\calX, \quad i\in [d],
\]
with $dz$ being the induced product measure.
The kernel $K$ is PSD due to the definition \eqref{eq:def-NTK-0}. One can also verify that  $K$ is Hilbert-Schmidt because
\begin{align}
& \int_\calZ \int_\calZ K(z,z')^2 dz dz' 
 = \sum_{i,,j=1}^d  \int_\calX \int_\calX  \left[ {\bf K}_0(x,x') \right]_{ij }^2 p(x) p(x') dx dx'  \nonumber \\
&~~~ 
= \sum_{i,,j=1}^d  \int_\calX \int_\calX  
	\langle \partial_\theta  f_i( x, \theta(0) ),  \partial_\theta  f_j(x',\theta (0)) \rangle_\Theta^2 
	p(x) p(x') dx dx' \nonumber \\
&~~~ 
\le  \left ( \sum_{i=1}^d  
	\int_\calX  \| \partial_\theta  f_i( x, \theta(0) )\|_{\Theta}^2  p(x) dx  \right)^2 < \infty,
	\quad \text{(by Cauchy-Schwarz)}
\end{align}
and the integrability follows by the $L^2$ integrability of $ \partial_\theta  f_i( x, \theta(0) )$ in $(\calX, p(x) dx)$ assumed in the condition of the lemma. 

As a result, the spectral theorem implies that $K(z,z')$ has discrete spectrum $\mu_k$, $k=1,2,\cdots$ which decreases to 0,
each $\mu_k$ is associated with an eigenfunction $v_k( z)$, and $\{ { v}_k\}_{k=1}^\infty$ form an orthonormal basis on $\calZ$.  
This means that 
\[
\sum_{j=1}^d \int_\calX \left[ {\bf K}_0(x,x') \right]_{ij }  v_k( x',j) p(x') dx' = \mu_k v_k(x,i), \quad \forall x\in\calX, \, i \in [d],
\]
and  for any $k, l = 1, 2, \cdots$, 
\[
\sum_{i=1}^d \int_\calX v_k(x, i) v_l(x, i) p(x) dx  = \delta_{kl}.
\]
At last, because the neural network has a finite width, 
$\Theta$ is in a finite-dimensional Euclidean space of dimensionality $M_\Theta$. By \eqref{eq:def-NTK-0}, the kernel $K$ has a finite rank at most $M_\Theta$. 
Let the rank of $K$ be $M$, then $\mu_1 \ge \cdots \ge \mu_M > 0$, and the other eigenvalues are zero.
Defining ${\bf v}_k (x)$ by $( v_k(x, i) )_{i=1}^d$ finishes the proof. 
\end{proof}

\begin{proof}[Proof of Proposition \ref{prop:NTK-stein}]
As shown in the proof of Lemma \ref{lemma:eigen}, there are ortho-normal basis $\{ {\bf v}_k \}_{k=1}^\infty$ of $\calX$ with respect to $\langle \cdot, \cdot \rangle_p$, where the first $M$ many consist of eigen-functions of $ {\bf K}_0(x,x') $. 
{Using the ortho-normal basis,  ${\bf f}^*$ has the following expansion with coefficients $c_k \in \R$,}
\begin{equation}\label{eq:f-vk}
{\bf f}^* = \sum_{k=1}^\infty c_k {\bf v}_k, 
\quad
 \|  {\bf f}^* \|^2_{p} = \sum_{k=1}^\infty c_k^2 < \infty.
\end{equation}
The uniqueness of orthogonal decomposition gives that 
\begin{equation}\label{eq:f1-f2-vk}
{\bf f}_1^* = \sum_{k=1}^m c_k {\bf v}_k, \quad
{\bf f}_2^* = \sum_{k=m+1}^\infty c_k {\bf v}_k.
\end{equation}
To prove the proposition, it suffices to prove \eqref{eq:bound-NTK}, and then \eqref{eq:2eps-bound} follows by that 
\[
\| {\bf f}_1^* \|_{p }   \le \| {\bf f}^* \|_{p } 
\]
which follows from \eqref{eq:f1-f2-vk}, and that  $e^{-t\delta \lambda} \le \epsilon$ when $ t  \ge  {\log (1/\epsilon)}/{(\delta \lambda)}$.

To prove \eqref{eq:bound-NTK}: By \eqref{eq:evolution-baru}, and define
\[
{\bf w} (x, t) := \lambda \bar{\bf u }( x, t) -  {\bf f}^*(x), 
\]
we have
 \begin{equation}\label{eq:parital-w}
{\partial_t} {\bf w} (x, t)
 =  - \lambda \int_\calX  {\bf K}_0(x,x')   \circ  {\bf w} (x', t)  p(x') dx',
\end{equation}
and, by that $\bar{\bf u }( x, 0) =0 $, 
we have
\[
{\bf w} (x, 0) = -  {\bf f}^*(x).
\]
Because ${\bf w} (x, 0) \in L^2(p)$, the evolution equation  \eqref{eq:parital-w} implies that
\[
{\bf w} (x, t) = \sum_{k=1}^\infty b_k(t)  {\bf v}_k(x),
\quad
\dot{b}_k(t) = \begin{cases}
- \lambda \mu_k b_k(t), & k \le M \\
0, & k > M
\end{cases},
\quad 
b_k(0) = -c_k.
\]
Using the notation $\mu_l = 0$ for $l > M$, we have
\[
b_k(t) = -c_k e^{- t \lambda \mu_k}, \quad k=1,2,\cdots
\]
This gives that that for any $t > 0$,
\[
- {\bf w} (x, t) 
=  \sum_{k=1}^\infty e^{- t \lambda \mu_k} c_k {\bf v}_k(x)
= \sum_{k=1}^m e^{- t \lambda \mu_k} c_k {\bf v}_k(x) + 
	\sum_{k=m+1}^\infty e^{- t \lambda \mu_k} c_k {\bf v}_k(x)
=: \textcircled{1} + 	\textcircled{2}.
\]
Because $\mu_k \ge \delta$ for $k=1,\cdots, m$, 
\[
\| \textcircled{1} \|_{p}^2 = \sum_{k=1}^m e^{- 2 t \lambda \mu_k} c_k^2
\le  e^{- 2 t \lambda \delta}  \sum_{k=1}^m  c_k^2
= e^{- 2 t \lambda \delta}  \| {\bf f}_1^* \|_{p}^2,
\]
where the last equality is by \eqref{eq:f1-f2-vk}.
In addition,
\[
\| \textcircled{2} \|_{p}^2 = \sum_{k=m+1}^\infty e^{- 2 t \lambda \mu_k} c_k^2
\le \sum_{k=m+1}^\infty c_k^2 
 {=}  \| {\bf f}_2^* \|_{p}^2.
\]
Putting together, this gives that 
\[
\| {\bf w} (x, t)  \|_{p}
\le \| \textcircled{1} \|_{p} + \| \textcircled{2} \|_{p}
\le e^{- t \lambda \delta}  \| {\bf f}_1^* \|_{p}+ \| {\bf f}_2^* \|_{p},
\]
which proves \eqref{eq:bound-NTK}.
\end{proof}


%
\begin{proof}[Proof of Proposition \ref{prop:NTK-approx}]
\vspace{5pt}
\noindent
\underline{Proof of (i)}:
Recall that $\dot{\theta}(t) = - {\partial_\theta} L_\lambda( \theta (t ))$ as in \eqref{eq:def-GD-theta},
and then by the chain rule, 
\[
\frac{d}{dt} L_\lambda( \theta (t )) = 
\langle {\partial_\theta} L_\lambda( \theta (t )),  \dot{\theta}(t) \rangle_\Theta
= - \| {\partial_\theta} L_\lambda( \theta (t )) \|_\Theta^2 \le 0,
\]
which means that $L_\lambda( \theta (t )) $ is monotonically decreasing over time. Together with  \eqref{eq:Llambda-is-MSE-const}, we have that $\forall t \ge 0$, 
\begin{align}
 \int_0^t  \|  \dot{\theta}(s) \|_\Theta^2  ds 
& =  \int_0^t  \| {\partial_\theta} L_\lambda( \theta (s )) \|_\Theta^2  ds
 = L_\lambda( \theta (0 )) - L_\lambda( \theta (t )) 
= \calL_\lambda[ {\bf u}( \cdot, 0 ) ] -  \calL_\lambda[ {\bf u}( \cdot, t ) ]  \nonumber \\
& =   \frac{1}{2 \lambda } \left(  \|  \lambda  {\bf u}( \cdot, 0 ) - {\bf f}^*   \|_p^2  - \|  \lambda  {\bf u}( \cdot, t ) - {\bf f}^*   \|_p^2    \right) 
\le  \frac{1}{2 \lambda }  \|  \lambda  {\bf u}( \cdot, 0 ) - {\bf f}^*   \|_p^2   
= \frac{1}{2 \lambda }  \|  {\bf f}^*   \|_p^2.   \label{eq:bound-thetadot-square-int}
\end{align}
In the last equality, we have used the condition that ${\bf u}( x, 0 ) = 0$.
\eqref{eq:bound-thetadot-square-int} bounds the change of $\theta(t)$ by Cauchy-Schwarz, that is,
\[
\|\theta(t) - \theta(0)\| 
\le    \int_0^t \| \dot{\theta}(s) \| ds
\le \sqrt{t}  \left( \int_0^t \| \dot{\theta}(s) \|^2 ds \right)^{1/2}
\le \sqrt{t}  \left(  \frac{1}{2 \lambda }  \|  {\bf f}^*   \|_p^2 \right)^{1/2},
\]
which proves \eqref{eq:thetat-theta0}.
Finally, by that $ \| \theta(0)\| < r/2$, 
we have $\| \theta(t)\| < r/2 + \sqrt{ {t}/{ (2\lambda)} } \| {\bf f}^*\|_p$, 
and then the upper bound of $t$ in the condition of (i) ensures that $\theta(t) \in \Theta$.
Thus, for all such $t$, ${\bf u} (\cdot, t)  =   {\bf f}(\cdot, \theta(t))$ is well-defined 
and  ${\bf f}(\cdot, \theta(t)) \in L^2(p)$ by Assumption \ref{assump:C1-C2},
so that the involved integrals in the loss  $L_\lambda( \theta (t ))$ are all finite.

\vspace{5pt}
\noindent
\underline{Proof of (ii)}:
We first note that
 \begin{equation}\label{eq:thetat-in-Br}
 \theta(t) \in B_r, \quad \forall 0 \le t \le  \frac{1}{2}( \frac{r}{ \| {\bf f}^*\|_p } )^2 \lambda.
  \end{equation}
  This is because $r/2+ \sqrt{ {t}/{ (2\lambda)} } \| {\bf f}^*\|_p \le r$ and $B_r \subset \Theta$, 
  thus the condition needed for $t$ by (i) is satisfied, 
  and then \eqref{eq:thetat-theta0} gives that  $\| \theta(t) - \theta(0)\| \le r/2$ for the range of $t$ being considered.
The claim \eqref{eq:thetat-in-Br} then follows together with the assumption that $\| \theta(0) \| < r/2$.

In the below, we may omit the dependence on $t$ in notation when there is no confusion, e.g., we write $\lambda {\bf u}( x' ,t) -   {\bf f}^*(x') $ as $(\lambda {\bf u}-   {\bf f}^*)(x')  $.
To prove \eqref{eq:y-bary},
recall the evolution equations of  ${\bf u}$ and   $\bar{\bf u}$ as in \eqref{eq:evolution-u} and \eqref{eq:evolution-baru},
\begin{equation}\label{eq:pde-u-ubar}
\begin{cases}
 {\partial_t}{\bf u }( x, t) 
 =  - \E_{x' \sim p}  {\bf K}_t(x,x')   \circ \left( \lambda {\bf u} - {\bf f}^* \right)( x')  \\
 {\partial_t}\bar{\bf u }( x, t) 
 =  - \E_{x' \sim p}
 {\bf K}_0(x,x')   \circ 	\left( \lambda \bar{\bf u} - {\bf f}^*  \right) (x') \\
 {\bf u}( x ,0)   = \bar{\bf u}( x ,0) = 0
\end{cases}
\end{equation}
which gives that $( {\bf u} - \bar{\bf u} )|_{t=0} = 0$ and
\begin{align}
  \partial_t (   {\bf u} -  \bar{\bf u} )
& =  - \E_{x' \sim p}  ({\bf K}_t - {\bf K}_0)(x,x')   \circ \left( \lambda {\bf u} - {\bf f}^* \right)( x')
 - \lambda \E_{x' \sim p} {\bf K}_0(x,x')   \circ 	\left(  {\bf u}  -  \bar{\bf u}   \right) (x').
\end{align}
By taking $\langle \cdot, \cdot \rangle_p$ inner-product  with $ {\bf u} -  \bar{\bf u}$ on both sides,
and that ${\bf K}_0$  has the expression \eqref{eq:def-NTK-0} and thus is PSD, we have that
\begin{align}
\frac{d}{dt} \frac{1}{2} \|  {\bf u} -  \bar{\bf u} \|_p^2
&  \le  \langle  {\bf u} -  \bar{\bf u} ,
   - \E_{x' \sim p}  ({\bf K}_t - {\bf K}_0)(\cdot, x')   \circ \left( \lambda {\bf u} - {\bf f}^* \right)( x') \rangle_p \nonumber \\
& \le   \| {\bf u} -  \bar{\bf u}  \|_p
	\|  {\bf K}_t - {\bf K}_0 \|_{p}  \|  \lambda {\bf u} - {\bf f}^* \|_p, \label{eq:dt-delta2-1}
\end{align}
where $ \|  {\bf K}_t - {\bf K}_0 \|_{p}$ stands for the operator norm of the kernel integral operator in $L^2(p)$. 
We claim that
 \begin{align} 
&\text{(Claim 1)} \quad  \|  \lambda {\bf u}(\cdot, t) - {\bf f}^* \|_p \le \|{\bf f}^* \|_p,  \quad \forall t \ge 0,
	\label{eq:claim1-pf} \\
& \text{(Claim 2)} \quad  \|  {\bf K}_t - {\bf K}_0 \|_{p}  \le L_1 L_2  \sqrt{ \frac{ 2 t}{  \lambda} } \|{\bf f}^* \|_p, 
\quad \forall t  \le  \frac{1}{2}( \frac{r}{ \| {\bf f}^*\|_p } )^2 \lambda. 
\label{eq:claim2-pf}
 \end{align}
 If both claims  hold, then \eqref{eq:dt-delta2-1} continues as
 \begin{equation}
 \frac{d}{dt} \frac{1}{2} \|  {\bf u} -  \bar{\bf u} \|_p^2
 \le  L_1 L_2  \sqrt{ \frac{ 2 t}{  \lambda} } \|{\bf f}^* \|_p^2  \| {\bf u} -  \bar{\bf u}  \|_p,
 \quad \forall t  \le  \frac{1}{2}( \frac{r}{ \| {\bf f}^*\|_p } )^2 \lambda. 
 \label{eq:dt-delta2-2}
  \end{equation}
 We define 
 \[
 \Delta(t): = \|  \lambda {\bf u}(\cdot, t ) -  \lambda\bar{\bf u} (\cdot, t )\|_p \ge 0,
 \quad \Delta(0) = 0,
 \]
 and $ \Delta(t)$ is continuous.
 For fixed $t  \le  \frac{1}{2}( \frac{r}{ \| {\bf f}^*\|_p } )^2 \lambda$, suppose 
 \[
 \sup_{s \in [0, t]}  \Delta(s)= \Delta(t^*), \quad \text{for some $t^* \le t$},
 \]
 then by \eqref{eq:dt-delta2-2}, 
 \begin{align*}
 \frac{1}{2} \Delta(t^*)^2 
& = \lambda^2 \int_0^{t^*}  (\frac{d}{dt}  \frac{1}{2} \|  ({\bf u} -  \bar{\bf u} )(\cdot, s)  \|_p^2 ) ds
 	\le \lambda \int_0^{t^*}  L_1 L_2  \sqrt{ \frac{ 2 s}{  \lambda} } \|{\bf f}^* \|_p^2  \Delta(s) ds  \\
 &\le L_1 L_2 \sqrt{  2  \lambda  }  \|{\bf f}^* \|_p^2   \Delta(t^*)  \int_0^{t^*} \sqrt{s} ds
 \end{align*}
 this gives that 
 \[
\Delta(t^*)  \le   \frac{4 \sqrt{  2}}{3} L_1 L_2  \sqrt{ \lambda  }  \|{\bf f}^* \|_p^2 ( t^*)^{3/2},
 \]
 which proves (ii) by that $\Delta(t)  \le \Delta(t^*)  $ and $t^* \le t$.

To finish the proof, it remains to show \eqref{eq:claim1-pf} and \eqref{eq:claim2-pf}.

 \vspace{5pt}
 \noindent
 \underline{Proof of \eqref{eq:claim1-pf}}:
 Define ${\bf w}(x,t) : = \lambda {\bf u}( x, t) - {\bf f}^*(x)$,
 By \eqref{eq:pde-u-ubar},
 \[
 \partial_t {\bf w}(x,t) = - \lambda \E_{x' \sim p}  {\bf K}_t(x,x')   \circ  {\bf w} ( x'),
 \]
 and thus
 \[
\frac{d}{dt}\frac{1}{2} \| {\bf w}(\cdot, t) \|_p^2 
= -  \lambda  \langle {\bf w}, \E_{x' \sim p}  {\bf K}_t( \cdot ,x')   \circ  {\bf w} ( x') \rangle_p.
 \]
Denoting the $i$-th entry of ${\bf w}$ as $w_i$, by  \eqref{eq:def-NTK-t}, one can verify that 
 \begin{align}
& \langle {\bf w} ,  \E_{x' \sim p} {\bf K}_t ( \cdot, x') \circ {\bf w}(x')  \rangle_p
 =  \E_{x \sim p}  \E_{x' \sim p} \sum_{i,j=1}^d 
\langle \partial_\theta  f_i( x, \theta(t) ),  \partial_\theta  f_j(x',\theta (t)) \rangle_\Theta w_i(x) w_j(x') \nonumber \\
& ~~~ 
=  \left\langle   \sum_{i=1}^d \E_{x \sim p}    \partial_\theta  f_i( x, \theta(t) ) w_i(x) , 
  \sum_{j=1}^d \E_{x' \sim p}  \partial_\theta  f_j(x',\theta (t))   w_j(x') \right\rangle_\Theta 
=   \|  \E_{x \sim p}    \partial_\theta  {\bf f}( x, \theta(t) ) \cdot {\bf w}(x) \|_\Theta^2.
\label{eq:v-Kt-v-expression}
 \end{align}
This shows that $\frac{d}{dt}\frac{1}{2} \| {\bf w}(\cdot, t) \|_p^2  \le 0$, 
that is, $ \| {\bf w}(\cdot, t) \|_p^2$ monotonically decreases over time.
Thus for any $t \ge 0$,
 \[
 \|  \lambda {\bf u}(\cdot, t) - {\bf f}^* \|_p^2
 =  \|   {\bf w}(\cdot, t) \|_p^2
 \le \|   {\bf w}(\cdot, 0) \|_p^2
= \|{\bf f}^* \|_p^2,
 \]
 due to that $ {\bf u}( x, 0) = 0 $.

 \vspace{5pt}
 \noindent
 \underline{Proof of \eqref{eq:claim2-pf}}:
It suffices to show that for any ${\bf v} \in L^2(p)$, 
 \begin{equation}\label{eq:claim2-2}
 \langle {\bf v} ,  \E_{x' \sim p}({\bf K}_t - {\bf K}_0)( \cdot, x') \circ {\bf v}(x')  \rangle_p
 \le  L_1 L_2  \sqrt{ \frac{ 2 t}{  \lambda} } \| {\bf f}^*\|_p  \|  {\bf v}  \|_p^2.
 \end{equation}

Denote the $i$-th entry of ${\bf v}$ as $v_i$,
and, similarly as in \eqref{eq:v-Kt-v-expression} and by \eqref{eq:def-NTK-t} and  \eqref{eq:def-NTK-0}, one has 
 \begin{align*}
\langle {\bf v} ,  \E_{x' \sim p} {\bf K}_t ( \cdot, x') \circ {\bf v}(x')  \rangle_p
&=   \|  \E_{x \sim p}    \partial_\theta  {\bf f}( x, \theta(t) ) \cdot {\bf v}(x) \|_\Theta^2, \\
 \langle {\bf v} ,  \E_{x' \sim p} {\bf K}_0 ( \cdot, x') \circ {\bf v}(x')  \rangle_p
& = \|  \E_{x \sim p}    \partial_\theta  {\bf f}( x, \theta(0) ) \cdot {\bf v}(x) \|_\Theta^2.
 \end{align*}
 Subtracting the two gives that 
 \begin{align}
&  \langle {\bf v} ,  \E_{x' \sim p} ({\bf K}_t - {\bf K}_0 )( \cdot, x') \circ {\bf v}(x')  \rangle_p 
 =  \|  \E_{x \sim p}    \partial_\theta  {\bf f}( x, \theta(t) ) \cdot {\bf v}(x) \|_\Theta^2
   - \|  \E_{x \sim p}    \partial_\theta  {\bf f}( x, \theta(0) ) \cdot {\bf v}(x) \|_\Theta^2 \nonumber \\
&~~~ 
= ( \|  \E_{x \sim p}    \partial_\theta  {\bf f}( x, \theta(t) ) \cdot {\bf v}(x) \|_\Theta 
	+ \|  \E_{x \sim p}    \partial_\theta  {\bf f}( x, \theta(0) ) \cdot {\bf v}(x) \|_\Theta) \nonumber \\
&~~~~~~~~~ 
( \|  \E_{x \sim p}    \partial_\theta  {\bf f}( x, \theta(t) ) \cdot {\bf v}(x) \|_\Theta 
	- \|  \E_{x \sim p}    \partial_\theta  {\bf f}( x, \theta(0) ) \cdot {\bf v}(x) \|_\Theta).
 \label{eq:v(Kt-K0)v-bound}
 \end{align}
Because $\theta(t) \in B_r$,  cf. \eqref{eq:thetat-in-Br},
 \[
 \|  \E_{x \sim p}    \partial_\theta  {\bf f}( x, \theta(t) ) \cdot {\bf v}(x) \|
 \le \E_{x \sim p}   \| \partial_\theta  {\bf f}( x, \theta(t) ) \| \| {\bf v}(x) \|
 \le  L_1  \E_{x \sim p} \| {\bf v}(x) \| 
 \le L_1   \| {\bf v} \|_p,
 \]
 where the second inequality is by (C1). We have the same upper bound of 
 $\|  \E_{x \sim p}    \partial_\theta  {\bf f}( x, \theta(0) ) \cdot {\bf v}(x) \|_\Theta$
 because $\theta(0) \in B_r$. 
 Meanwhile, triangle inequality gives that 
 \begin{align}
&  \|  \E_{x \sim p}    \partial_\theta  {\bf f}( x, \theta(t) ) \cdot {\bf v}(x) \|_\Theta 
	- \|  \E_{x \sim p}    \partial_\theta  {\bf f}( x, \theta(0) ) \cdot {\bf v}(x) \|_\Theta \nonumber \\
&~~~
\le  \|  \E_{x \sim p} (   \partial_\theta  {\bf f}( x, \theta(t) )  - \partial_\theta  {\bf f}( x, \theta(0) )
	 	)\cdot {\bf v}(x)     \| \nonumber \\
&~~~
\le  \E_{x \sim p}  \|     \partial_\theta  {\bf f}( x, \theta(t) )  - \partial_\theta  {\bf f}( x, \theta(0) ) \| \| {\bf v}(x) \|  \nonumber \\
&~~~
\le L_2 \| \theta(t) - \theta(0)  \|  \E_{x \sim p}  \| {\bf v}(x) \|  \quad \text{(by (C2) and both $\theta(t)$ and $\theta(0)$ are in $B_r$)}
\nonumber \\
&~~~
\le L_2 \| \theta(t) - \theta(0)  \|   \| {\bf v} \|_p. \label{eq:subtraction-term-upperbound}
 \end{align}
 By (i)  which is proved,  the r.h.s. of \eqref{eq:subtraction-term-upperbound} is upper-bounded by
 $L_2  \sqrt{ {t}/{ (2\lambda)} } \| {\bf f}^*\|_p   \| {\bf v} \|_p$.
Putting back to \eqref{eq:v(Kt-K0)v-bound}, we have
 \[
 \text{(r.h.s. of \eqref{eq:v(Kt-K0)v-bound})}
 \le 2 L_1 L_2    \sqrt{ {t}/{ (2\lambda)} } \| {\bf f}^*\|_p   \| {\bf v} \|_p^2
  \]
 which proves \eqref{eq:claim2-2}.
\end{proof}

\begin{proof}[Proof of Theorem \ref{thm:NTK-stein}]
The condition on the largeness of $\lambda$ guarantees that the range of $t_0$ in \eqref{eq:cond-t0-thm} is non-empty,
and \eqref{eq:cond-t0-thm} ensures that 
$ \frac{ \log(1/\epsilon)}{\delta } \frac{1}{\lambda }  \le t \le \frac{1}{2}( \frac{r}{ \| {\bf f}^*\|_p } )^2 \lambda$.
For this range of $t$, Proposition \ref{prop:NTK-approx} applies to give \eqref{eq:y-bary}. 
Meanwhile, Proposition \ref{prop:NTK-stein} requires that $\partial_\theta {\bf f} (\cdot, \theta(0))$ are in $L^2(p)$, which follows from the  uniform boundedness of $\partial_\theta {\bf f} (\cdot, \theta(0))$ on $\calX$ by Assumption \ref{assump:C1-C2}(C1) due to that $\theta(0) \in B_r$.
The condition $t \ge \frac{ \log(1/\epsilon)}{\delta } \frac{1}{\lambda }  $
allows Proposition \ref{prop:NTK-stein} to apply to bound
 $\| \lambda \bar{\bf u}(  \cdot ,t)  - {\bf f}^* \|_{p }$ as in \eqref{eq:2eps-bound}
 for the range of $t$ being considered.
Putting together \eqref{eq:y-bary} and \eqref{eq:2eps-bound}, {by triangle inequality,} we have
\[
\| \lambda {\bf u}( x ,t)  - {\bf f}^* \|_{p }  
\le 2 \epsilon  \| {\bf f}^* \|_{p } +\frac{4\sqrt{2}}{3} L_1 L_2 \sqrt{\lambda} t^{3/2} \| {\bf f}^* \|_{p }^2
\]
and by definition 
$\sqrt{\lambda} t^{3/2} = \lambda^{-1} ( t_0  { \log(1/\epsilon)}/{\delta } )^{3/2}$, which proves \eqref{eq:2eps-bound-thm}.
\end{proof}

\subsection{{Proofs in Section \ref{subsec:NTK-finite-sample}}}

\begin{proof}[Proof of Lemma \ref{lemma:hatthetat-theta0}]
Similarly as in the proof of Proposition \ref{prop:NTK-approx}(i), we have
\[
\frac{d}{dt} \hat{L}_\lambda( \hat{\theta} (t )) = 
\langle {\partial_\theta} \hat{L}_\lambda( \hat{\theta} (t )),  \dot{ \hat{\theta}} (t) \rangle_\Theta
= - \| {\partial_\theta} \hat{L}_\lambda( \hat{\theta} (t )) \|_\Theta^2 \le 0,
\]
and for any $t \ge 0$,

\begin{align}\label{eq:bound-hatthetadot-integral}
 \int_0^t  \|  \dot{ \hat{\theta}}(s) \|_\Theta^2  ds 
& =  \int_0^t  \| {\partial_\theta} \hat{L}_\lambda( \hat{\theta} (s )) \|_\Theta^2  ds
 = \hat{L}_\lambda( \theta (0 )) - \hat{L}_\lambda( \hat{\theta} (t )).
\end{align}
By the definition \eqref{eq:def-hatL_lambda}, 
because ${\bf f}(x, \theta(0)) = \hat{\bf u}( x ,0)  = 0$,
$\hat{L}_\lambda( \theta (0 )) = 0$;
Meanwhile, let $\hat{\bf f}(x):= {\bf f}(x,  \hat{\theta}(t))$,
\begin{equation}\label{eq:hatLhatthetat-1}
 \hat{L}_\lambda( \hat{\theta}(t)) = 
    \E_{x\sim \hat{p}} \left(  
    \frac{\lambda}{2}  \| \hat{\bf f}(x) \|^2     
     -    {\bf s}_q(x) \cdot \hat{\bf f}(x)   \right)
     -  \E_{x\sim \hat{p}}  \nabla_x \cdot  \hat{\bf f}(x).  
\end{equation}
If the last term is with $\E_{x\sim {p}}$,  we have that (by that $\hat{\bf f}(x)$ is in $L^2(p)\cap \calF_0(p)$ by Assumption \ref{assump:f(x,theta)-L2-F0})
\[
- \E_{x\sim {p}}  \nabla_x \cdot  \hat{\bf f}(x)
= \E_{x\sim {p}}  \hat{\bf f}(x) \cdot {\bf s}_p(x)
= \E_{x\sim \hat{p}}  \hat{\bf f} \cdot {\bf s}_p - ( \E_{x\sim \hat{p}}  -  \E_{x\sim {p}} ) \hat{\bf f} \cdot {\bf s}_p,
\]
and thus
\begin{align}
 -  \E_{x\sim \hat{p}}  \nabla_x \cdot  \hat{\bf f}(x)
& = - \E_{x\sim {p}}  \nabla_x \cdot  \hat{\bf f}(x)
 - (\E_{x\sim \hat{p}} - \E_{x\sim {p}} ) \nabla_x \cdot  \hat{\bf f}(x)  \nonumber \\
 & =  \E_{x\sim \hat{p}}  \hat{\bf f} \cdot {\bf s}_p 
   - ( \E_{x\sim \hat{p}}  -  \E_{x\sim {p}} ) \hat{\bf f} \cdot {\bf s}_p
    - (\E_{x\sim \hat{p}} - \E_{x\sim {p}} ) \nabla_x \cdot  \hat{\bf f}(x) \nonumber \\
 & =  \E_{x\sim \hat{p}}  \hat{\bf f} \cdot {\bf s}_p  - \textcircled{1} - \textcircled{2},
\end{align}
where
\[
\textcircled{1} := ( \E_{x\sim \hat{p}}  -  \E_{x\sim {p}} ) {\bf f}(x,  \hat{\theta}(t)) \cdot {\bf s}_p(x),
\quad
\textcircled{2} := (\E_{x\sim \hat{p}} - \E_{x\sim {p}} ) \nabla_x \cdot  {\bf f}(x,  \hat{\theta}(t)). 
\]
We then have, recalling that ${\bf f}^* = {\bf s}_q -  {\bf s}_p $, 
\begin{align}
 - \hat{L}_\lambda( \hat{\theta} (t ))
& =    - \E_{x\sim \hat{p}} \left(  
    \frac{\lambda}{2}  \| \hat{\bf f} \|^2     
     -    {\bf s}_q \cdot \hat{\bf f} 
     +   {\bf s}_p \cdot \hat{\bf f}  \right)
      + \textcircled{1} + \textcircled{2}   \nonumber \\
 & =       - \E_{x\sim \hat{p}} \left(  
    \frac{ \| \lambda \hat{\bf f}  - {\bf f}^* \|^2}{2\lambda }      
     -  \frac{ \|  {\bf f}^* \|^2}{2\lambda }  \right)
      + \textcircled{1} + \textcircled{2}   \nonumber \\
  & \le      \frac{  \E_{x\sim \hat{p}}  \|  {\bf f}^* \|^2}{2\lambda }
      + \textcircled{1} + \textcircled{2} \nonumber \\
 & =        \frac{ \|  {\bf f}^* \|_p^2}{2\lambda } +  \textcircled{3}
	 + \textcircled{1} + \textcircled{2},
	 \quad 
	  \textcircled{3} :=     \frac{  (\E_{x\sim \hat{p}} - \E_{x\sim {p}}  )  \|  {\bf f}^*(x) \|^2}{2\lambda }.
	  \label{eq:bound-minushatL-hatthetat}
\end{align}
The difference $(\E_{x\sim \hat{p}} - \E_{x\sim {p}}  )  \|  {\bf f}^*(x) \|^2$ is the deviation of an independent sum sample average from its expectation.
{By Assumption \ref{assump:C3-C5}(C6), there is an integer $n_6$ (possibly depending on constants $b_2$, $b_q$) s.t. when $n > n_6$, 
under a good event $E_1$ which happens w.p. $\ge 1-n^{-10}$, 
\[
 (\E_{x\sim \hat{p}} - \E_{x\sim {p}}  )  \|  {\bf f}^*(x) \|^2 \le  \sqrt{20} b_2 \sqrt{ \frac{\log n }{n} }.
\]
}
As a result, $ \textcircled{3} \le \frac{  \sqrt{20} b_2 }{2\lambda} \sqrt{ \frac{\log n }{n} }$.
As for $\textcircled{1}$ and $\textcircled{2}$, we know that if $\hat{\theta}(t) \in {B}_r$,
then under the good event in (C5), called $E_2$, 
\begin{equation}\label{eq:bound-circle1-circle2-E2}
\max\{ |\textcircled{1}|, \, |\textcircled{2}| \}
\le \frac{C_\calF}{n^\gamma} + b_\calF \sqrt{\frac{\log n }{n}}.
\end{equation}
However, we have not shown $\hat{\theta}(t) \in B_r$ yet.
We now let $n_\lambda$  such that when $n > n_\lambda$,
 \begin{equation}\label{eq:def-n-lambda}
 n > n_6, \quad 
 { \sqrt{20} b_2} \sqrt{ \frac{\log n }{n} } + 
 4\lambda (  \frac{C_\calF}{n^\gamma} + b_\calF \sqrt{\frac{\log n }{n}} )
 <    { \|  {\bf f}^* \|_p^2}.
 \end{equation}
 Since $\|  {\bf f}^* \|_p^2 >0$ (by Assumption \ref{assump:p-q-L2}) is a  fixed constant,
 for any $\lambda > 0$,
 the l.h.s. is $\tilde{O}(n^{-1/2}) + \lambda \tilde{O}(n^{-\gamma})$ 
 and thus will be less than the r.h.s. when $n$ is large enough.
We also let $t_{\rm max}$ be defined s.t. $  \|{\bf f}^*\|_p \sqrt{{t_{\rm max}}/{\lambda}} = {r}/{2}$.
Consider being under the intersection of good events $E_1$ and $E_2$ which happens w.p. $\ge 1-3 n^{-10}$, 
 we claim that, for any $t \le t_{\rm max}$, $ \hat{\theta} (t) \in B_r$.
 
We prove this by contradiction.
If not, then by the continuity of $\hat{\theta}(t)$ over time, there must be a $ 0 < t_1 \le t_{\rm max}$ s.t. $\| \hat{\theta} (t_1)\| = r$.
By that $\|\theta(0)\| < r/2$, we have 
\begin{equation}\label{eq:thetat1-theta0-contra}
\|  \hat{\theta} (t_1) -  {\theta} (0)\| > r/2.
\end{equation}
Meanwhile, by \eqref{eq:bound-hatthetadot-integral}\eqref{eq:bound-minushatL-hatthetat},
\[
 \int_0^{t_1}  \|  \dot{ \hat{\theta}}(s) \|_\Theta^2  ds 
= - \hat{L}_\lambda( \hat{\theta} (t_1 ))
\le   \frac{ \|  {\bf f}^* \|_p^2}{2\lambda } 
	 + \textcircled{1} + \textcircled{2} +  \textcircled{3}
\]
where $\textcircled{1}$ and $\textcircled{2}$ are taking value at $t=t_1$. 
Under $E_1$, we already have $ \textcircled{3} \le \frac{  \sqrt{20} b_2}{2\lambda} \sqrt{ \frac{\log n }{n} }$;
now $\hat{\theta}(t_1) \in \overline{B}_r$, applying (C5) we know that under $E_2$ the bound \eqref{eq:bound-circle1-circle2-E2} holds. 
By the definition of $n_\lambda$ in \eqref{eq:def-n-lambda}, we have that 
\[
\textcircled{1} + \textcircled{2} +  \textcircled{3} < \frac{ \|  {\bf f}^* \|_p^2}{2\lambda }, 
\]
and as a result, 
\[
 \int_0^{t_1}  \|  \dot{ \hat{\theta}}(s) \|_\Theta^2  ds 
 < \frac{ \|  {\bf f}^* \|_p^2}{\lambda }.
 \]
 Then, by Cauchy-Schwarz, 
\[
\|\hat{\theta}(t_1) - \theta(0)\| 
\le    \int_0^{t_1} \| \dot{ \hat{\theta}}(s) \|_\Theta ds
\le \sqrt{ t_1} ( \int_0^{t_1}  \|  \dot{ \hat{\theta}}(s) \|_\Theta^2  ds)^{1/2}
< \sqrt{ \frac{ t_1}{\lambda}}  \|  {\bf f}^*   \|_p
\le \sqrt{ \frac{ t_{\rm max}}{\lambda}}  \|  {\bf f}^*   \|_p = \frac{r}{2},
\]
which means that $\|\hat{\theta}(t_1) - \theta(0)\|  < r/2$ 
and this contradicts with \eqref{eq:thetat1-theta0-contra}.

Now we have shown that for $ \hat{\theta} (t) \in B_r$ for any $t \le  t_{\rm max}$. 
Applying the same argument to bound $\int_0^{t}  \|  \dot{ \hat{\theta}}(s) \|_\Theta^2  ds $ for any such $t$, 
and in particular the upper bound of $\textcircled{1} + \textcircled{2} +  \textcircled{3} < \frac{ \|  {\bf f}^* \|_p^2}{2\lambda }$ holds, 
one can verify that 
\[
\|\hat{\theta}(t) - \theta(0)\| \le  \sqrt{ \frac{ t}{\lambda}}  \|  {\bf f}^*   \|_p
\]
and this finishes the proof of the lemma.
\end{proof}

\begin{proof}[Proof of Proposition \ref{prop:hatu-baru}]

Recall the evolution equation of $\bar{\bf u }( x, t) $ in \eqref{eq:evolution-baru}, which can be written as
\[
{\partial_t} \bar{\bf u }( x, t) 
 =  - \E_{x' \sim p} {\bf K}_0(x,x')   \circ ( \lambda \bar{\bf u}( x' ,t) - {\bf s}_q(x')) 
   - \E_{x' \sim p} {\bf K}_0(x,x')   \circ {\bf s}_p (x').
\]
Because $\| \partial_\theta {\bf f}( x ,\theta) \|$ is uniformly bounded by $L_1$ by Assumption \ref{assump:C1-C2}(C1), each column of the $d$-by-$M_\Theta$ matrix $ \partial_\theta {\bf f}( x ,\theta(0))$, which can be viewed as a vector field on $\calX$, is uniformly bounded and thus in $L^2(p) \cap \calF_0(p)$, and then
\[
\E_{x' \sim p} \partial_\theta {\bf f}( x' ,\theta(0)) \cdot {\bf s}_p (x')
= - \E_{x' \sim p} \nabla_{x'}  \cdot \partial_\theta {\bf f}( x' ,\theta(0)).
\]
As a result, we have
\[
\E_{x' \sim p} {\bf K}_0(x,x')   \circ {\bf s}_p (x') = - \E_{x' \sim p} \nabla_{x'}  \cdot {\bf K}_0(x,x').  
\]
By comparing to \eqref{eq:evolution-hatu}, we have
 \begin{align}
{\partial_t}  (\hat{\bf u } - \bar{\bf u })( x, t) 
 & =
 - \E_{x' \sim \hat{p}}  \hat{\bf K}_t(x,x')   \circ \left( \lambda  \hat{\bf u}( x' ,t) -  {\bf s}_q(x')  \right)
+ \E_{x' \sim p} {\bf K}_0(x,x')   \circ ( \lambda \bar{\bf u}( x' ,t) - {\bf s}_q(x') )  \nonumber \\
&~~~~~
   +  \E_{x' \sim \hat{p}}  \nabla_{x'}  \cdot  \hat{\bf K}_t(x,x') 
   - \E_{x' \sim p} \nabla_{x'}  \cdot {\bf K}_0(x,x'),  \nonumber \\
& =  \textcircled{1} +  \textcircled{2}  + \textcircled{3} + \textcircled{4} + \textcircled{5},
\label{eq:partialt-(haru-baru)-pf1}
\end{align}
where
\begin{align*}
 \textcircled{1}  & := 
  - \E_{x' \sim \hat{p}}  \hat{\bf K}_t(x,x')   \circ ( \lambda  \hat{\bf u}( x' ,t) -  \lambda  \bar{\bf u}( x' ,t)  ) \\
 \textcircled{2}  & :=  
  - \E_{x' \sim \hat{p}} (  \hat{\bf K}_t(x,x')  -  {\bf K}_0(x,x')  ) \circ ( \lambda  \bar{\bf u}( x' ,t) -  {\bf s}_q(x')  ) \\
 \textcircled{3}  & :=   
 - ( \E_{x' \sim \hat{p}} - \E_{x' \sim p}   ) {\bf K}_0(x,x')  \circ ( \lambda  \bar{\bf u}( x' ,t) -  {\bf s}_q(x')  ) \\
  \textcircled{4}  & :=  
   \E_{x' \sim \hat{p}}  ( \nabla_{x'}  \cdot  \hat{\bf K}_t(x,x')  - \nabla_{x'}  \cdot {\bf K}_0 (x,x') ) \\
   \textcircled{5}  & :=    
    ( \E_{x' \sim \hat{p}} -  \E_{x' \sim {p}} ) \nabla_{x'}  \cdot {\bf K}_0 (x,x').
\end{align*}
We will analyze each of the five terms respectively toward deriving a bound of $\|  \lambda ( \hat{\bf u} - \bar{\bf u})( \cdot ,t)  \|_{p } $ as has been done in the proof of Proposition \ref{prop:NTK-approx}(ii).
We restrict to when $ t \le  ( \frac{r}{ 2\| {\bf f}^*\|_p } )^2 \lambda $ by default,
and consider being under the good event in  Lemma \ref{lemma:hatthetat-theta0} assuming that $n > n_\lambda$ 
(by \eqref{eq:def-n-lambda}, this ensures that $n$ is greater than the large $n$ threshold integer $n_6$ required by (C6)),
where we have $ \hat{\theta} (t) \in B_r$ and
\begin{equation}\label{eq:bound-hatthetat-theta0-pf1}
\| \hat{\theta} (t) - {\theta} (0) \| \le  \sqrt{ \frac{t}{ \lambda}  } \|{\bf f}^*\|_p.
\end{equation}
In the below, we may omit the dependence on $t$ in notation when there is no confusion, e.g., we write $\lambda  \hat{\bf u}( x' ,t) -   {\bf s}_q(x') $ as $( \lambda  \hat{\bf u}-   {\bf s}_q)(x')  $.

\vspace{5pt}
\underline{Bound of $\| \textcircled{5} \|$}: 
By definition, 
$ \textcircled{5} =  
   \langle \partial_\theta  {\bf f}( x, {\theta}(0) ),
    ( \E_{x' \sim \hat{p}} -  \E_{x' \sim {p}} ) \nabla_{x'}  \cdot  \partial_\theta  {\bf f}( x', {\theta}(0) ) \rangle_\Theta$,
and thus
\[
\| \textcircled{5}  \| 
\le    \|  \partial_\theta  {\bf f}( x, {\theta}(0) ) \|
    \| ( \E_{x' \sim \hat{p}} -  \E_{x' \sim {p}} ) \nabla_{x'}  \cdot  \partial_\theta  {\bf f}( x', {\theta}(0) ) \|.
\]
By (C1), $ \|  \partial_\theta  {\bf f}( x, {\theta}(0) ) \| \le L_1$;
To bound $\| ( \E_{x \sim \hat{p}} -  \E_{x \sim {p}} ) \nabla_{x}  \cdot  \partial_\theta  {\bf f}( x, {\theta}(0) ) \|$, 
we utilize the matrix Bernstein inequality (see, e.g., \cite[Theorem 6.1.1]{tropp2015introduction}, reproduced as Lemma \ref{lemma:matrix-bernstein}):
The matrix $ \nabla_{x}  \cdot  \partial_\theta  {\bf f}( x_i, {\theta}(0) )$ is of size $1$-by-$M_\Theta$, and the operator norm is bounded by $L_3$ by Assumption \ref{assump:C3-C5}(C3). By Lemma \ref{lemma:matrix-bernstein}, 
let $n_5 >2 $ be an integer s.t. $n > n_5$ implies 
\begin{equation}\label{eq:def-n5}
(10 \log n + \log(2 M_\Theta))/n < (3/2)^2,
\end{equation}
then when $n > n_5$, there is a good event $E_5$ that happens w.p. $\ge 1-n^{-10}$ under which
\[
\| ( \E_{x \sim \hat{p}} -  \E_{x \sim {p}} ) \nabla_{x}  \cdot  \partial_\theta  {\bf f}( x, {\theta}(0) ) \|
\le 4 \sqrt{11} L_3 \sqrt{ \frac{\log n + \log M_\Theta}{n}}.
\]
As a result, when $n > n_5$ and under $E_5$,
\begin{equation}\label{eq:bound-circle5-pf2}
\| \textcircled{5}  \| 
\le 4 \sqrt{11} L_1 L_3 \sqrt{ \frac{\log n + \log M_\Theta}{n}}.
\end{equation}

\vspace{5pt}
\underline{Bound of $\| \textcircled{4} \|$}:
By definition, 
\begin{align*}
 \nabla_{x'}  \cdot  \hat{\bf K}_t(x,x')  - \nabla_{x'}  \cdot {\bf K}_0 (x,x') 
& =  \langle \partial_\theta  {\bf f}( x,  \hat{\theta}(t) ),  \nabla_{x'} \cdot \partial_\theta  {\bf f}(x', \hat{\theta} (t)) \rangle_\Theta \\
&~~~ 
   - \langle  \partial_\theta  {\bf f}( x, {\theta}(0) ),  \nabla_{x'} \cdot  \partial_\theta  {\bf f}(x', {\theta} (0)) \rangle_\Theta \\
& = \langle \partial_\theta  {\bf f}( x,  \hat{\theta}(t) )- \partial_\theta  {\bf f}( x,  {\theta}(0) ),  
		\nabla_{x'} \cdot \partial_\theta  {\bf f}(x', \hat{\theta} (t)) \rangle_\Theta    \\
&~~~ 
    	+	\langle \partial_\theta  {\bf f}( x,  {\theta}(0) ),
     			\nabla_{x'} \cdot \partial_\theta  {\bf f}(x', \hat{\theta} (t)) 
			- \nabla_{x'} \cdot \partial_\theta  {\bf f}(x', {\theta} (0))  \rangle_\Theta.    
\end{align*}
By that $ \hat{\theta} (0), \, \hat{\theta} (t) \in B_r$  and (C1)(C2)(C3)(C4),
\[
\|  \nabla_{x'}  \cdot  \hat{\bf K}_t(x,x')  - \nabla_{x'}  \cdot {\bf K}_0 (x,x')  \|
\le (L_2L_3 + L_1L_4) \| \hat{\theta} (t) - {\theta} (0) \|.
\]
Together with \eqref{eq:bound-hatthetat-theta0-pf1}, we have that 
\begin{equation}\label{eq:bound-circle4-pf2}
\| \textcircled{4}  \| 
\le (L_2L_3 + L_1L_4)  \sqrt{ {t}/{ \lambda}  } \|{\bf f}^*\|_p.
\end{equation}

\vspace{5pt}
\underline{Bound of $\| \textcircled{3} \|$}: 
By definition, 
$ \textcircled{3}  =   
 - \langle  \partial_\theta  {\bf f}( x, {\theta}(0) ), ( \E_{x' \sim \hat{p}} - \E_{x' \sim p}   ) \partial_\theta  {\bf f}( x', {\theta}(0) )^T
   ( \lambda  \bar{\bf u}( x' ,t) -  {\bf s}_q(x')  ) \rangle_\Theta$,
   thus 
 \begin{align}
\|  \textcircled{3}  \| 
&  \le \|  \partial_\theta  {\bf f}( x, {\theta}(0) ) \| 
 \| ( \E_{x' \sim \hat{p}} - \E_{x' \sim p}   ) \partial_\theta  {\bf f}( x', {\theta}(0) )^T
   ( \lambda  \bar{\bf u}( x' ,t) -  {\bf s}_q(x')  ) \| 
     \le L_1 \|  \textcircled{a}_3 - \textcircled{b}_3 \|,  \nonumber 
 \end{align}
where
\begin{align*}
\textcircled{a}_3 
& := ( \E_{x' \sim \hat{p}} - \E_{x' \sim p}   ) \partial_\theta  {\bf f}( x', {\theta}(0) )^T
    \lambda  \bar{\bf u}( x' ,t), \\
\textcircled{b}_3
& := 
( \E_{x' \sim \hat{p}} - \E_{x' \sim p}   ) \partial_\theta  {\bf f}( x', {\theta}(0) )^T
     {\bf s}_q(x').
\end{align*}

For $\textcircled{b}_3$, we need to bound $\| (\E_{x \sim \hat{p}} - \E_{x \sim {p}}  )  {\bf s}_q(x)^T \partial_\theta  {\bf f}( x, {\theta}(0) )\|$.
At each $x$, ${\bf s}_q(x)^T \partial_\theta  {\bf f}( x, {\theta}(0) )$ is a $1$-by-$M_\Theta$ vector,
 and the norm is bounded by {$L_1 b_q$ uniformly on $\calX$ by (C6)}.
Applying Lemma \ref{lemma:matrix-bernstein}
and consider  when $n > n_5$ which implies  $(10 \log n + \log(1+ M_\Theta))/n < (3/2)^2$, 
there is a good event $E_b$ that happens w.p. $\ge 1-n^{-10}$ under which
\begin{equation}\label{eq:bound-b2-pf1}
\| (\E_{x \sim \hat{p}} - \E_{x \sim {p}}  )  {\bf s}_q(x)^T \partial_\theta  {\bf f}( x, {\theta}(0) )\|
< 2 \sqrt{11} L_1 b_q \sqrt{\frac{ \log n + \log M_\Theta}{n}}.
\end{equation}
This shows that when $n > n_5$ and under $E_b$, 
$\| \textcircled{b}_3 \|$ is bounded by the r.h.s. of \eqref{eq:bound-b2-pf1}.

For $\textcircled{a}_3$,  we will bound the deviation to be $\tilde{O}(n^{-1/2})$ uniformly for all $t$. The observation is that though  $\bar{\bf u}(x, t) $ changes over $t$, the concentration argument of the sample average only involves the kernel ${\bf K}_0(x,x') $. 
Specifically, by definition,
\begin{align}\label{eq:expression-lambda-baruxt}
\lambda \bar{\bf u}(x, t) 
 =   - \lambda  \int_0^t \E_{y \sim p} [ {\bf K}_0(x,y) ] (\lambda \bar{\bf u}(y, s) -  {\bf f}^*(y)) ds,
\end{align}
and then 
\[
\textcircled{a}_3 
=   - \lambda \int_0^t   
\Big[ ( \E_{x' \sim \hat{p}} - \E_{x' \sim p}   ) \partial_\theta  {\bf f}( x', {\theta}(0) )^T
   \partial_\theta  {\bf f}( x', {\theta}(0) ) \Big]
   \E_{y \sim p}  \partial_\theta  {\bf f}( y, {\theta}(0) )^T 
   (\lambda \bar{\bf u}(y, s) -  {\bf f}^*(y)) ds.
\]
Note that 
$ \|\E_{y \sim p}  \partial_\theta  {\bf f}( y, {\theta}(0) )^T 
   (\lambda \bar{\bf u}(y, s) -  {\bf f}^*(y)) \| 
   \le L_1 \E_{y \sim p} \| \lambda \bar{\bf u}(y, s) -  {\bf f}^*(y)\|$ by (C1), 
and $\E_{y \sim p} \| \lambda \bar{\bf u}(y, s) -  {\bf f}^*(y)\| \le \| \lambda \bar{\bf u}(\cdot, s) -  {\bf f}^*\|_p$.
The latter can be bounded by $\| {\bf f}^*\|_p $ using the same argument as in (Claim 1) \eqref{eq:claim1-pf} proved in the proof of Proposition \ref{prop:NTK-approx}, that is, for any $t \ge 0$,
\begin{equation}\label{eq:claim1-baru}
\|  \lambda \bar{\bf u}(\cdot, t) - {\bf f}^* \|_p 
\le \|  \lambda \bar{\bf u}(\cdot, 0) - {\bf f}^* \|_p 
= \|{\bf f}^* \|_p.
\end{equation}
Putting together, this implies that 
\begin{align*}
\| \textcircled{a}_3  \|
& \le \lambda L_1 \int_0^t   
\Big\| ( \E_{x' \sim \hat{p}} - \E_{x' \sim p}   ) \partial_\theta  {\bf f}( x', {\theta}(0) )^T
   \partial_\theta  {\bf f}( x', {\theta}(0) ) \Big\|
   \E_{y \sim p} \| \lambda \bar{\bf u}(y, s) -  {\bf f}^*(y)\| ds \\
 & \le   \lambda t L_1     \| {\bf f}^*\|_p 
\Big\| ( \E_{x' \sim \hat{p}} - \E_{x' \sim p}   ) \partial_\theta  {\bf f}( x', {\theta}(0) )^T
   \partial_\theta  {\bf f}( x', {\theta}(0) ) \Big\|.
\end{align*}
Next, we bound $\| (\E_{x \sim \hat{p}} - \E_{x \sim {p}}  )    \partial_\theta  {\bf f}( x, {\theta}(0) )^T \partial_\theta  {\bf f}( x, {\theta}(0) )  \|$ by matrix Bernstein inequality Lemma \ref{lemma:matrix-bernstein}:
The matrix $   \partial_\theta  {\bf f}( x, {\theta}(0) )^T \partial_\theta  {\bf f}( x, {\theta}(0) )$ is of size $M_\Theta$-by-$M_\Theta$, and its operator norm is bounded by $L_1^2$ by (C1). 
When $n > n_5$ as defined by \eqref{eq:def-n5},
Lemma \ref{lemma:matrix-bernstein} gives that w.p. $\ge 1-n^{-10}$, 
\begin{equation}\label{eq:bound-a2-pf1}
 \| (\E_{x \sim \hat{p}} - \E_{x \sim {p}}  )    \partial_\theta  {\bf f}( x, {\theta}(0) )^T \partial_\theta  {\bf f}( x, {\theta}(0) ) \| 
	\le 2 L_1^2 \sqrt{\frac{10 \log n + \log (2M_\Theta)}{n}}
	< 2\sqrt{11} L_1^2 \sqrt{\frac{\log n+\log M_\Theta}{n}}.
\end{equation}
We call the above good event $E_a$. Thus, when $n > n_5$ and under $E_a$, 
\[
\| \textcircled{a}_3  \| 
\le  2\sqrt{11}  (\lambda t) L_1^3     \| {\bf f}^*\|_p 
 \sqrt{\frac{\log n+\log M_\Theta}{n}}.
\]
Putting together, we have that when $n > n_5$ and under $E_a \cap E_b$,
\begin{equation}\label{eq:bound-circle3-pf2}
\| \textcircled{3}  \| 
\le L_1 (\| \textcircled{a}_3  \| + \| \textcircled{b}_3  \|)
\le 2\sqrt{11} L_1^2 ( 
   (\lambda t) L_1^2     \| {\bf f}^*\|_p  +  b_q) 
\sqrt{\frac{ \log n + \log M_\Theta}{n}}.
\end{equation}

\vspace{5pt}
\underline{Bound of $\| \textcircled{2} \|$}: By definition,
\begin{align}
 \hat{\bf K}_t(x,x')  -  {\bf K}_0(x,x')
 & = \partial_\theta  {\bf f}( x,  \hat{\theta}(t) ) \partial_\theta  {\bf f}(x', \hat{\theta} (t))^T
    - \partial_\theta  {\bf f}( x, {\theta}(0) ) \partial_\theta  {\bf f}(x', {\theta} (0))^T.  \nonumber 
\end{align}
Since ${\theta} (0), \, \hat{\theta} (t) \in B_r$, by (C1)(C2), $\forall x, x' \in \calX$,
\begin{align*}
\|  \hat{\bf K}_t(x,x')  -  {\bf K}_0(x,x') \|
& \le 
 \| \partial_\theta  {\bf f}( x,  \hat{\theta}(t) ) \| \|  \partial_\theta  {\bf f}(x', \hat{\theta} (t)) -  \partial_\theta  {\bf f}(x', {\theta} (0)) \| \\
&~~~  
   + \| \partial_\theta  {\bf f}( x,  \hat{\theta}(t) ) -\partial_\theta  {\bf f}( x, {\theta}(0) )  \| \| \partial_\theta  {\bf f}(x', {\theta} (0)) \| \\
& \le  2  L_1 L_2 \| \hat{\theta} (t) - {\theta} (0) \|  \\
& \le  2  L_1 L_2  \sqrt{ \frac{t}{ \lambda}  } \|{\bf f}^*\|_p,
\end{align*}
where we used \eqref{eq:bound-hatthetat-theta0-pf1} in the last inequality. This gives that
\begin{equation}\label{eq:bound-circle2-pf1}
\| \textcircled{2}\| \le 
2  L_1 L_2  \sqrt{ \frac{t}{ \lambda}  } \|{\bf f}^*\|_p
 \E_{x' \sim \hat{p}}   \| \lambda  \bar{\bf u}( x' ,t) -  {\bf s}_q(x')  ) \|.
\end{equation}
We will bound $ \E_{x' \sim \hat{p}}   \| \lambda  \bar{\bf u}( x' ,t) -  {\bf s}_q(x')  ) \|$ to be $O(1)$. 
First, note that we have
\begin{align}
\|  \lambda \bar{\bf u}(\cdot, t) - {\bf s}_q \|_p 
& \le \|  \lambda \bar{\bf u}(\cdot, t) - ({\bf s}_q- {\bf s}_p)\|_p  + \| {\bf s}_p\|_p  \nonumber \\
& \le \|{\bf f}^* \|_p + \| {\bf s}_p\|_p  \quad \text{(by \eqref{eq:claim1-baru})} \nonumber \\
& \le  
2 \|{\bf f}^* \|_p  + b_q. 
\quad \text{(by $\| {\bf s}_p\|_p \le \|{\bf f}^* \|_p +  \| {\bf s}_q\|_p$ and (C6))}
\label{eq:bound-lambdabaru-sq-pf1}
\end{align}
We claim that when $n$ is large enough and under a high probability good event, to be specified below,
\begin{equation}\label{eq:claim-concentration-hatp2norm}
\text{(Claim)} \quad
(\E_{x \sim \hat{p}} - \E_{x \sim {p}}  ) \|  \lambda \bar{\bf u}(x, t) - {\bf s}_q(x) \|^2
\le  
\left( L_1^2  \| {\bf f}^*\|_p   + b_q \right)^2.
\end{equation}
Together with \eqref{eq:bound-lambdabaru-sq-pf1}, this will imply that 
\begin{align}
\E_{x \sim \hat{p}}   \| \lambda  \bar{\bf u}( x ,t) -  {\bf s}_q(x)  )\|
& \le 
( \E_{ x\sim \hat{p}} \|  \lambda \bar{\bf u}(x, t) - {\bf s}_q(x) \|^2 )^{1/2}  \nonumber \\
& \le { (2+L_1^2) \| {\bf f}^*\|_p  + 2 b_q }
:=C_{2,2}.
\label{eq:def-const-C22}
\end{align}
Then \eqref{eq:bound-circle2-pf1} gives
\begin{equation}\label{eq:bound-circle2-pf2}
\| \textcircled{2}\| \le 
2    L_1 L_2 C_{2,2} \sqrt{ {t}/{ \lambda}  } \|{\bf f}^*\|_p.
\end{equation}

To prove the claim \eqref{eq:claim-concentration-hatp2norm}, 
we will bound $(\E_{x \sim \hat{p}} - \E_{x \sim {p}}  )\|  \lambda \bar{\bf u}(x, t) - {\bf s}_q(x) \|^2$ to be $\tilde{O}(n^{-1/2})$ uniformly for all $t$
similarly as in the analysis of $\textcircled{a}_3$ above, where we leverage the expression \eqref{eq:expression-lambda-baruxt} of $\bar{\bf u}(x, t) $ via ${\bf K}_0(x,x') $.
Based on the expression, we have
\begin{align}
\|\lambda \bar{\bf u}(x, t)\|^2
 & =    \lambda^2  \int_0^t  \int_0^t  \E_{x_1' \sim p}  \E_{x_2' \sim p} 
  (\lambda \bar{\bf u}(x_1', s_1) -  {\bf f}^*(x_1'))^T  \nonumber \\
& ~~~ 
[{\bf K}_0(x,x_1')^T   {\bf K}_0(x,x_2')] (\lambda \bar{\bf u}(x_2', s_2) -  {\bf f}^*(x_2')) ds_1 ds_2. \nonumber
\end{align}
Meanwhile, 
$\|  \lambda \bar{\bf u}(x, t) - {\bf s}_q(x) \|^2
= \|  \lambda \bar{\bf u}(x, t) \|^2 - 2  {\bf s}_q(x)\cdot \lambda \bar{\bf u}(x, t) +  \| {\bf s}_q(x)\|^2$,
and then 
\[
(\E_{x \sim \hat{p}} - \E_{x \sim {p}}  ) \|  \lambda \bar{\bf u}(x, t) - {\bf s}_q(x) \|^2
= \textcircled{a}_2  -2 \textcircled{b}_2 + \textcircled{c}_2,
\]
where 
\begin{align*}
 \textcircled{a}_2 
 & := 
   \lambda^2  \int_0^t  \int_0^t  \E_{x_1' \sim p}  \E_{x_2' \sim p} 
  (\lambda \bar{\bf u}(x_1', s_1) -  {\bf f}^*(x_1'))^T   \\
& ~~~~~~ 
\Big[ (\E_{x \sim \hat{p}} - \E_{x \sim {p}}  )  {\bf K}_0(x,x_1')^T   {\bf K}_0(x,x_2') \Big] 
(\lambda \bar{\bf u}(x_2', s_2) -  {\bf f}^*(x_2')) ds_1 ds_2 \\
& = 
   \lambda^2  \int_0^t  \int_0^t  \E_{x_1' \sim p}  \E_{x_2' \sim p} 
  (\lambda \bar{\bf u}(x_1', s_1) -  {\bf f}^*(x_1'))^T  \partial_\theta  {\bf f}(x_1', {\theta} (0))  \\
& ~~~~~~ 
\Big[ (\E_{x \sim \hat{p}} - \E_{x \sim {p}}  )  
  \partial_\theta  {\bf f}( x, {\theta}(0) )^T
\partial_\theta  {\bf f}( x, {\theta}(0) )  \Big] 
\partial_\theta  {\bf f}(x_2', {\theta} (0))^T (\lambda \bar{\bf u}(x_2', s_2) -  {\bf f}^*(x_2')) ds_1 ds_2,
\end{align*}
and, (C1) implies that $\|  \partial_\theta  {\bf f}(x_1', {\theta} (0))  \|, \, \|\partial_\theta  {\bf f}(x_2', {\theta} (0)) \| \le L_1$, 
\begin{align}
| \textcircled{a}_2 | 
& \le 
  \lambda^2 L_1^2  
    \Big\| (\E_{x \sim \hat{p}} - \E_{x \sim {p}}  )    \partial_\theta  {\bf f}( x, {\theta}(0) )^T \partial_\theta  {\bf f}( x, {\theta}(0) )  \Big\|
\left(  \int_0^t   \E_{x_1' \sim p}  \| \lambda \bar{\bf u}(x_1', s_1) -  {\bf f}^*(x_1')\|ds_1  \right)^2 \nonumber \\
& \le (\lambda t)^2 L_1^2  \| {\bf f}^*\|_p^2
   \Big\| (\E_{x \sim \hat{p}} - \E_{x \sim {p}}  )    \partial_\theta  {\bf f}( x, {\theta}(0) )^T \partial_\theta  {\bf f}( x, {\theta}(0) )  \Big\|,
   \label{eq:bound-circle2-a-pf1}
\end{align}
where the last inequality used that
$ \E_{x \sim p}  \| \lambda \bar{\bf u}(x, s) -  {\bf f}^*(x)\| \le  \| \lambda \bar{\bf u}(\cdot, s) -  {\bf f}^*\|_p \le \| {\bf f}^*\|_p  $  by \eqref{eq:claim1-baru};
Meanwhile,
\begin{align*}
 \textcircled{b}_2
 & := -   \lambda  \int_0^t \E_{x' \sim p} 
 \Big[ (\E_{x \sim \hat{p}} - \E_{x \sim {p}}  )  {\bf s}_q(x)^T {\bf K}_0(x,x') \Big] 
 (\lambda \bar{\bf u}(x', s) -  {\bf f}^*(x')) ds\\
 &= -   \lambda  \int_0^t \E_{x' \sim p} 
 \Big[ (\E_{x \sim \hat{p}} - \E_{x \sim {p}}  )  {\bf s}_q(x)^T \partial_\theta  {\bf f}( x, {\theta}(0) )\Big]
  \partial_\theta  {\bf f}( x', {\theta}(0) )^T  (\lambda \bar{\bf u}(x', s) -  {\bf f}^*(x')) ds,
  \end{align*}
  and, similarly, 
\begin{align}
| \textcircled{b}_2 | 
& \le    \lambda L_1  \Big\| (\E_{x \sim \hat{p}} - \E_{x \sim {p}}  )  {\bf s}_q(x)^T \partial_\theta  {\bf f}( x, {\theta}(0) )\Big\|
 \int_0^t \E_{x' \sim p}   \|\lambda \bar{\bf u}(x', s) -  {\bf f}^*(x')\| ds \nonumber \\
& \le  ( \lambda t ) L_1 \| {\bf f}^*\|_p
\Big\| (\E_{x \sim \hat{p}} - \E_{x \sim {p}}  )  {\bf s}_q(x)^T \partial_\theta  {\bf f}( x, {\theta}(0) )\Big\|;
\label{eq:bound-circle2-b-pf1}
\end{align}
Finally, 
\[
\textcircled{c}_2 :=  (\E_{x \sim \hat{p}} - \E_{x \sim {p}}  )  \| {\bf s}_q(x)\|^2.
\]
To bound $| \textcircled{a}_2 | $,  $| \textcircled{b}_2 | $, and $ \textcircled{c}_2  $, we use the concentration argument respectively. 

For $\textcircled{a}_2$, recall that we have bounded  $\| (\E_{x \sim \hat{p}} - \E_{x \sim {p}}  )    \partial_\theta  {\bf f}( x, {\theta}(0) )^T \partial_\theta  {\bf f}( x, {\theta}(0) )  \|$ in \eqref{eq:bound-a2-pf1}. 
This means that when $n > n_5$ and under $E_a$, 
\begin{equation}\label{eq:bound-circle2-a-pf2}
| \textcircled{a}_2 | 
\le  2\sqrt{11}  (\lambda t)^2 L_1^4  \| {\bf f}^*\|_p^2
    \sqrt{\frac{\log n+\log M_\Theta}{n}}.
\end{equation}

For $\textcircled{b}_2$, we have bounded $\| (\E_{x \sim \hat{p}} - \E_{x \sim {p}}  )  {\bf s}_q(x)^T \partial_\theta  {\bf f}( x, {\theta}(0) )\|$ in \eqref{eq:bound-b2-pf1}. Thus, when $n > n_5$ and under $E_b$, \eqref{eq:bound-circle2-b-pf1} gives that
\begin{equation}\label{eq:bound-circle2-b-pf2}
| \textcircled{b}_2 | 
\le  2 \sqrt{11}  ( \lambda t ) L_1^2 \| {\bf f}^*\|_p
	 b_q
	 \sqrt{\frac{ \log n + \log M_\Theta}{n}}.
\end{equation}

For $\textcircled{c}_2$, by Assumption \ref{assump:C3-C5}(C6) and that $n > n_\lambda$ ensures $n > n_6$, 
there is a good event $E_c$ that happens w.p. $\ge 1-n^{-10}$ under which
\begin{equation}\label{eq:bound-circle2-c-pf2}
 \textcircled{c}_2  
 \le \sqrt{20} b_q^2 \sqrt{\frac{\log n}{n}}.
 \end{equation}
Putting together \eqref{eq:bound-circle2-a-pf2}\eqref{eq:bound-circle2-b-pf2}\eqref{eq:bound-circle2-c-pf2}, 
when $n > \max \{n_\lambda, n_5 \} $ and  under the intersection of $E_a$, $E_b$ and $E_c$,
we have
\begin{align*}
(\E_{x \sim \hat{p}} - \E_{x \sim {p}}  ) \|  \lambda \bar{\bf u}(x, t) - {\bf s}_q(x) \|^2
&   \le 
|\textcircled{a}_2|  +2 |\textcircled{b}_2| + \textcircled{c}_2 \\
& \le
2\sqrt{11} 
\left( (\lambda t) L_1^2  \| {\bf f}^*\|_p   + b_q \right)^2
\sqrt{\frac{\log n+\log M_\Theta}{n}}.
\end{align*}
Let $n_{\lambda,t}$ be the integer s.t. when $n > n_{\lambda,t}$, 
\begin{equation}\label{eq:def-n-lambda-t}
2\sqrt{11} \max\{ (\lambda t)^2, 1\} \sqrt{\frac{\log n+\log M_\Theta}{n}} < 1,
\end{equation}
then we have shown that the claim \eqref{eq:claim-concentration-hatp2norm} holds when $n > \max\{ n_\lambda, n_5, n_{\lambda,t} \}$  and under $E_a \cap E_b \cap E_c$.

\vspace{5pt}
\underline{Analysis of \eqref{eq:partialt-(haru-baru)-pf1}}:
We handle \textcircled{1} based on the proved bounds of the other four terms in \eqref{eq:partialt-(haru-baru)-pf1}.
Define ${\textcircled{II} }(x,t) := \textcircled{2}  + \textcircled{3} + \textcircled{4} + \textcircled{5}$.
Collecting \eqref{eq:bound-circle5-pf2}\eqref{eq:bound-circle4-pf2}\eqref{eq:bound-circle3-pf2}\eqref{eq:bound-circle2-pf2},
we have that when $n > \max\{ n_\lambda, n_5, n_{\lambda,t} \}$  and under the intersection (of the good event in Lemma \ref{lemma:hatthetat-theta0} and) $E_a \cap E_b \cap E_c \cap E_5$, 
\begin{align}
\| \textcircled{II}(x,t) \|
& \le
 ( 2    L_1 L_2 C_{2,2}   + L_2L_3 + L_1L_4  ) \sqrt{ {t}/{ \lambda}  } \|{\bf f}^*\|_p \nonumber \\
&~~~
+ 2\sqrt{11}  ( 
   \lambda t  L_1^4     \| {\bf f}^*\|_p  + L_1^2  b_q + 2  L_1 L_3) 
	\sqrt{\frac{ \log n + \log M_\Theta}{n}} \nonumber \\
& \le C_{2,1} \left(  \sqrt{ {t}/{ \lambda}  } \|{\bf f}^*\|_p + (\lambda t + 1) \sqrt{\frac{ \log n + \log M_\Theta}{n}}  \right)=: g(t), 
\label{eq:def-g(t)-growth}
\end{align}
where 
\begin{equation}\label{eq:def-const-C21}
C_{2,1} : = \max\{
 2    L_1 L_2 C_{2,2}   + L_2L_3 + L_1L_4  , \,
2\sqrt{11} L_1^4  \| {\bf f}^*\|_p, \,
2\sqrt{11} (L_1^2 b_q + 2  L_1 L_3) \}, 
\end{equation}
and $C_{2,2}$ is defined as in \eqref{eq:def-const-C22}.
Back to \eqref{eq:partialt-(haru-baru)-pf1}, we have
\begin{equation}\label{eq:partialt-(haru-baru)-pf2}
{\partial_t}  (\hat{\bf u } - \bar{\bf u })( x, t) 
=  \textcircled{1}  +   \textcircled{II}(x,t)
=  - \E_{x' \sim \hat{p}}  \hat{\bf K}_t(x,x')   \circ \lambda ( \hat{\bf u} - \bar{\bf u} ) ( x' ,t)   +  \textcircled{II}(x,t). 
\end{equation}
We first derive a bound of 
\[ 
\| \lambda( \hat{\bf u } - \bar{\bf u })(\cdot, t) \|_{\hat{p}}
:=
(\E_{x\sim \hat{p}}\| \lambda( \hat{\bf u } - \bar{\bf u })(x, t) \|^2)^{1/2},
\]
 namely the mean squared error on training samples. By \eqref{eq:partialt-(haru-baru)-pf2}, 
\begin{align}
\frac{d}{dt} \frac{1}{2} \| \lambda( \hat{\bf u } - \bar{\bf u })(\cdot, t) \|_{\hat{p}}^2
&= \lambda^2
\E_{x\sim \hat{p}} (\hat{\bf u } - \bar{\bf u })( x, t)^T {\partial_t}  (\hat{\bf u } - \bar{\bf u })( x, t) \nonumber \\
&\le
 \lambda^2
\E_{x\sim \hat{p}} (\hat{\bf u } - \bar{\bf u })( x, t)^T \textcircled{II}(x,t)  \nonumber \\
& \le 
 \lambda^2 \E_{x\sim \hat{p}} \| (\hat{\bf u } - \bar{\bf u })( x, t)\| \|\textcircled{II}(x,t) \| \nonumber  \\
 & \le  \lambda^2 g(t) \E_{x\sim \hat{p}} \| (\hat{\bf u } - \bar{\bf u })( x, t)\| \nonumber \\
 & \le  \lambda g(t)  \| \lambda (\hat{\bf u } - \bar{\bf u })( \cdot, t)\|_{\hat{p}}, \nonumber
\end{align}
where in the first inequality we used that  
\[
 \E_{x\sim \hat{p}} (\hat{\bf u } - \bar{\bf u })( x, t)^T   \textcircled{1}
 = 
 -  \lambda  \E_{x\sim \hat{p}}  \E_{x' \sim \hat{p}}  (\hat{\bf u } - \bar{\bf u })( x, t)^T [ \hat{\bf K}_t(x,x')] (\hat{\bf u} -    \bar{\bf u} )( x' ,t)  
\le 0. 
\]
Using the same argument as in the proof of Proposition \ref{prop:NTK-approx}(ii) based on \eqref{eq:dt-delta2-2}, we have that for any $t$ being considered,
\begin{align}
\| \lambda( \hat{\bf u } - \bar{\bf u })(\cdot, t) \|_{\hat{p}} 
& \le 2 \lambda \int_0^t g(s) ds  \nonumber \\
& \le  2 \lambda \int_0^t g(t) ds   \quad \text{($g(t)$ is monotone in $t$)} \nonumber \\
& = 2 \lambda t g(t)  .
\label{eq:hatp-hatu-baru-bound}
\end{align}
Next, we use \eqref{eq:hatp-hatu-baru-bound} to bound $\| \lambda( \hat{\bf u } - \bar{\bf u })(\cdot, t) \|_{{p}}$.
Again from \eqref{eq:partialt-(haru-baru)-pf2}, we have
\begin{align}
& \frac{d}{dt} \frac{1}{2} \| \lambda( \hat{\bf u } - \bar{\bf u })(\cdot, t) \|_{{p}}^2
= \lambda^2 \E_{x\sim{p}} (\hat{\bf u } - \bar{\bf u })( x, t)^T {\partial_t}  (\hat{\bf u } - \bar{\bf u })( x, t)  \nonumber  \\
&~~~
 =-  \lambda^2 \E_{x\sim{p}}  \E_{x' \sim \hat{p}} 
 	(\hat{\bf u } - \bar{\bf u })( x, t)^T [ \hat{\bf K}_t(x,x')]
	 \lambda ( \hat{\bf u} - \bar{\bf u} ) ( x' ,t) 
	 + \lambda^2 \E_{x\sim{p}} (\hat{\bf u } - \bar{\bf u })( x, t)^T   \textcircled{II}(x,t)  \nonumber  \\
&~~~ 
\le  \lambda L_1^2 \E_{x\sim{p}} \| \lambda(\hat{\bf u } - \bar{\bf u })( x, t) \|  \E_{x' \sim \hat{p}}   \| \lambda(\hat{\bf u} - \bar{\bf u} ) ( x' ,t) \|
 + \lambda \E_{x\sim{p}} \| \lambda (\hat{\bf u } - \bar{\bf u })( x, t)\|  \| \textcircled{II}(x,t)\|  \nonumber  \\
&~~~ 
\le  \lambda L_1^2 \| \lambda( \hat{\bf u } - \bar{\bf u })(\cdot, t) \|_{{p}}  \| \lambda( \hat{\bf u } - \bar{\bf u })(\cdot, t) \|_{\hat{p}} 
 + \lambda g(t)  \| \lambda( \hat{\bf u } - \bar{\bf u })(\cdot, t) \|_{{p}}  \nonumber  \\
&~~~
\le   \lambda ( L_1^2  2\lambda t +   1 ) g(t)  \| \lambda( \hat{\bf u } - \bar{\bf u })(\cdot, t) \|_{{p}}  \quad \text{(by \eqref{eq:hatp-hatu-baru-bound})}
\nonumber 
\end{align}
where in the first inequality we used the upper bound
$\|  \hat{\bf K}_t(x,x')\|
= \|  \partial_\theta  {\bf f}( x, \hat{\theta}(t) )  \partial_\theta  {\bf f}( x, \hat{\theta}(t) )^T \|
\le L_1^2$ by (C1). 
Using the argument by integrating over $t$ again, we have
\begin{align*}
\| \lambda( \hat{\bf u } - \bar{\bf u })(\cdot, t) \|_{{p}} 
& \le 2 \lambda  \int_0^t       ( 2 L_1^2  \lambda s +   1 ) g(s)   ds \\
& \le 
2 \lambda t      ( 2 L_1^2  \lambda t +   1 ) g(t)  
\quad \text{($ ( 2 L_1^2  \lambda t +   1 ) g(t)$ is monotone in $t$)}
\end{align*}
Inserting the definition of $g(t)$ in \eqref{eq:def-g(t)-growth}, we have 
\begin{align*}
\| \lambda( \hat{\bf u } - \bar{\bf u })(\cdot, t) \|_{{p}} 
& \le  2 C_{2,1} 
(   1 + 2 L_1^2  \lambda t  )  \left(   \lambda^{1/2} t^{3/2} \|{\bf f}^*\|_p + (\lambda t + 1)  \lambda t  \sqrt{\frac{ \log n + \log M_\Theta}{n}}  \right), 
\end{align*}
which proves \eqref{eq:bound-prop-hatu-baru} by defining $C_2:= 2 C_{2,1}\max\{1,  2L_1^2\}  $.
By the definition of $C_{2,1}$ in \eqref{eq:def-const-C21} and $C_{2,2}$ in \eqref{eq:def-const-C22}, 
\[
C_2 \le c(L_1, \cdots, L_4) (1+ b_q +  \| {\bf f}^*\|_p),
\]
where $c(L_1, \cdots, L_4)$  an $O(1)$ constant determined by $L_1, L_2, L_3, L_4$.
\end{proof}

\begin{proof}[Proof of Theorem \ref{thm:NTK-stein-finite-sample}]
Similarly as in the proof of Theorem \ref{thm:NTK-stein}, the required largeness of $\lambda$ guarantees that the range of $t_0$ in \eqref{eq:cond-t0-thm-finite-sample} is non-empty.
Proposition \ref{prop:NTK-stein} applies same as before,
and $t_0 \ge 1$ ensures that the bound of $\| \lambda \bar{\bf u}(  \cdot ,t)  - {\bf f}^* \|_{p }$ as in \eqref{eq:2eps-bound} holds. 

The upper bound of $t_0$ in \eqref{eq:cond-t0-thm-finite-sample} ensures that $t$ falls in the needed range by Proposition \ref{prop:hatu-baru};
We consider large $n$ as required by Proposition \ref{prop:hatu-baru},
and, by Remark \ref{rk:n-C-prop-finite-sample}, 
the three thresholds $ n_\lambda, n_5, n_{\lambda,t}$ are determined by $\lambda$, $ \|{\bf f}^*\|_p$, $\log M_\Theta$ and $\lambda t = t_0 \frac{ \log (1/\epsilon) }{\delta}$. 
We also consider being under the intersection of the good events in Proposition \ref{prop:hatu-baru}, which happens w.p. $\ge 1-7 n^{-10}$, then \eqref{eq:bound-prop-hatu-baru} holds. 

Putting together \eqref{eq:2eps-bound} and \eqref{eq:bound-prop-hatu-baru}, by triangle inequality, we have
\[
\| \lambda \hat{\bf u}( x ,t)  - {\bf f}^* \|_{p }  
\le 2 \epsilon  \| {\bf f}^* \|_{p } 
+ C_2 
(   1 +   \lambda t  )\left(   \lambda^{1/2} t^{3/2} \|{\bf f}^*\|_p + (1+\lambda t )  \lambda t  \sqrt{\frac{ \log n + \log M_\Theta}{n}}  \right). 
\]
Then \eqref{eq:2eps-bound-thm-finite-sample} follows by the equality $\lambda t = t_0 \frac{ \log (1/\epsilon) }{\delta}$ and defining
\begin{align*}
\kappa_1:= (1+ t_0 \frac{ \log (1/\epsilon) }{\delta}) \left( t_0 \frac{ \log (1/\epsilon) }{\delta} \right)^{3/2},
\quad
\kappa_2 :=(1+t_0 \frac{ \log (1/\epsilon) }{\delta})^2 t_0\frac{ \log (1/\epsilon) }{\delta}.
\end{align*}
\end{proof}

\subsection{{Proof in Section \ref{sec:gof_test}}}

\begin{proof}[Proof of  Corollary \ref{cor:GoF-test-power}]
Applying Theorem \ref{thm:NTK-stein-finite-sample} gives that, under $E_{\rm tr}$,
$ \| \lambda \hat{\bf f} - {\bf f}^* \|_p $ is upper bounded by the r.h.s. of \eqref{eq:2eps-bound-thm-finite-sample} where setting $n = n_{\rm tr}$.
The condition of the corollary further implies that
\[
 \| \lambda \hat{\bf f} - {\bf f}^* \|_p  < 0.9 \| {\bf f}^* \|_{p }.
\]
Then, 
\[
\langle \lambda \hat{\bf f} , {\bf f}^*\rangle - \| {\bf f}^* \|_p^2 
=
\langle \lambda \hat{\bf f} - {\bf f}^*, {\bf f}^*\rangle
\ge - \| \lambda \hat{\bf f} - {\bf f}^* \|_p \| {\bf f}^* \|_p
> - 0.9  \| {\bf f}^* \|_p^2,
\]
which gives that $\langle \lambda \hat{\bf f} , {\bf f}^*\rangle >  0.1 \| {\bf f}^* \|_p^2$.
Meanwhile, by that $ \hat{\bf f} \in \calF_0(p) \cap L^2(p)$ (Assumption \ref{assump:f(x,theta)-L2-F0}), 
\eqref{eq:SD-as-innerproduct} gives that 
\begin{equation}\label{eq:bound-mean-gof-power-pf}
\mathbb{E}_{x\sim p}  T_q \hat{\bf f}(x) 
= \langle {\bf f}^*, \hat{\bf f} \rangle_p 
> \frac{0.1}{\lambda} \| {\bf f}^* \|_p^2.
\end{equation}

Next, we bound the deviation of $\hat{T}$ and $\hat{T}_{\rm null}$ by concentration.
By definition,
\[
\hat{T} = \E_{x \sim \hat{p}} T_q \hat{\bf f}(x), \quad \hat{p} := \frac{1}{n_{\rm GoF}} \sum_{i=1}^{n_{\rm GoF}} \delta_{x_i}.
\]
Recall that $\hat{\theta}(t) \in B_r$ by Lemma \ref{lemma:hatthetat-theta0}, then, by Assumption \ref{assump:C3-C5}(C5)(C6),
\[
|T_q \hat{\bf f}(x_i) | 
\le | {\bf s}_q \cdot \hat{\bf f}(x_i) |+  |\nabla \cdot \hat{\bf f}(x_i) |
\le b_{(0)} b_q + b_{(1)} =: b.
\]
By the Hoeffding's inequality, Lemma \ref{lemma:Hoeffding}, for any $t > 0$, denote $n_{\rm GoF}$ by $n$, 
\[
\Pr [ 
\E_{x \sim \hat{p}} T_q \hat{\bf f}(x) - \E_{x \sim {p}} T_q \hat{\bf f}(x) \le -t 
| E_{\rm tr}]
\le \exp\{- \frac{ n t^2}{ 2 b^2} \}.
\]
By union bound, and the condition that $7n_{\rm tr}^{-10} < 0.1 \min\{\alpha,  \beta\}$,
\[
\Pr [ 
\E_{x \sim \hat{p}} T_q \hat{\bf f}(x) - \E_{x \sim {p}} T_q \hat{\bf f}(x) \le -t ]
\le \exp\{- \frac{ n t^2}{ 2 b^2} \} + 7n_{\rm tr}^{-10} 
< \exp\{- \frac{ n t^2}{ 2 b^2} \} + 0.1 \beta.
\]
This gives that w.p. $\ge 1-\beta$, 
\[
\hat{T}
\ge  \E_{x \sim {p}} T_q \hat{\bf f}(x) - t_1, \quad t_1:= \sqrt{2} b \sqrt{ \frac{  \log \frac{1}{0.9 \beta}  }{n_{\rm GoF}}}.
\]
Similarly, 
\[
\hat{T}_{\rm null} 
= \E_{y \sim \hat{q}} T_q \hat{\bf f}(y), \quad \hat{q} := \frac{1}{n_{\rm GoF}} \sum_{i=1}^{n_{\rm GoF}} \delta_{y_i},
\]
and w.p. $\ge 1-\alpha$,
 \[
\hat{T}_{\rm null}
\le  t_2, \quad t_2:= \sqrt{2} b \sqrt{ \frac{  \log \frac{1}{0.9 \alpha}  }{n_{\rm GoF}}}.
\]
We set $t_2$ to be the test threshold, 
combined with \eqref{eq:bound-mean-gof-power-pf}, we know that the target  significance level and power can be achieved if
$ t_1 + t_2 \le  \frac{0.1}{\lambda} \| {\bf f}^* \|_p^2$. 
Together with that $\sqrt{ \log \frac{1}{0.9 \alpha} }\le 0.5 + \sqrt{ \log \frac{1}{\alpha}}$ and similarly for $\sqrt{\log \frac{1}{0.9 \beta} }$,
this proves the corollary.
\end{proof}

\section*{Acknowledgement}

The work is supported by NSF DMS-2134037. 
M.R. and Y.X. are partially supported by an NSF CAREER CCF-1650913, and NSF DMS-2134037, CMMI-2015787, CMMI-2112533, DMS-1938106, and DMS-1830210. X.C. is also partially supported by NSF DMS-1818945, NIH R01GM131642 and the Alfred P. Sloan Foundation.

\bibliographystyle{plain}
\bibliography{main}

\appendix

\setcounter{figure}{0} \renewcommand{\thefigure}{A.\arabic{figure}}
\setcounter{table}{0} \renewcommand{\thetable}{A.\arabic{table}}
\setcounter{equation}{0} \renewcommand{\theequation}{A.\arabic{equation}}
\setcounter{remark}{0} \renewcommand{\theremark}{A.\arabic{remark}}

\section{Additional technical lemmas and proofs}\label{app:add-proofs}

\subsection{Proofs of technical lemmas}
\begin{proof}[Proof of Lemma \ref{lemma:sol-SD-L2}]
For any ${\bf f} \in \calF_0(p)$, because $p {\bf f} |_{\partial \calX} =0$, we have
\[
\int_\calX p \nabla \cdot {\bf f}
=  - \int_\calX \nabla p \cdot {\bf f}
\]
Then for any ${\bf f} \in L^2(p) \cap \calF_0(p)$, 
\begin{align}
\mathbb{E}_{x\sim p}  T_q {\bf f}(x) 
&= \int_{\calX}  p ( {\bf s}_q \cdot {\bf f} + \nabla \cdot {\bf f} ) 
= \int_{\calX}  p {\bf s}_q \cdot {\bf f} - \int_\calX \nabla p  \cdot {\bf f}  \nonumber \\
& =\int_{\calX}  p ( {\bf s}_q  - {\bf s}_p )\cdot {\bf f} 
= \E_{x \sim p }  {\bf f}^* \cdot {\bf f}  = \langle {\bf f}^*, {\bf f} \rangle_p. \label{eq:SD-L2-objective-pf}
\end{align}
Note that the integrals are well-defined because both ${\bf s}_p$ and ${\bf s}_q$ are in $L^2(p)$.
By \eqref{eq:SD-L2-objective-pf}, whenever $ \| {\bf f} \|_{p} \le r$, 
\begin{equation}\label{eq:bound-SD-f-pf}
\mathbb{E}_{x\sim p}  T_q {\bf f}(x)  \le \| {\bf f}^* \|_{p}  \| {\bf f} \|_{p} 
\le r \| {\bf f}^* \|_{p},
\end{equation}
where $\| {\bf f}^* \|_{p} < \infty$ because $ {\bf s}_q$ and  ${\bf s}_p$ are in $L^2(p)$.
This proves that ${\rm SD}_{r} (p,q)$ is upper bounded by $r \| {\bf f}^* \|_{p}$
and thus is finite. 
To show that ${\rm SD}_{r} (p,q) = r \| {\bf f}^* \|_{p}$, 
first
when ${\bf f}^* = 0$ in $L^2(p)$ then  ${\rm SD}_{r} (p,q) = 0$ by \eqref{eq:bound-SD-f-pf}.
If $ \|   {\bf f}^* \|_{p } > 0$, 
by property of the inner product, 
the ${\bf f}$ that is a positive multiple of ${\bf f}^*$ and has
$ \| {\bf f} \|_{p} = r $ achieves the supremum of value $r \| {\bf f}^* \|_{p}$.
{Because ${\bf f}^* \in \calF_0(p)$, 
this maximizer $ {\bf f}  =  ( { r }/{  \|   {\bf f}^* \|_{p } }) {\bf f}^* $ is also in $\calF_0(p)$.}
\end{proof}
\begin{proof}[Proof of Lemma \ref{lemma:GD-theta-eqn}]

By the definition of $L_\lambda$ in \eqref{eq:def-L_lambda-theta} and \eqref{eq:Llambda-is-MSE-const}, {the latter applies to ${\bf f} = {\bf f}(\cdot, \theta)$ because ${\bf f}(\cdot, \theta)$ is in $L^2(p) \cap \calF_0(p)$ by Assumption \ref{assump:f(x,theta)-L2-F0}},
\[
L_\lambda( \theta) 
  = - \langle {\bf f}^*, {\bf f}(\cdot, \theta) \rangle_p + \frac{\lambda}{2} \| {\bf f} (\cdot, \theta) \|_p^2
  = \E_{x \sim p } \left(  - {\bf f}^*(x) \cdot    {\bf f}(x, \theta) + \frac{\lambda}{2} \| {\bf f}(x, \theta)\|^2 \right).
\]
Then we have that 
\begin{align*}
\frac{\partial}{\partial \theta}L_\lambda( \theta) 
& = \E_{x \sim p } \big(
- {\bf f}^*(x) \cdot \partial_\theta {\bf f}(x, \theta)   
+  \lambda   { \bf f}( x, \theta ) \cdot  \partial_\theta {\bf f}(x, \theta)    \big) \\
& =\E_{x \sim p } \partial_\theta {\bf f}(x, \theta) \cdot \big(
  \lambda   { \bf f}( x, \theta )    - {\bf f}^*(x)  \big).
\end{align*}
This proves \eqref{eq:GD-theta-eqn} according to the GD dynamic defined in \eqref{eq:def-GD-theta}.
\end{proof}

\begin{proof}[Proof of Lemma \ref{lemma:u(x,t)-dynamic}]
Inserting \eqref{eq:GD-theta-eqn} to \eqref{eq:dudt-1} gives that 
\begin{equation}
 \frac{ \partial}{\partial t}{\bf u }( x, t) 
 = - \langle \partial_\theta {\bf f}( x, \theta(t) ),  \E_{x' \sim p} \partial_\theta {\bf f}(x',\theta (t)) \cdot \big(  \lambda  {\bf u}( x' , t) - {\bf f}^*(x')  \big)
 \rangle_\Theta,
\end{equation}
where we used the definition \eqref{eq:def-u(x,t)}.
The lemma then follows by the definition \eqref{eq:def-NTK-t} and the linearity of inner-product $ \langle \cdot, \cdot  \rangle_\Theta $.
\end{proof}

\begin{proof}[Proof of Lemma \ref{lemma:hatu-dynamic}]
By  \eqref{eq:def-hatL_lambda},
\begin{equation}
\partial_\theta \hat{L}_\lambda(\theta) = 
 \E_{x\sim \hat{p}} \left(  
 \partial_\theta  {\bf f}(x, \theta) \cdot  (   \lambda   {\bf f}(x, \theta) - {\bf s}_q (x) )
  - \nabla_x \cdot  \partial_\theta {\bf f} (x, \theta)
   \right),
\end{equation}
and combined with \eqref{eq:def-GD-theta-n}\eqref{eq:partialt_hatu_1}, we have
\begin{align*}
 {\partial_t} \hat{\bf u }( x, t) 
& = - \langle \partial_\theta {\bf f}( x, \hat{\theta}(t) ), 
 \partial_\theta \hat{L}_\lambda( \hat{\theta}(t)) 
 \rangle_\Theta \\
&=  - \left\langle \partial_\theta {\bf f}( x, \hat{\theta}(t) ), 
 \E_{x' \sim \hat{p}} \left(  
 \partial_\theta  {\bf f}(x', \hat{\theta}(t)) \cdot  (  \lambda  \hat{\bf u}(x', t) - {\bf s}_q (x') )
  - \nabla_{x'} \cdot  \partial_\theta {\bf f} (x', \hat{\theta}(t))
   \right)
    \right\rangle_\Theta \\
& =  -  \E_{x' \sim \hat{p}} 
\left( 
\langle \partial_\theta {\bf f}( x, \hat{\theta}(t) ),   
 \partial_\theta  {\bf f}(x', \hat{\theta}(t)) 
\rangle_\Theta 
 \circ (  \lambda  \hat{\bf u}(x', t) - {\bf s}_q (x') )
  - \langle \partial_\theta {\bf f}( x, \hat{\theta}(t) ), \nabla_{x'} \cdot  \partial_\theta {\bf f} (x', \hat{\theta}(t))
    \rangle_\Theta
    \right).
\end{align*}
This proves the lemma by the definition of $ \hat{\bf K}_t(x,x') $ in \eqref{eq:def-NTK-t-n}.
\end{proof}

\subsection{Concentration lemmas}

The following lemma can be derived from Classical Hoeffding's inequality.
\begin{lemma}[Hoeffding]\label{lemma:Hoeffding}
Suppose $X_1, \cdots, X_n$ are independent random variables, $|X_i| \le L$. Then for any $t \ge 0$, 
\[
\Pr [  \frac{1}{n}\sum_{i=1}^n (X_i - \E X_i)  \ge t ], \,
\Pr [  \frac{1}{n}\sum_{i=1}^n (X_i - \E X_i)  \le -t ],
\le \exp\{ - \frac{ n t^2}{  2 L^2 }\}.
\]

\end{lemma}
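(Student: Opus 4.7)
The plan is to prove this by the standard Chernoff method combined with Hoeffding's moment generating function bound for bounded random variables. Let $S_n := \frac{1}{n}\sum_{i=1}^n (X_i - \E X_i)$ and denote $Y_i := X_i - \E X_i$, which satisfies $\E Y_i = 0$ and $|Y_i| \le 2L$ (more refined: $Y_i \in [-L - \E X_i, L - \E X_i]$, an interval of length at most $2L$).

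First I would apply the Chernoff bound: for any $\lambda > 0$,
\[
\Pr[S_n \ge t] \le e^{-\lambda n t} \, \E\, e^{\lambda \sum_i Y_i} = e^{-\lambda n t} \prod_{i=1}^n \E\, e^{\lambda Y_i},
\]
using independence of the $Y_i$. Next I would invoke Hoeffding's lemma: if $Y$ is a zero-mean random variable supported on an interval of length $\ell$, then $\E\, e^{\lambda Y} \le \exp(\lambda^2 \ell^2/8)$. Applying this with $\ell = 2L$ gives $\E\, e^{\lambda Y_i} \le \exp(\lambda^2 L^2/2)$, so
\[
\Pr[S_n \ge t] \le \exp\!\left(-\lambda n t + \tfrac{n \lambda^2 L^2}{2}\right).
\]
Finally, I would optimize over $\lambda > 0$; the quadratic in $\lambda$ is minimized at $\lambda^* = t/L^2$, yielding the exponent $-n t^2/(2L^2)$ and hence the claimed bound. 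The lower-tail bound follows by applying the upper-tail argument to $-X_i$, which still satisfies $|-X_i| \le L$.

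The only nonroutine ingredient is Hoeffding's lemma itself, for which the standard argument is: by convexity of $y \mapsto e^{\lambda y}$, bound $e^{\lambda Y}$ by the chord on $[a,b]$ containing the support of $Y$; taking expectations kills the linear term (since $\E Y = 0$) and reduces the problem to bounding $\log(\tfrac{b}{b-a} e^{\lambda a} + \tfrac{-a}{b-a} e^{\lambda b})$ by $\lambda^2(b-a)^2/8$, which is shown by Taylor expansion of this cumulant generating function in $\lambda$ together with the inequality that the second derivative is bounded by $(b-a)^2/4$. This is the only step that requires genuine work; everything else is algebraic manipulation. Since the lemma is labeled ``Classical Hoeffding,'' I would in practice just cite a standard reference (e.g., \cite{wainwright2019high}) rather than reprove Hoeffding's lemma from scratch.
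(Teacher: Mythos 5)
Your proposal is correct: the Chernoff bound combined with Hoeffding's lemma (with interval length $2L$, giving $\E e^{\lambda Y_i}\le e^{\lambda^2 L^2/2}$ and optimal $\lambda^*=t/L^2$) yields exactly the stated exponent $-nt^2/(2L^2)$, and the lower tail follows by symmetry. This is precisely the classical Hoeffding argument the paper invokes—the paper itself gives no proof beyond citing the classical inequality—so your approach matches the intended one.
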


The following lemma can be directly derived from \cite[Theorem 6.1.1]{tropp2015introduction}.
\begin{lemma}[Matrix Bernstein \cite{tropp2015introduction}]\label{lemma:matrix-bernstein}
Let $X_i$ be a sequence of $n$ independent, random, real-valued matrices of size $d_1$-by-$d_2$. 
Assume that $\E X_i = 0$ and $\| X_i \| \le L$ for each $i$, and  $\nu > 0$ be such that
\[
\| \frac{1}{n} \sum_{i=1}^n  \E X_i X_i^T\| , \, \| \frac{1}{n} \sum_{i=1}^n  \E X_i^T X_i\|   \le \nu.
\]
Then, for any $t \ge 0$,
\[
\Pr [\| \frac{1}{n} \sum_{i=1}^n X_i \| \ge t] \le (d_1 + d_2) \exp\{ - \frac{ nt^2}{ 2(\nu + L t/3)} \}.
\]
\end{lemma}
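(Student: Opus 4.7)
The plan is to derive the stated inequality as a direct rescaling of Tropp's Matrix Bernstein inequality (Theorem 6.1.1 in \cite{tropp2015introduction}), which asserts that for independent, centered, $d_1 \times d_2$ random matrices $Z_i$ with $\|Z_i\| \le \tilde L$ almost surely and matrix variance proxy
\[
\tilde\sigma^2 := \max\Big\{\Big\|\sum_{i=1}^n \E Z_i Z_i^T\Big\|,\ \Big\|\sum_{i=1}^n \E Z_i^T Z_i\Big\|\Big\},
\]
one has $\Pr[\|\sum_i Z_i\| \ge s] \le (d_1+d_2)\exp\{-s^2/(2(\tilde\sigma^2 + \tilde L s/3))\}$ for every $s \ge 0$.

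First I would set $Z_i := X_i/n$ for $i=1,\dots,n$. These remain independent and centered; the boundedness $\|X_i\| \le L$ becomes $\|Z_i\| \le L/n$, so we take $\tilde L = L/n$. The variance parameter transforms as
\[
\Big\|\sum_{i=1}^n \E Z_i Z_i^T\Big\| \;=\; \frac{1}{n}\,\Big\|\frac{1}{n}\sum_{i=1}^n \E X_i X_i^T\Big\| \;\le\; \frac{\nu}{n},
\]
and identically for the adjoint sum, so $\tilde\sigma^2 \le \nu/n$. Next I would apply Tropp's bound to $\sum_i Z_i = \frac{1}{n}\sum_i X_i$ at deviation level $s=t$, yielding
\[
\Pr\!\left[\Big\|\frac{1}{n}\sum_{i=1}^n X_i\Big\| \ge t\right] \;\le\; (d_1+d_2)\exp\!\left\{-\frac{t^2/2}{\nu/n + (L/n)\,t/3}\right\} \;=\; (d_1+d_2)\exp\!\left\{-\frac{n t^2}{2(\nu + Lt/3)}\right\},
\]
which is precisely the claimed inequality.

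Since the lemma is explicitly advertised as a direct consequence of the cited result, there is no genuine obstacle: the only thing to check is that the $1/n$ normalization in the hypotheses (variance bound scaled by $1/n$, not a raw sum) matches the rescaling $Z_i = X_i/n$, which it does by the algebraic identity above. The one minor care is to state the result for rectangular $d_1 \times d_2$ matrices rather than only Hermitian ones; Tropp's theorem already covers this case (via the Hermitian dilation), so the dimension factor $d_1+d_2$ in the prefactor carries over without modification. No additional ingredients (e.g., symmetrization, a Laplace-transform argument, or the matrix MGF machinery) need to be developed here.
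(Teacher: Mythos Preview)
Your derivation is correct and matches the paper's treatment: the paper does not spell out a proof but simply states that the lemma ``can be directly derived from \cite[Theorem 6.1.1]{tropp2015introduction}'', and the $Z_i = X_i/n$ rescaling you carry out is exactly that direct derivation.
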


\section{{Additional experiments}}

\subsection{{Supplementary Tables}}\label{app:supp_tables}

\begin{table}[h!]
    \centering
    \begin{tabular}{r|c c|c c}
    \hline
    \rule{0pt}{10pt}Regularization ($\lambda$) & Mean Power & Epoch & Mean Selected $\widehat{\rm MSE}_q$ & Mean Stopping Epoch  \\
    \hline
    \rule{0pt}{10pt}$1\mathrm{e}-3$ & 0.814 $\pm 0.018$ & 60 & 0.041 & 59.5 \\ 
    $1\mathrm{e}-2$ & 0.808 $\pm 0.027$ & 50 & 0.027 & 49.4 \\ 
    $1\mathrm{e}-1$ & 0.825 $\pm 0.040$ & 30 & 0.021 & 27.8 \\ 
    $1\mathrm{e}-0$ & 0.835 $\pm 0.025$ & 20 & 0.025 & 21.8 \\ 
    $\Lambda(1\mathrm{e}0, 5\mathrm{e}-2, 0.90)$ & 0.839 $\pm 0.044$ & 30 & 0.023 & 31.5 \\
    $\Lambda(1\mathrm{e}0, 5\mathrm{e}-2, 0.95)$ & 0.838 $\pm 0.029$ & 30 & 0.020 & 34.1 \\
    \hline
    \end{tabular}
    \caption{
    For the mixture data in 2D as in \eqref{eq:data_distribution_gm} of Section~\ref{sec:gm_sim_setup}, we consider the approximate epoch at which the {monitor $\widehat{\rm MSE}^{(m)}_p$} \eqref{eq:validation_mse} is minimized. At this epoch of training, we display the average GoF testing power ($n_{\rm GoF}=75$) and $\widehat{\rm MSE}_q$ \eqref{eq:mse} over $n_{\rm replica}=10$ models for each regularization strategy.}
    \label{tab:2D_experiment}
\end{table}

\begin{table}[h!]
    \centering
    \begin{tabular}{r|c c|c c}
    \hline
    \rule{0pt}{10pt}Regularization ($\lambda$) & Mean Power & Epoch & Mean Selected $\widehat{\rm MSE}_q$ & Mean Stopping Epoch  \\
    \hline
    \rule{0pt}{10pt}$2.500\mathrm{e}-4$ & 0.839 $\pm 0.034$ & 50 & 0.120 & 52.4 \\ 
    $1.000\mathrm{e}-3$ & 0.894 $\pm 0.027$ & 50 & 0.105 & 50.1 \\ 
    $4.000\mathrm{e}-3$ & 0.910 $\pm 0.014$ & 45 & 0.114 & 46.0 \\ 
    $1.600\mathrm{e}-2$ & 0.893 $\pm 0.025$ & 30 & 0.127 & 34.1 \\ 
    $6.400\mathrm{e}-2$ & 0.905 $\pm 0.013$ & 20 & 0.128 & 21.5 \\ 
    $2.560\mathrm{e}-1$ & 0.878 $\pm 0.019$ & 12 & 0.132 & 12.8 \\ 
    $1.024\mathrm{e}-0$ & 0.824 $\pm 0.037$ & 12 & 0.172 & 12.4 \\ 
    $\Lambda(5\mathrm{e}-1, 1\mathrm{e}-3, 0.80)$ & 0.928 $\pm 0.007$ & 50 & 0.088 & 49.8 \\
    $\Lambda(5\mathrm{e}-1, 1\mathrm{e}-3, 0.85)$ & 0.920 $\pm 0.021$ & 45 & 0.095 & 40.1 \\
    \hline
    \end{tabular}
    \caption{For the mixture data in 10D as in \eqref{eq:data_distribution_gm}, we consider the epoch that minimizes {monitor $\widehat{\rm MSE}^{(m)}_p$}, at which point we display the average testing power (${n_{\rm GoF}}=200$) and $\widehat{\rm MSE}_q$ over $n_{\rm replica}=10$ models for each regularization strategy.}
    \label{tab:10D_experiment}
\end{table}

\begin{table}[h!]
    \centering
    \begin{tabular}{r|c c|c c}
    \hline
    \rule{0pt}{10pt}Regularization ($\lambda$) & Mean Power & Epoch & Mean Selected $\widehat{\rm MSE}_q$ & Mean Stopping Epoch  \\
    \hline
    \rule{0pt}{10pt}$2.500\mathrm{e}-4$ & 0.862 $\pm0.103$ & 45 & 0.260 & 37.9 \\ 
    $1.000\mathrm{e}-3$ & 0.962 $\pm0.021$ & 45 & 0.262 & 42.1 \\ 
    $4.000\mathrm{e}-3$ & 0.925 $\pm0.040$ & 25 & 0.276 & 23.9 \\ 
    $1.600\mathrm{e}-2$ & 0.444 $\pm0.111$ & 10 & 0.301 & 9.3 \\ 
    $6.400\mathrm{e}-2$ & 0.243 $\pm0.050$ & 4 & 0.310 & 3.8 \\ 
    $\Lambda(4\mathrm{e}-1, 5\mathrm{e}-4, 0.80)$ & 0.998 $\pm0.002$ & 45 & 0.208 & 46.2 \\
    $\Lambda(4\mathrm{e}-1, 5\mathrm{e}-4, 0.85)$ & 1.000 $\pm0.001$ & 50 & 0.190 & 49.1 \\
    $\Lambda(4\mathrm{e}-1, 5\mathrm{e}-4, 0.90)$ & 0.995 $\pm0.002$ & 55 & 0.202 & 54.4 \\
    \hline
    \end{tabular}
    \caption{For the mixture data in 25D as in \eqref{eq:data_distribution_gm}, we consider the epoch that minimizes {monitor $\widehat{\rm MSE}^{(m)}_p$}, at which point we display the average testing power (${n_{\rm GoF}}=500$) and $\widehat{\rm MSE}_q$ over $n_{\rm replica}=10$ models for each regularization strategy.}
    \label{tab:25D_experiment}
\end{table}

\subsection{{1D Gaussian mixture}}\label{sec:1d_experiment}

Consider a setting in which $q$ is a 1-dimensional Gaussian mixture of two equally-weighted, unit variance components at positions -1 and 1. Let $p$ be a similar mixture, however at positions -0.8 and 1, and the component at 1 has variance 0.25. Therefore, the score of $q$ has the form:
\begin{align*}
    {\bf s}_q(x) =\nabla q(x) / q(x) = &\left(
    \frac{1}{\sqrt{2\pi}} \exp \left( -\frac{(x+1)^2}{2} \right)
    + \frac{1}{\sqrt{2\pi}} \exp \left( -\frac{(x-1)^2}{2} \right)  
    \right) ^ {-1}
    \times \\
    &\left(
    -\frac{x+1}{\sqrt{2\pi}} \exp \left( -\frac{(x+1)^2}{2} \right)
    - \frac{x-1}{\sqrt{2\pi}} \exp \left( -\frac{(x-1)^2}{2} \right)  
    \right),
\end{align*}
where ${\bf s}_p$ has similar form. Therefore, the scaleless optimal critic ${\bf f}^*={\bf s}_q - {\bf s}_p$ \eqref{eq:def-fstar} has analytical form:
\begin{equation}
    \begin{aligned}
        {\bf f}^*(x) = &\left(
        \frac{1}{\sqrt{2\pi}} \exp \left( -\frac{(x+1)^2}{2} \right)
        + \frac{1}{\sqrt{2\pi}} \exp \left( -\frac{(x-1)^2}{2} \right)  
        \right) ^ {-1}
        \times \\
        &\left(
        -\frac{x+1}{\sqrt{2\pi}} \exp \left( -\frac{(x+1)^2}{2} \right)
        - \frac{x-1}{\sqrt{2\pi}} \exp \left( -\frac{(x-1)^2}{2} \right)  
        \right) - \\
        &\left(
        \frac{1}{\sqrt{2\pi}} \exp \left( -\frac{(x+0.8)^2}{2} \right)
        + \frac{1}{(0.5)\sqrt{2\pi}} \exp \left( -\frac{(x-1)^2}{2\times(0.5)^2} \right)  
        \right) ^ {-1}
        \times \\
        &\left(
        -\frac{x+1}{\sqrt{2\pi}} \exp \left( -\frac{(x+0.8)^2}{2} \right)
        - \frac{x-1}{(0.5)^3\sqrt{2\pi}} \exp \left( -\frac{(x-1)^2}{2\times(0.5)^2} \right)  
        \right),
    \end{aligned}
    \label{eq:analyic_f_star}
\end{equation}
which can be computed in a similar analytical manner in higher dimension given distributions of the form \eqref{eq:data_distribution_gm}.

A neural Stein critic (MLP of 2 hidden layers with 512-node width) is trained using staged $\Lambda(1,10^{-3},0.9)$ regularization in this setting given 1,000 training samples from $q$ with batch size 200, learning rate $10^{-3}$, and $B_w=5$.

\begin{figure}[t]
    \centering
    \includegraphics[width=0.5\textwidth]{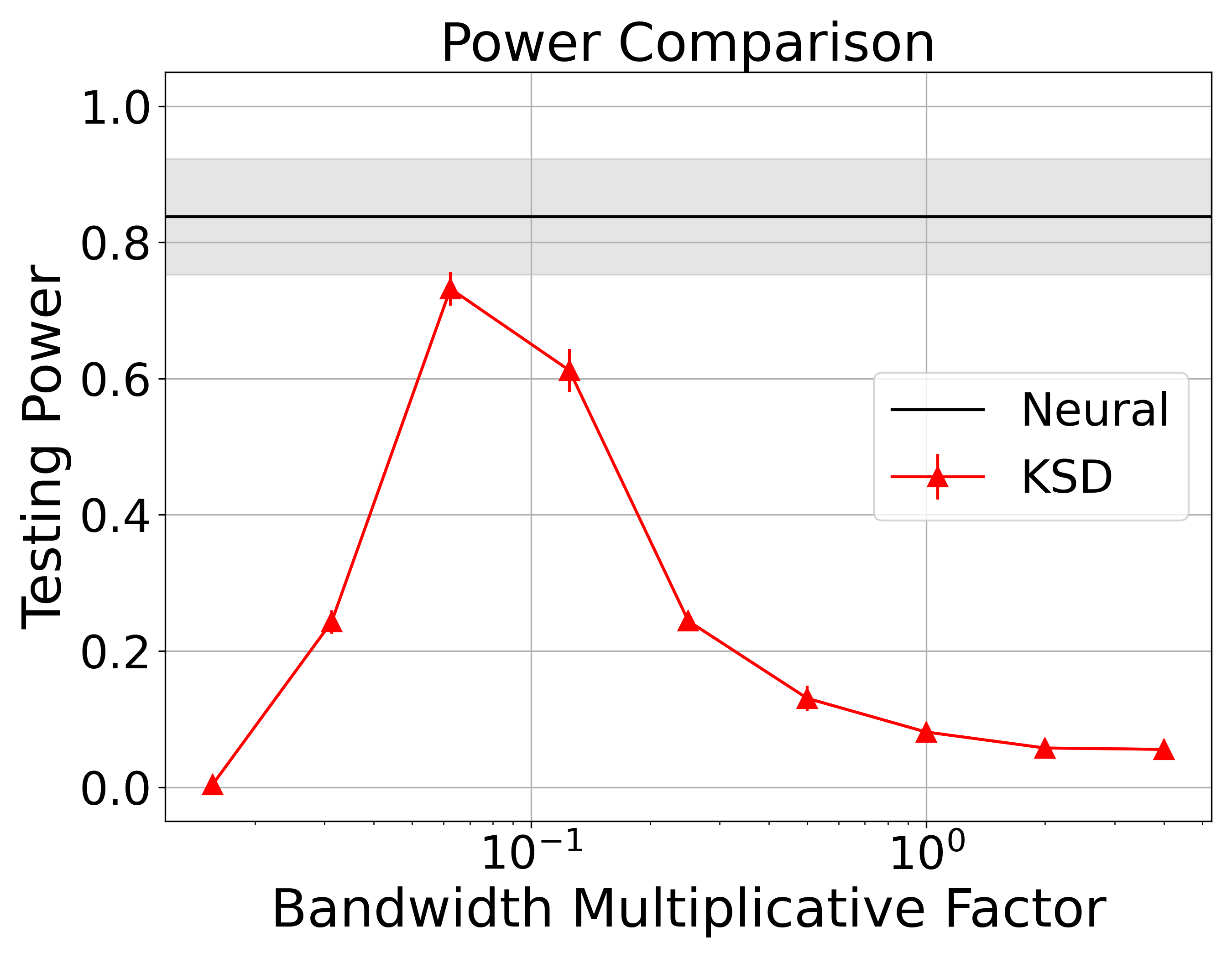}
    \caption{
    Comparison between the testing power of the KSD GoF test in the 50D setting described in Section~\ref{sec:kernel_experiments} for a range of bandwidth choices with $n_{\rm sample}=1500$. The horizontal axis indicates the scaling factor $\delta$ of the kernel bandwidth (see Appendix \ref{sec:ksd_gof_test}) and the vertical axis displays the test power. The red line and error bars display the mean and standard deviation of the testing power for each choice of kernel bandwidth, and the black line and shaded region indicate the mean and standard deviation of the testing power of the neural Stein GoF test in this setting.}\label{fig:ksd_bandwidth_comparison}
\end{figure}

\subsection{Kernel Stein discrepancy goodness-of-fit test}\label{sec:ksd_gof_test}

{As in Section \ref{sec:gof_test}, consider the setting in which we are provided with probability distribution $q$ supported on $\calX\subset\mathbb{R}^d$ from which we can sample, and consider the scenario wherein we are provided with a finite sample of data $x_i\sim p$ of ${n_{\rm GoF}}$ samples. As an alternative to learning the Stein critic function using a neural network for estimation of the test statistic in \eqref{eq:test_stat}, one may define the function space $\mathcal{F}$ to be an RKHS defined by a kernel denoted $k(\cdot, \cdot)$. In this case, the KSD admits computation in closed form by the following relation:}
\begin{equation}
    {\rm KSD} = \mathbb{E}_{x,x'\sim p}[u_q(x,x')],
    \label{eq:ksd}
\end{equation}
where
\begin{align}
    u_q(x,x') =&\  \mathbf{s}_q(x)^\text{T} k(x,x') \mathbf{s}_q(x') + \mathbf{s}_q(x)^\text{T} \nabla_{x'} k(x,x') \nonumber\\&   + \nabla_x k(x,x')^\text{T}\mathbf{s}_q(x') + {\rm tr}(\nabla_{x,x'} k(x, x')).
    \label{eq:ksd_u_func}
\end{align}
{Using this formulation, \cite{chwialkowski2016kernel} defines a quadratic-time $V$-statistic which estimates the KSD as follows:}
\begin{equation}
    \hat{V}_q = \frac{1}{n_{\rm GoF}^2} \sum_{i,j=1}^{{n_{\rm GoF}}} u_q(x_i, x_j).
    \label{eq:ksd_test_stat}
\end{equation}
{Given the test statistic \eqref{eq:ksd_test_stat}, a ``wild bootstrap'' is used to approximate the distribution of the test statistic under the null hypothesis, which is outlined in \cite{chwialkowski2016kernel}. Given the test set of ${n_{\rm GoF}}$ samples $x_i\sim p$, the null hypothesis may be rejected if \eqref{eq:ksd_test_stat} computed using these $x_i$ exceeds the ($1-\alpha$) quantile of the wild bootstrapped distribution under the null hypothesis (where, again, $\alpha$ is selected to tune the Type-I error). The power is then estimated using $n_{\rm run}$ number of such GoF tests.}

\begin{figure}[h]
    \centering
    \includegraphics[width=\textwidth]{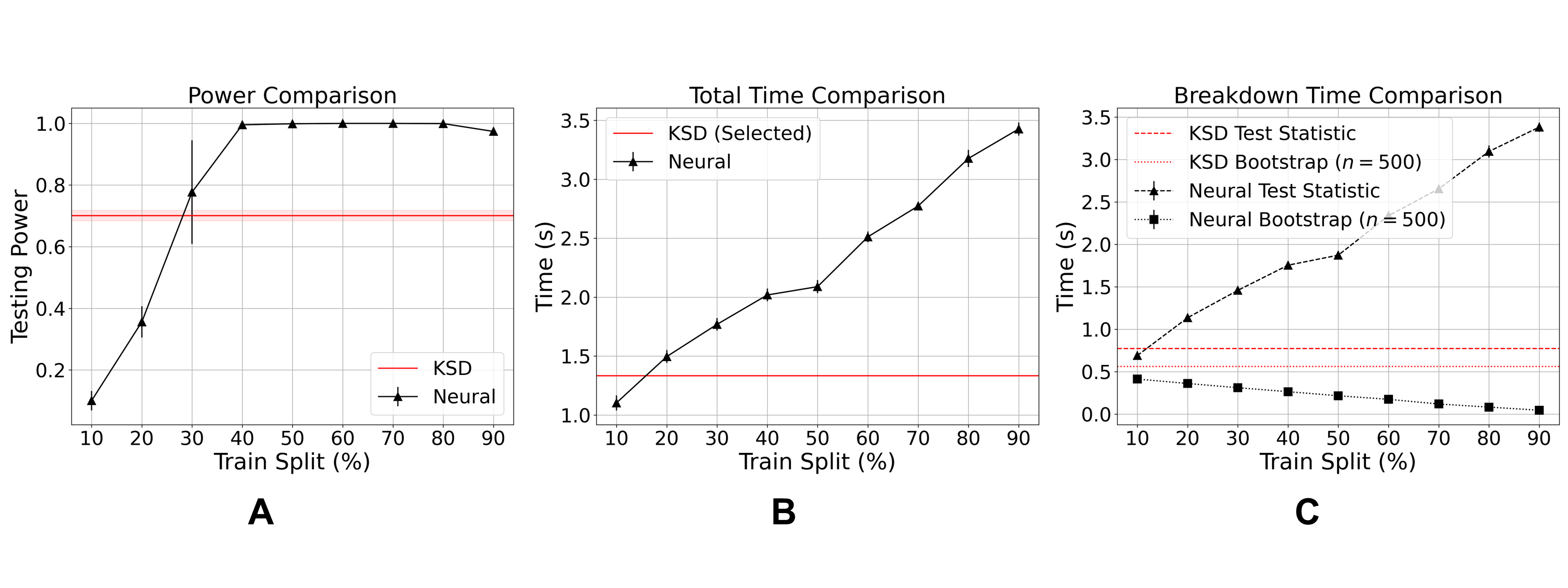}
        \vspace{-30pt}
    \caption{ Comparison between neural Stein critic GoF test in the 50D setting described in Appendix~\ref{app:neural_split_power} for various proportions of training/testing split, where the testing procedure and power/time computation is otherwise the same as that in Section~\ref{sec:kernel_experiments}. In all cases, the total number of points is $n_{\rm sample}=500$. The power and duration of the KSD test using the selected bandwidth according to Appendix~\ref{sec:ksd_gof_test} for $n_{\rm sample}=500$ is shown in red in each figure. The x-axis of each plot refers to the percentage of these points which are allocated to training for the neural network test. The remainder are used for computation of the neural Stein test statistic. The y-axes of (\textbf{A}), (\textbf{B}), and (\textbf{C}) correspond to those of Figure~\ref{fig:kernel_test_comparison}. 
   }\label{fig:neural_split_power}
\end{figure}

{Following the approaches of \cite{chwialkowski2016kernel,liu2016kernelized}, we use an RBF kernel to define the RKHS used for KSD in our experiments. This kernel has the following form:}
\begin{equation}
    k(x,x') = \exp\left(-\frac{\|x-x'\|_2^2}{2\sigma^2}\right) = \exp\left(-\gamma\|x-x'\|_2^2\right),
\end{equation}
{where $\sigma$ is the kernel bandwidth and $\gamma=1/(2\sigma^2)$. A standard heuristic we employ is the choice of the median Euclidean distance between the data considered as the bandwidth $\sigma$ of this kernel. However, we also conduct an experiment to select a bandwidth which has the capacity to achieve higher testing power when used in the KSD GoF test. We consider bandwidths which are scaled versions of the median data distances heuristic bandwidth. That is, we compute the testing power for bandwidth factors $\delta\in\{2^{-6}, 2^{-5}, 2^{-4}, 2^{-3}, 2^{-2}, 2^{-1}, 2^{0}, 2^{1}, 2^{2}\}$, where the scaling follows $\gamma'=1/(2\delta\sigma^2)$ for $\sigma$ equal to the median Euclidean distance between data samples. 
To do so, we consider the same model and data distributions as in Section~\ref{sec:kernel_experiments}. We then fix $n_{\rm sample}=1500$, and compute the KSD test power from $n_{\rm run}=400$ GoF tests using  $n_{\rm boot}=500$ bootstrapped samples in each test. This is conducted using $n_{\rm replica}=5$ replicas to generate a mean and standard deviation of power computed for each choice of bandwidth. Finally, we compare this to the GoF testing power computed from $n_{\rm replica}=5$ neural Stein critics trained using $n_{\rm sample}=1500$ samples split into a $50\%/50\%$ train/test split (where $20\%$ of the train split is used for validation). The result can be seen in Figure~\ref{fig:ksd_bandwidth_comparison}. This shows that the factor of $\delta=2^{-4}$ maximizes power in this case. Therefore, this factor is chosen to create the ``best'' selected bandwidth for the experiments in Section~\ref{sec:kernel_experiments}. This figure also displays that the KSD GoF test power is still less than that of the neural Stein GoF test, despite better choices of bandwidth.}

\begin{figure}
    \centering
    \includegraphics[width=0.5\textwidth]{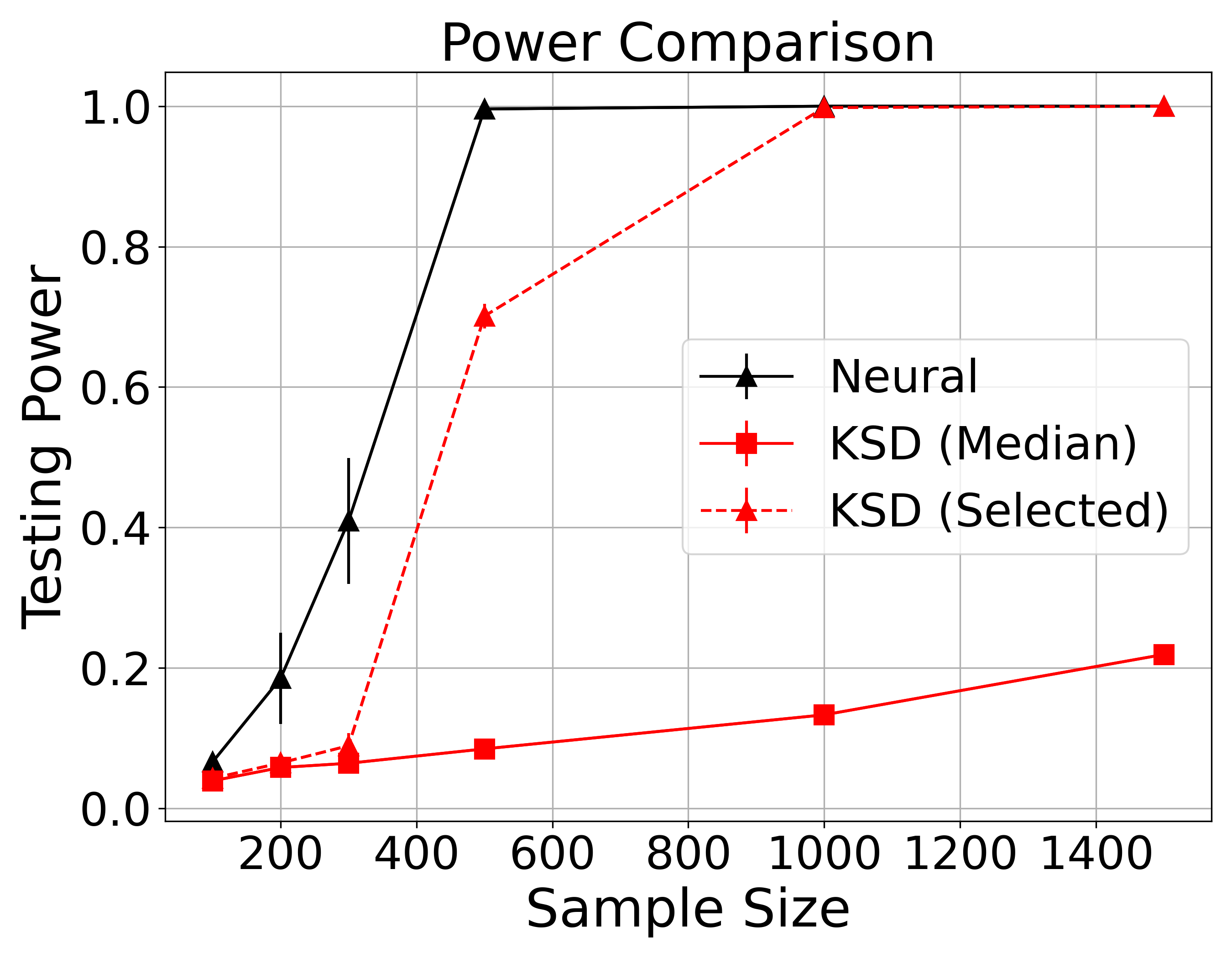}
    \caption{
    Comparison between neural Stein critic GoF test and KSD test in the 50D setting described in Appendix~\ref{app:neural_split_power}. Other than the change in distribution perturbation parameters, the experiment is identical to the experiments described in Section~\ref{sec:kernel_experiments}. The sample size $n_{\rm sample}$ provided to each method is indicated on the x axis. The neural method uses half of these data for training and the other half for testing, and the KSD uses all $n_{\rm sample}$ samples for test statistic computation. The mean and standard deviation of the testing power are plotted in each case, which is computed over 5 replicas.}\label{fig:ksd_comparison_easy}
\end{figure}

\subsection{Comparison over different training-test splits}\label{app:neural_split_power}

In Section~\ref{sec:kernel_experiments}, we outline a procedure for comparing the neural Stein test to the KSD GoF test by considering varying overall sample size $n_{\rm sample}$, training the neural network using half of these samples and conducting tests using the other half. This section analyzes the power and running time of the neural Stein GoF hypothesis testing procedure as outlined in Section~\ref{sec:gof_test} for a variety of training/testing splits. Using the notation of Section~\ref{sec:gm_sim_data} for the model and data distributions, let the model distribution $q$ be an isotropic, two-component Gaussian mixture in $\mathbb{R}^{50}$ with means $\mu_1=\mathbf{0}_{d}$ and $\mu_2=0.5\times\mathbf{1}_{d}$, and let the data distribution $p$ have the form of \eqref{eq:data_distribution_gm} with covariance shift $\rho_1=0.99$ and scale $\omega=0.05$. In this section, we fix $n_{\rm sample}=500$ and consider other choices of training/testing split, comparing to the KSD with ``best'' selected bandwidth (see Appendix~\ref{sec:ksd_gof_test}) for $n_{\rm sample}=500$. That is, we partition the 500 points into splits whereby the number of training samples takes on a proportion in the range of $\left\{10\%,20\%,30\%,40\%,50\%,60\%,70\%,80\%,90\%\right\}$ of $n_{\rm sample}$. The testing procedure is otherwise identical to that of Section~\ref{sec:kernel_experiments}. The result can be seen in Figure~\ref{fig:neural_split_power}. Figure~\ref{fig:neural_split_power}(\textbf{A}) shows that there exists a range of splits for which the power is nearly one and that since the computation time is dominated by the training, as seen in Figures~\ref{fig:neural_split_power}(\textbf{B}) and (\textbf{C}), a 50\%/50\% split is a good generic choice of the split. Therefore, this split is used in the experiments outlined in Section~\ref{sec:kernel_experiments} and displayed in Figure~\ref{fig:kernel_test_comparison}.

We also consider the comparison of testing power in this case for the neural Stein critic GoF test and the KSD test with the median data distances heuristic RBF kernel bandwidth and the KSD with bandwidth selected to maximize power according to the result outlined in Appendix~\ref{sec:ksd_gof_test}. The methodology for comparing the power between these tests is identical to that of Section~\ref{sec:kernel_experiments}. That is, for each sample size, the train/test split is $50\%/50\%$, with $20\%$ of the training partition dedicated to validation. The power is then computed in each case using $n_{\rm run}=400$ GoF tests, $n_{\rm boot}=500$ bootstrapped samples in each individual test, {efficient bootstrap ratio $r_{\rm pool}=50$, and $n_{\rm replica}=5$}. The result can be found in Figure~\ref{fig:ksd_comparison_easy}, which shows that the power of the neural Stein critic GoF test dominates that of both KSD test methods initially. However, the neural Stein test achieves near unit power by the time $n_{\rm sample}=500$, and the KSD with selected bandwidth achieves unit power by the time $n_{\rm sample}=1,000$. This increase in power compared to Figure~\ref{fig:kernel_test_comparison}(\textbf{A}) is due to the fact that the perturbation between the model and data distribution in this scenario is larger in this setting than in the setting outlined in Section~\ref{sec:kernel_experiments}.

\subsection{{Adaptive Staging}}\label{app:adaptive_staging}

\begin{figure}[t]
    \centering
    \includegraphics[width=\textwidth]{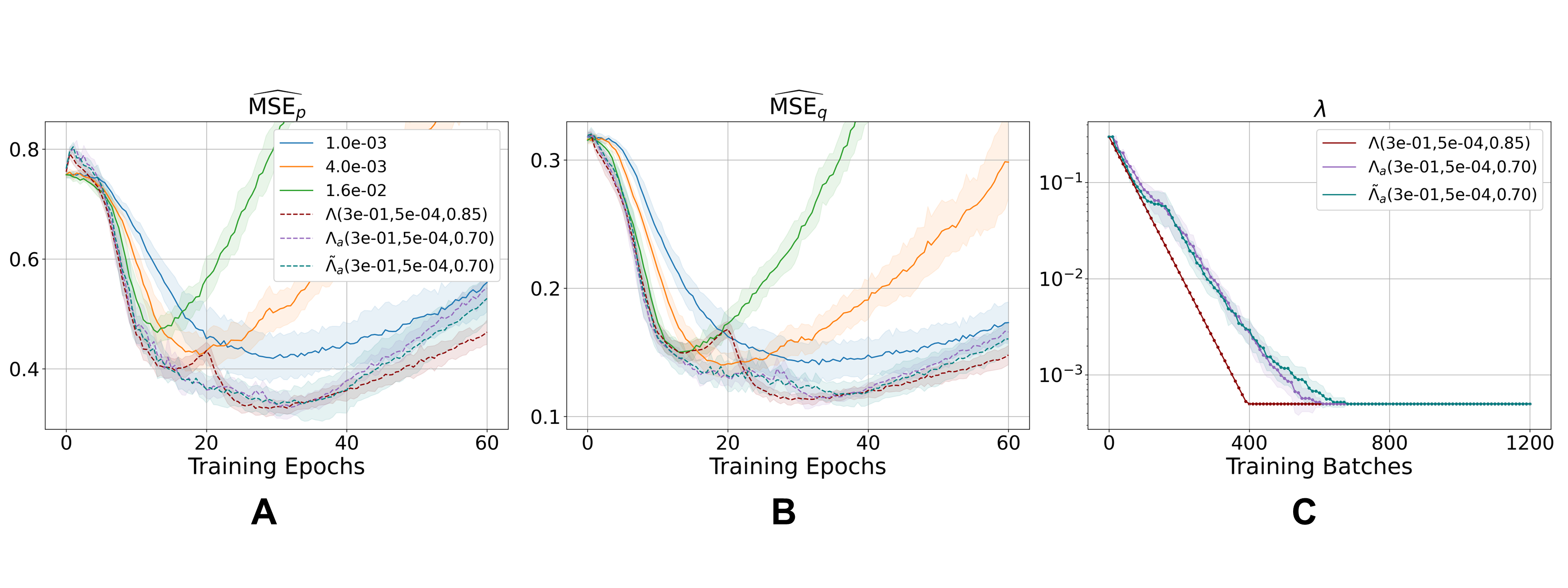}
    \vspace{-30pt}
    \caption{
    The $\widehat{\rm MSE}_p$ (\textbf{A}) and $\widehat{\rm MSE}_q$ (\textbf{B}) of adaptive staged regularization strategies $\Lambda_a$ and $\tilde{\Lambda}_a$ compared to that of nonadaptive staged regularization $\Lambda$ and fixed-$\lambda$ strategies. The trajectories of $\lambda$ over the course of training in the staged regularization strategies is depicted in (\textbf{C}). See Appendix~\ref{app:adaptive_staging} for a description of the experiment setup, which is largely the same as the experiments described in Section~\ref{sec:gm_sim_setup}. The mean MSE over 10 models is represented as a solid line for each regularization strategy, while the standard deviation is represented as the shaded region. Similarly, for the adaptive staging strategies, the mean value of $\lambda$ for each stage is represented by the solid line, while the standard deviation is reflected by the shaded region.
    }\label{fig:adaptive_staging}
\end{figure}

In Section~\ref{subsec:staged}, we specified a method for staging the weight of regularization $\lambda$ via an exponential decay over the course of optimization. 
Here we develop an adaptive procedure which stages $\lambda$ according to the monitor $\widehat{\rm MSE}^{(m)}_p$ \eqref{eq:validation_mse}, and compare with the non-adaptive staging in \eqref{eq:lam_staging}.

The adaptive staging method decides to stage the weight of regularization down to a smaller value when the monitored $\widehat{\rm MSE}^{(m)}_p$ is observed to increase. 
Specifically, we examine two adaptive staging approaches

\begin{itemize}
    \item The strategy, denoted as $\Lambda_a(\lambda_{\rm init},\lambda_{\rm term},\beta)$, begins with $\lambda=\lambda_{\rm init}$. 
Over the course of training, $\widehat{\rm MSE}^{(m)}_p$ is computed after every $B_w=10$ mini-batches of updates given $n_{\rm val}$ samples from $p$ (in these examples, the validation samples are  re-drawn from $p$ for each computation). 
If $\widehat{\rm MSE}^{(m)}_p$ is at any observation greater than the previous observation, the weight of regularization $\lambda$ is multiplied by the factor $\beta<1$. 
This process is repeated until $\lambda=\lambda_{\rm term}$, at which point $\lambda$ is fixed for the remainder of training. 
We require that $\widehat{\rm MSE}^{(m)}_p$ decrease at least once with each stage of $\lambda$, so that $\lambda$ may not stage down until first an improvement followed by a decrease in performance in MSE is observed. 

\item 
The staging, denoted by $\tilde{\Lambda}_a$, uses $\widehat{\rm MSE}_p$ instead of $\widehat{\rm MSE}^{(m)}_p$ as the monitor function to determine the staging. For simulated high dimensional Gaussian mixture data, the computation of $\widehat{\rm MSE}_p$ uses the prior information of the scaleless optimal critic ${\bf f}^*$.
\end{itemize}

To assess the performance of the adaptive staging strategy, we conduct an experiment using the Gaussian mixture data of Section~\ref{sec:gm_sim_data} in 25D with $\omega=0.8$ and $\rho_1=0.5$. The neural Stein critic architecture and initialization are as in Section~\ref{sec:gm_sim_setup}. We again use the Adam optimizer with default parameters, with learning rate $10^{-3}$ and batch size 200 samples. In this setting, the networks are trained using 4,000 samples from $p$ for 60 epochs each. For each staging strategy, 10 critics are trained. The $\widehat{\rm MSE}^{(m)}_p$ and $\widehat{\rm MSE}_p$ are computed each using $n_{\rm val}=20,000$ samples from $p$ and the $\widehat{\rm MSE}_q$ is computed using $n_{\rm te}=20,000$ samples from $q$. We compare the staging to fixed regularization strategies with $\lambda\in\{1\times 10^{-3}, 4\times 10^{-3}, 1.6\times 10^{-2}\}$. The staging strategies used are $\Lambda(3\times10^{-1}, 5\times10^{-4}, 0.85)$, $\Lambda_a(3\times10^{-1}, 5\times10^{-4}, 0.70)$, and $\tilde{\Lambda}_a(3\times10^{-1}, 5\times10^{-4}, 0.70)$, where an update is made according to the nonadaptive staging strategy every $B_w=10$ batches.

The $\widehat{\rm MSE}_p$ and $\widehat{\rm MSE}_q$ in the adaptive staging comparison experiments are shown in Figure~\ref{fig:adaptive_staging}(\textbf{A}) and (\textbf{B}), respectively. As in Section~\ref{sec:gm_sim_result}, all staging strategies outperform the fixed-$\lambda$ strategies in 25D. 
Furthermore, the performances of the adaptive strategies $\Lambda_a$ and $\tilde{\Lambda}_a$ and the non-adaptive strategy $\Lambda$ are similar, with overall minimal MSE comparable across all three staging strategies. 
In addition, a comparison of the trajectory of $\lambda$ over the course of training via the three staged regularization strategies is displayed in Figure~\ref{fig:adaptive_staging}(\textbf{C}), indicating that the adaptive staging strategies exhibit a roughly exponential annealing of $\lambda$ over the course of training. 
This result justifies the rationale of the heuristic, nonadaptive staging approach proposed in Section~\ref{subsec:staged}, which is computationally less expensive than the adaptive staging which relies on the repetitive computation of the monitor validation MSE.

\end{document}